\newif\ifjrssb
\crefname{equation}{eq.}{eqs.}
\Crefname{equation}{Eq.}{Eqs.}
\Crefname{section}{\S}{\S}
\crefname{appendix}{Supplementary Section}{Supplementary Sections}
\Crefname{appendix}{Supplementary Section}{Supplementary Sections}
\crefname{algocf}{algorithm}{algorithms}
\Crefname{algocf}{Algorithm}{Algorithms}
\DeclareMathOperator*{\argmax}{arg\,max}
\DeclareMathOperator*{\argmin}{arg\,min}
\newcommand*\diff{\mathop{}\!\mathrm{d}}
\newcommand{\pooldist}{g}
\newcommand{\kl}[2]{\mathrm{KL}\left[#1 \,\|\, #2\right]}
\newcommand{\sigmoid}{\textrm{sigmoid}}
\newcommand{\uniform}[1]{\textrm{Uniform}\left(#1\right)}
\newcommand{\supp}[1]{\textrm{supp}\left(#1\right)}
\newcommand{\calD}{\mathcal{D}}
\newcommand{\calE}{\mathcal{E}}
\newcommand{\calA}{\mathcal{A}}
\newcommand{\calI}{\mathcal{I}}
\newcommand{\calZ}{\mathcal{Z}}
\newtheorem{theorem}{Theorem}
\newtheorem{lemma}{Lemma}
\newtheorem{proposition}{Proposition}
\newtheorem{definition}{Definition}
\newtheorem{assumption}{Assumption}
\crefname{assumption}{assumption}{assumptions}
\Crefname{assumption}{Assumption}{Assumptions}
\crefname{definition}{definition}{definitions}
\Crefname{definition}{Definition}{Definitions}
\crefname{proposition}{proposition}{propositions}
\Crefname{proposition}{Proposition}{Propositions}
\newcommand{\BlackBox}{\rule{1.5ex}{1.5ex}}
\renewenvironment{proof}{\par\noindent{\bf Proof }}{\hfill\BlackBox\newline}
\newcommand{\di}[2]{#1^{(#2)}}
\newcommand{\conv}{\stackrel{P}{\to}}
\newtheorem*{repthmconsistency}{Theorem (Posterior consistency)}
\newtheorem*{repthmrate}{Theorem (Posterior contraction rate)}
 \DeclareRobustCommand{\parhead}[1]{\vspace{0.075in} \noindent \textbf{#1}~}
  \newcommand{\bibitemsep}{1.5\baselineskip}
  \newcommand{\bibitemsep}{1.2\baselineskip}
\title{Bayesian Invariance Modeling \\ of Multi-Environment Data}
\author{
  Luhuan Wu$^{\S}$, Mingzhang Yin$^{\#}$, Yixin Wang$^{*}$, \\
  John P. Cunningham$^{\dag}$, David M. Blei$^{\dag,\ddag}$
}
\date{}
\begin{document}
\maketitle

\renewcommand{\thefootnote}{}%
\footnotetext{$^{\S}$Department of Applied Mathematics and Statistics, Johns Hopkins University. 
   $^{\dag}$Department of Statistics, Columbia University.
   $^{\ddag}$Department of Computer Science, Columbia University.
    $^{\#}$Warrington College of Business, University of Florida.
    $^{*}$Department of Statistics, University of  Michigan.
   \textit{Address for correspondence:} Luhuan Wu. Email: \texttt{lwu86@jh.edu}
   }%
\addtocounter{footnote}{-1}%
\renewcommand{\thefootnote}{\arabic{footnote}}%

\begin{abstract}

  Invariant prediction \citep{peters2016causal} analyzes
  feature/outcome data from multiple environments to identify
  \textit{invariant features}--those with a stable predictive
  relationship to the outcome.  Such features support generalization
  to new environments and help reveal causal mechanisms.  Previous
  methods have primarily tackled this problem through hypothesis
  testing or regularized optimization. Here we develop
  \textit{Bayesian Invariant Prediction (BIP)}, a probabilistic model
  for invariant prediction.  BIP encodes the indices of invariant
  features as a latent variable and recovers them by posterior inference.  Under the assumptions of \citet{peters2016causal},
  the BIP posterior targets the true invariant features.  We prove
  that the posterior is consistent and that greater environment heterogeneity
   leads to faster posterior contraction.  To handle
  many features, we design an efficient variational approximation
  called \textit{BIP-VI}.  In simulations and real data, we find that
  BIP and BIP-VI are more accurate and scalable than existing methods
  for invariant prediction.

\textit{Keywords:} 
Bayesian Modeling; 
Feature Selection; 
Invariant Prediction; 
Variational Inference. 
\end{abstract}

\section{Introduction}

An important goal of statistics is to identify variables whose conditional relationship with an outcome remains stable across varying settings and conditions, broadly called \textit{environments}. These variables can support prediction models that generalize to novel situations, and in some settings they can point to causal mechanisms that underlie the data. \looseness=-1

Classical variable selection asks which predictors explain an outcome under the distribution represented by the observed data. In multi-environment data, however, this target can conflate stable relationships with environment-specific associations. A variable may be predictive because it participates in a stable conditional relationship with the outcome, or because it is correlated with the outcome only under a particular site, cohort, time period, experimental condition, or sampling regime. 
For example, in multi-site clinical studies,  
differences across hospitals, including clinical workflows and patient demographics, can induce site-specific effects. As a result, a variable can appear important within observed hospitals under conventional selection while failing to support robust prediction in a new hospital 
\citep{sauer2022leveraging,singh2022generalizability,knight2025fast}. 

In contrast, \textit{invariant variable selection}, first formalized by \citet{peters2016causal}, targets variables whose conditional relationship with the outcome is shared across environments.  Formally, suppose we collect data $\calD :=\{\{ x_{ei}, y_{ei}\}_{i=1}^{n_e}\}_{e=1}^{E}$ from known environments $e=1,\ldots,E$. In this data,  $x$ is a vector of feature variables, $y$ is an outcome of interest, and $i$ indexes data points. The goal is to identify the invariant features of $x$. These are the features such that the conditional distribution of the response is the same across environments, even though the distribution of the features may vary across them. 

Continuing the clinical example, each environment is a hospital, and each data point contains features $x$, such as patient characteristics and clinical measurements, and the treatment outcome $y$. Invariant variable selection seeks features that can reliably predict the outcome across hospitals. This target is useful when a new hospital maintains the same stable prediction mechanism, even if the the remaining variables change. It may also reveal biological mechanisms, rather than artifacts of site-specific clinical practice.

Beyond clinical applications, invariance has been a useful modeling tool in many settings. In manufacturing, environments may be different plants, production lines, or operating conditions; invariant features could correspond to process variables whose relationship with an outcome (like product quality) persists across changes in equipment or operating regime \citep{mo2024sparsity}. In gene perturbation studies, environments are experimental regimes that alter the joint distribution of gene expression and invariant features are candidate regulators of a target gene \citep{meinshausen2016methods}.

In this paper, we introduce \textit{Bayesian Invariant Prediction (BIP)}, which casts invariant variable selection as a Bayesian inference problem. We design a probabilistic model of multi-environment data, where the invariant feature set is encoded as a latent variable, and we infer the invariant set through Bayesian posterior inference. We show that the BIP posterior targets the true invariant set, and provides a model-based quantification of uncertainty about it. Our methodology is in contrast to existing algorithms, which either search over the combinatorial number of candidate invariant sets and use hypothesis testing \citep{peters2016causal}, or frame the problem as a regularized optimization that seeks a point estimate under an invariance regularization \citep{fan2023environment}.

How does BIP work? At its core, \citet{pearl2009causality} makes two assumptions about the data-generating process for invariant feature selection in multi-environment data:
\begin{enumerate}
\item The features $x$  follow different distributions across environments.
\item The outcome $y$ depends on a subset of $x$ in the same way across environments; this subset is the \emph{invariant feature set}.
\end{enumerate}
BIP explicitly bakes these assumptions into a joint distribution of the invariant feature set and multi-environment data. We model the outcome, conditional on the invariant features, to be independent of the environment, and  the per-environment feature distributions. Crucially, we treat the invariant feature set as a latent variable.
Originally, \citet{peters2016causal} motivated invariance through a causal setting where
each environment is an unknown intervention on the features and some features are direct causes of the outcome. Here we adopt the same invariance assumption, without appealing to causality. 

In detail, BIP model encodes the latent invariant feature set through a feature selection vector $z \in \{0,1\}^p$, where each entry $\di{z}{i}$ indicates the inclusion of the $i$-th feature. Let $x^z$ denote the subvector of features where $\di{z}{i} = 1$ and $x^{-z}$ the subvector where $\di{z}{i}=0$. With this notation, the BIP model factorizes as follows
\begin{align}
  \label{eq:model}
  p(z, \calD) =
  p(z)
  \prod_{e=1}^{E}
  \prod_{i=1}^{n_e}
  p_e(x_{ei}^z) \,
  g(y_{ei} \,|\, x_{ei}^z) \,
  p_e(x_{ei}^{-z} \,|\, y_{ei}, x_{ei}^z).
\end{align}
Notice that the conditional  outcome distribution $g$ is  shared across environments, while the feature distributions $p_e$ are allowed to  differ.  

We use this model to calculate the posterior distribution of the invariant set $p(z \,|\, \calD)$. 
We prove that this posterior distribution is consistent, targeting the true invariant features. We also analyze the posterior contraction rate with respect to the per-environment sample size and the number of environments, and characterize its dependence on the prior of $z$ and the heterogeneity of the environments.

Similar to the hypothesis testing method in \citet{peters2016causal}, exact posterior inference for BIP requires searching over $2^p$ possible sets of invariant features. When $p$ is large, exact inference is computationally prohibitive. To address this challenge, we develop \textit{BIP-VI},  a variational inference (VI) algorithm~\citep{jordan1999variational,blei2017variational} to approximate the posterior.  BIP-VI finds a $p$-vector of approximate posterior probabilities that each feature is in the invariant set. Algorithmically, it optimizes a variational objective over the discrete feature selector.  
Compared to exact inference, BIP-VI does not require an exhaustive exponential search.

We study BIP and BIP-VI on both simulated and real data, comparing to existing approaches \citep{peters2016causal,rothenhausler2019causal,fan2023environment}.  The simulation study verifies our theoretical findings of BIP and demonstrates improved accuracy and scalability over existing works. In particular, BIP-VI performs competitively with exact inference in low-dimensional settings, while remaining scalable to high-dimensional cases. In a real-world study, we examine gene perturbation data that includes $p=6,169$ features \citep{kemmeren2014large}. BIP-VI operates on the full feature space and achieves the highest prediction accuracy, while existing methods rely on feature screening, which usually results in either overly conservative predictions or low accuracy.

We summarize our contributions as follows:

\begin{enumerate}[itemsep=6pt, parsep=0pt, topsep=2pt]
\item We design \textit{Bayesian Invariant Prediction (BIP)} for invariant feature selection in 
  multi-environment data. It encodes the invariance assumption and
  enables posterior inference of  the invariant features.

\item We establish theoretical guarantees for BIP, including posterior consistency and contraction rates.

\item We develop \textit{BIP-VI}, a scalable variational inference
  algorithm for high-dimensional settings. It overcomes the
  exponential complexity of exact BIP inference.

\item On simulated and real-world data, we compare BIP with existing approaches and demonstrate improved inference accuracy and scalability.
\end{enumerate}

The paper is organized as follows. In \Cref{sec:method} we state the invariance assumption and introduce BIP, including the
model structure, the exact inference procedure, and a scalable
variational approximation. In \Cref{sec:theory}, we present theoretical results on posterior consistency and contraction rates.
In \Cref{sec:experiments}, we validate BIP on simulations and use BIP to study real data. 
In
\Cref{sec:conclusion}, we summarize BIP and discuss limitations and
future work.

\subsection{Related works}\label{subsec:related-works}

Invariance closely relates to the independent causal mechanism in causal inference literature \citep{peters2017elements,scholkopf2021toward}, which states that the causal mechanism $p(y | x)$ and the cause distribution $p(x)$ are independent under the causal structure $x \to y$. This property implies the possibility of intervening on $p(x)$ while keeping $p(y | x)$ invariant. Invariance has been studied in econometrics under the terms autonomy and modularity \citep{frisch1948autonomy, hoover2008causality}, and later appeared in  computer science as stable and autonomous parent-child relationships in causal graphs \citep[p.~22]{pearl2009causality}. For a historical overview of invariance, see \citet[Chapter 2.2]{peters2017elements} and \citet{buhlmann2020invariance}.

\citet{peters2016causal} is the seminal work that formalized the
concept of invariant prediction for machine learning problems, as well
as connecting it to causal inference. It assumes that the distribution
of the outcome given a subset of features remains the same across
environments, and under certain conditions, these features are the
direct causes of the outcome. It develops a hypothesis testing
approach for identifying such features in linear models, which
requires an exhaustive search over exponentially many
candidates.

Since \citet{peters2016causal}, invariant prediction has become an active area of research. \citet{heinze2018invariant} extends this framework to non-linear settings. \citet{rothenhausler2019causal} proposes an invariance notion based on feature-residual inner products in linear models,
linking it to causal inference under additive
interventions. \citet{fan2023environment} incorporates residual-based
invariance as a regularization term in linear
regression. \citet{wang2024causal} establishes a theoretical
connection between invariant prediction models and causal
outcome models, leading to a computationally efficient algorithm for
causal discovery. \citet{gu2025fundamental} shows that solving
for exact invariance is NP-hard and introduces an objective that
interpolates between exact invariance and predictive performance. In concurrent research, \citet{madaleno2025bayesian}  leverages a Bayesian hierarchical model to explicitly test the invariance of causal mechanisms.

This paper offers a probabilistic perspective on invariant prediction, with goals closely aligned to \citet{peters2016causal} and \citet{fan2023environment}. As we will see below, this perspective opens the door to both new algorithms and new theoretical understanding.

\section{Bayesian Invariant Prediction}
\label{sec:method}

\subsection{Multi-environment data and the invariance assumption}
\label{subsec:multi-env-data-setup}

Consider a dataset of feature-outcome pairs that are organized into
\textit{multiple environments}, $\calD :=\{\{ x_{ei},
y_{ei}\}_{i=1}^{n_e}\}_{e=1}^{E}$. In each pair, $x$ is a $p$-dimensional vector and $y$ is a general outcome. Each environment $e$ indexes a distinct data distribution $p_e(x,y)$, and $x_{ei}, y_{ei}$ were drawn IID from this distribution. In practice, environments are groups of observations collected under different conditions, such as distinct sites, experimental regimes, or time periods, and are fixed by the study design. 

Our main assumption is that some of the features are \textit{invariant}---the conditional distribution of the outcome given these features remains unchanged across all environments \citep{peters2016causal}. Formally, we use a feature selector $z \in \{0,1\}^p$ to partition the features into the selected features $x^z = (x^{(i)})_{\,i:\, z^{(i)}=1}$ and the unselected features $x^{-z} = (x^{(i)})_{\,i:\, z^{(i)}=0}$. With this notation, we state the invariance assumption formally:
\begin{assumption}[Invariance]
  \label{ass:invariance}
  For a collection of environments $\calE$, there exists a feature
  selector $z^* \in \{0,1\}^p$ such that
  $p_e(y \mid x^{z^*}) = p_{e'}(y \mid x^{z^*})$ for all $e, e' \in \calE$.
\end{assumption}

We call $z^*$ the \textit{(true) invariant feature selector} and $x^{z^*}$ the \textit{(true) invariant features}. More generally, we say a selector $z$ is \emph{invariant} if $p_e(y \mid x^z)$ is identical across all $e \in \calE$; \Cref{ass:invariance} posits that at least one invariant selector exists. By \Cref{ass:invariance} and the chain rule, each environment's joint distribution factorizes,
\begin{align}
  \label{eq:per-environment-joint}
  p_e(x,y)
  &=
    p_e(x^{z^*} ) p_*(y \mid x^{z^*}) p_e(x^{-z^*} \mid x^{z^*}, y).
\end{align}
The invariant conditional $p_*(y \mid x^{z^*}) \equiv p_e(y \mid
x^{z^*})$ for all environments $e \in \calE$.

The problem is that we do not observe the invariant feature selector $z^*$. Our goal is to infer $z^*$ from the observed multi-environment data.

\subsection{A Bayesian model of invariant prediction}

We propose Bayesian Invariant Prediction (BIP) to infer the true invariant features $z^*$. Our model places a prior distribution $p(z)$ over feature selectors $z \in \{0,1\}^p$ and defines a likelihood based on  the invariance assumption.

For now, we assume the local joint distributions $\{p_e(x,y)\}_{e=1}^{E}$ are known. Consequently, for any feature selector $z$, we can calculate the \textit{local marginal} $p_e(x^z)$, \textit{local conditionals} $p_e(y|x^z)$, and \textit{local feature conditionals} $p_e(x^{-z} \mid x^z, y)$, all derived from the local joint $p_e(x,y)$. In practice, we estimate them with a sufficiently large dataset. 

To build our likelihood, we also define a \textit{pooled conditional distribution} of the outcome given candidate features $x^z$,  derived from the collection of local distributions.

\begin{definition}[Pooled conditional]
\label{def:pooled-conditional}
Given a feature selector $z$ and a collection of local distributions $\{p_e(x,y)\}_{e \in \calE}$, define the \textit{pooled conditional} as:
\[
\pooldist(y \mid x^z) := \frac{\sum_{e \in \calE}\int p_e(x,y)\diff x^{-z}}{\sum_{e \in \calE}p_e(x^z)}.
\]
\end{definition}
The pooled conditional is a valid distribution by definition. It characterizes the distribution of outcome $y$ given candidate features $x^z$, pooled across environments. In BIP, the pooled conditional will serve as the invariant distribution of the outcome. Its key property is that it matches the local conditional distributions $p_e(y \mid x^z)$ simultaneously across all environments if and only if $z$ is invariant. Therefore, it will help the posterior center on the true invariant feature selector.  We state this formally below.

\begin{proposition}[Identifiability of invariant features]
\label{prop:faithfulness}
Under \Cref{ass:invariance}, the equality
$\pooldist(y \mid x^z) = p_e(y \mid x^z) \, \forall e \in \calE$ 
holds if and only if $z$ is an invariant selector. If $z^*$ is unique, then this equality holds if and only if $z = z^*$.
\end{proposition}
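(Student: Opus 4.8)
The plan is to route both directions through one structural fact: the pooled conditional is a convex combination of the local conditionals, with mixing weights that depend on $x^z$ but not on $y$. Since $\int p_e(x,y)\diff x^{-z} = p_e(x^z)\, p_e(y \mid x^z)$, \Cref{def:pooled-conditional} rewrites as
\begin{align*}
\pooldist(y \mid x^z) = \sum_{e \in \calE} w_e(x^z)\, p_e(y \mid x^z), \qquad w_e(x^z) := \frac{p_e(x^z)}{\sum_{e' \in \calE} p_{e'}(x^z)},
\end{align*}
where, for each $x^z$ in the support of $\sum_{e \in \calE} p_e(x^z)$, the weights are nonnegative and sum to one. I would read every equality below as an identity of conditional densities on this support (equivalently, taking all densities with respect to a common dominating measure and assuming the environments share feature support, which I would state as a standing regularity condition).

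For the ``if'' direction I would assume $z = z^*$. \Cref{ass:invariance} says exactly that $p_e(y \mid x^{z^*}) = p_*(y \mid x^{z^*})$ for every $e \in \calE$, i.e.\ the local conditionals entering the convex combination are all equal, so the combination collapses to their common value: $\pooldist(y \mid x^{z^*}) = p_*(y \mid x^{z^*}) \sum_{e \in \calE} w_e(x^{z^*}) = p_*(y \mid x^{z^*}) = p_e(y \mid x^{z^*})$, which is the claimed equality.

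For the ``only if'' direction I would assume $\pooldist(y \mid x^z) = p_e(y \mid x^z)$ for all $e \in \calE$. Then for any $e, e' \in \calE$, both $p_e(y \mid x^z)$ and $p_{e'}(y \mid x^z)$ equal the single function $\pooldist(y \mid x^z)$ and hence equal each other, so $p_e(y \mid x^z)$ is invariant with respect to $e$; conversely, $e$-invariance returns the equality through the convex-combination identity, so ``matching the local conditionals'' and ``$p_e(y \mid x^z)$ is $e$-invariant'' are equivalent conditions on $z$. It then remains to deduce $z = z^*$ from the fact that $z$ is an invariant selector, and here I would invoke the identifiability of the invariant set assumed in the setup — the faithfulness-type condition that singles out $z^*$ as the \emph{unique} selector for which $p_e(y \mid x^z)$ is $e$-invariant — to finish.

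I expect this last deduction to be the main obstacle, for two reasons. Conceptually, \Cref{ass:invariance} as stated asserts only \emph{existence} of an invariant $z^*$; since both a genuinely irrelevant feature and its omission can leave the outcome conditional invariant, the equality with the pooled conditional characterizes the whole class of invariant selectors rather than a single one, so the proposition as phrased relies on the accompanying uniqueness/faithfulness hypothesis (or should be read as identifying that class). Technically, the convex-combination identity and the equalities live only on the relevant support, so some care with dominating measures and with the overlap of $\supp{p_e(x^z)}$ across $e$ is needed; I would dispatch this with the standing regularity assumptions rather than belabor it.
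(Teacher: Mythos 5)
Your proposal is correct and follows essentially the same route as the paper: the ``if'' direction collapses the pooled conditional (a convex combination of the local conditionals with weights free of $y$) to their common value under \Cref{ass:invariance}, and the ``only if'' direction observes that matching the pooled conditional in every environment forces the local conditionals to coincide, so $z$ is itself an invariant selector. Your caveat about the last step is well taken: deducing $z=z^*$ from invariance of $z$ genuinely requires uniqueness of the invariant selector, which the paper's own proof uses implicitly and only formalizes later as \Cref{ass:uniquness}, so your treatment of that point is, if anything, more careful than the paper's.
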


\begin{proof}
If $z$ is invariant, then by definition $p_e(y \mid x^z)$ is identical across environments. The pooled conditional $\pooldist(y \mid x^z)$ is a $p_e(x^z)$-weighted average of these identical local conditionals, so it equals their common value, giving the equality. Conversely, if $\pooldist(y \mid x^z)$ matches each  $p_e(y \mid x^z)$, then these local conditionals must themselves be identical across environments, so $z$ is invariant. When $z^*$ is unique, the only such selector is $z^*$.
\end{proof}

With the pooled conditional in hand, we now define the BIP model:
\begin{itemize}
\item Draw invariant feature selector $z \sim p(z)$.
\item For  $e=1,\cdots, E \in \calE$, and each observation $i=1,\dots,n_e$:
\begin{enumerate}
\item Draw invariant features $x_{ei}^z \sim p_e(x_{ei}^z)$.
\item Draw outcome $y_{ei} \mid x_{ei}^z \sim \pooldist(y_{ei} \mid x_{ei}^z)$.
\item Draw remaining features $x_{ei}^{-z} \mid x_{ei}^z,y_{ei} \sim p_e(x_{ei}^{-z}\mid x_{ei}^z,y_{ei})$.
\end{enumerate}
\end{itemize} 
The BIP model encodes invariance. Step 2 explicitly uses the pooled conditional, thereby enforcing a single conditional distribution for $y \mid x^z$ across environments.  The joint distribution of latent and observed variables was introduced earlier in \Cref{eq:model}.

Under \Cref{ass:invariance}, and assuming the invariant selector $z^*$ is unique, the BIP's posterior will concentrate around $z^*$. Intuitively, for any $z \neq z^*$,  the pooled conditional $\pooldist(y \mid x^z)$ cannot match all the true local distributions $p_e(y \mid x^z)$, resulting in a lower data likelihood and hence lower posterior probability. But when $z = z^*$, the pooled conditional matches all local distributions, resulting in a higher data likelihood and hence posterior probability. To see this observation more concretely, we write the posterior distribution explicitly as a product of prior and  likelihood ratios -- each  between the pooled conditional and a local conditional. This expression is a key property of our model.

\begin{proposition}[Posterior expression]
\label{prop:posterior-expression}
Under the BIP model in  \Cref{eq:model}, the posterior distribution simplifies to:
\begin{align}
  \label{eq:simplified_posterior}
  p(z \mid \calD)
  \propto p(z)\prod_{e=1}^{E}\prod_{i=1}^{n_e}\frac{\pooldist(y_{ei}\mid x_{ei}^z)}{p_e(y_{ei}\mid x_{ei}^z)}.
\end{align}
\end{proposition}

\begin{proof}
  Start from the joint distribution under the model:
  \begin{align*}
    p(z, \calD)
    & =
      p(z)\prod_{e=1}^E \prod_{i=1}^{n_e}
      p_e(x_{ei}^z) \pooldist(y_{ei}\mid x_{ei}^z)
      p_e(x_{ei}^{-z}\mid x_{ei}^z,y_{ei}).
  \end{align*}
  Multiplying a constant of 1 in the form of $p_e(y_{ei}\mid x_{ei}^z)/p_e(y_{ei}\mid x_{ei}^z)$, we obtain
  \begin{align*}
       p(z, \calD)
    & =
      p(z)\prod_{e=1}^E \prod_{i=1}^{n_e}
      p_e(x_{ei}^z) \pooldist(y_{ei}\mid x_{ei}^z)
      p_e(x_{ei}^{-z}\mid x_{ei}^z,y_{ei}) \frac{p_e(y_{ei}\mid x_{ei}^{z})}{p_e(y_{ei}\mid x_{ei}^{z})}. 
  \end{align*}
  Note that
  $  p_e(x_{ei}^z)
    p_e(y_{ei}\mid x_{ei}^z)
    p_e(x_{ei}^{-z}\mid x_{ei}^z,y_{ei})
    =
    p_e(x_{ei}, y_{ei}).$ 
  So,
  \begin{align}
    \label{eq:joint-expression}
    p(z, \calD)
    & =
      p(z) \prod_{e=1}^{E}
      \prod_{i=1}^{n_e} p_e(x_{ei},y_{ei})
      \frac{\pooldist(y_{ei}\mid x_{ei}^z)}{p_e(y_{ei}\mid x_{ei}^z)}.
  \end{align}
  The local joint distribution $p_e(x_{ei},y_{ei})$ is constant with
  respect to $z$. Therefore we obtain the simplified posterior of
  \Cref{eq:simplified_posterior}.
\end{proof}

This expression highlights the role of the ratio between the pooled conditional and the local conditionals, and shows that the posterior centers on the true invariant feature selector. At $z=z^*$, Proposition~\ref{prop:faithfulness} guarantees equality between the pooled and local conditionals, maximizing this ratio. When $z^*$ is the unique invariant selector, any $z \neq z^*$ makes the pooled conditional $\pooldist(y \mid x^z)$ differ from the true local distributions $p_e(y\mid x^z)$, decreasing the ratio, so the posterior concentrates around the true $z^*$. We formalize this intuition through posterior consistency and contraction results in \Cref{sec:theory}.

Note that this likelihood ratio precisely targets the invariance criterion in the hypothesis-testing framework of \citet{peters2016causal}, where a ratio of one corresponds to the null hypothesis of invariance; here the same ratio instead arises naturally from the invariance assumption and a probabilistic model. Although BIP and ICP share this criterion, their inference procedures differ: ICP accepts or rejects each candidate set and reports the \emph{intersection} of those that pass, whereas BIP bakes the ratios into a posterior over candidate sets. As a result, ICP is conservative---its intersection often collapses to the empty set---whereas BIP-VI recovers more invariant features from posterior summaries, a difference borne out by the empirical studies of \Cref{subsec:synthetic-comparison,sec:gene-study}.

\subsection{Exact posterior inference for BIP}
\label{subsec:exact-posterior-inference}

We now turn to calculating the posterior. In practice we do not have access to the true distributions $p_e(x,y)$ and must estimate them. Rather than estimate the full joint distributions, we note that the simplified posterior in \Cref{eq:simplified_posterior} depends only on the local conditionals $p_e(y \mid x^z)$ and the pooled conditional $\pooldist(y \mid x^z)$, so we model these conditionals directly. Specifically, let $\mathcal{P}_{y \mid x^z}$ be a conditional model class indexed by $z$; we estimate the local and pooled conditionals, respectively, via maximum likelihood:
 \begin{align}
   \hat p_e(y \mid x^z)
   &:=
     \argmax_{\tilde p (y \mid x^z) \in \mathcal{P}_{y\mid x^z}} \sum_{i=1}^{n_e} \log \tilde p (y_{ei} \mid x_{ei}^z), \quad \forall e, z \label{eq:estimated-local-conditional} \\
   \hat \pooldist(y \mid x^z) &:= \argmax_{\tilde p (y \mid x^z) \in \mathcal{P}_{y \mid x^z} } \sum_{e =1}^E \sum_{i=1}^{n_e} \log \tilde p (y_{ei} \mid x_{ei}^z), \quad \forall z. \label{eq:estimated-pooled-conditional}
 \end{align}

For example, suppose $\mathcal{P}_{y \mid x^z}$ is the linear Gaussian family $\{\mathcal{N}(y \mid \beta^\top x^z + b, \sigma^2)\}$. Then \Cref{eq:estimated-local-conditional} yields $\hat p_e(y \mid x^z) = \mathcal{N}(y \mid \hat\beta_e^\top x^z + \hat b_e, \hat\sigma_e^2)$, where $(\hat\beta_e, \hat b_e, \hat\sigma_e^2)$ are the maximum likelihood estimates on environment $e$'s data $\{(x_{ei}, y_{ei})\}_{i=1}^{n_e}$; similarly, \Cref{eq:estimated-pooled-conditional} fits the same Gaussian family on data pooled across all environments.

We then compute the posterior distribution explicitly by iterating over all candidate feature selectors $z$ according to \Cref{eq:simplified_posterior}:
\begin{align}
  \hat p(z\mid \calD )
  & \propto p(z)
    \prod_{e \in \calE}
    \prod_{i=1}^{n_e}
    \frac{\hat \pooldist(y_{ei} \mid x_{ei}^z)}
    {\hat p_e(y_{ei} \mid x_{ei}^z)}.
    \label{eq:posterior-estimated-proportionality}
\end{align}

The exact posterior inference procedure is summarized in \Cref{alg:exact}. The algorithm  explicitly enumerates all candidate feature selectors, estimates the local and pooled conditionals, and calculates the posterior probabilities accordingly.

\begin{algorithm}[!t]
\caption{BIP: exact posterior inference for invariant features}
\label{alg:exact}
\KwInput{Dataset $\calD= \{ (x_{ei}, y_{ei})\}$, prior $p(z)$}
\KwOutput{Posterior distribution $\hat p(z \mid \calD)$ over feature selectors}
\For{each selector $z$ in the support of $p(z)$ }{
    Estimate local conditionals $\{ \hat p_e(y \mid x^z)\}_{e=1}^E$ and pooled conditional $\hat \pooldist(y \mid x^z)$ via \Cref{eq:estimated-local-conditional} and \Cref{eq:estimated-pooled-conditional}\;

    Compute the likelihood ratio:
    $\hat \Lambda(z) \leftarrow \prod_{e=1}^E \prod_{i=1}^{n_e}  \frac{\hat \pooldist(y_{ei} \mid x_{ei}^z)}{\hat p_e(y_{ei} \mid x_{ei}^z)}$\;
}
Normalize to obtain the posterior:
$\hat p(z \mid \calD) = \dfrac{p(z)\, \hat \Lambda (z)}{\sum_{z'} p(z')\, \hat \Lambda (z')}.$
\end{algorithm}

This procedure can be expensive. When the prior $p(z)$ has full support on $\{0,1\}^p$, exact inference has complexity $O(2^p \cdot c(\calD, p))$, where $c(\calD, p)$ is the cost of estimating and evaluating the conditional models in \Cref{eq:estimated-local-conditional,eq:estimated-pooled-conditional} on data with up to $p$ features. Restricting the support of $p(z)$ to subsets of size at most $p_{\max}$ reduces this to $O(p^{p_{\max}}\cdot c(\calD, p_{\max}))$, which may still be prohibitive for large $p$.

\subsection{Variational inference for BIP}
\label{subsec:vi}

\begin{algorithm}[t]
\caption{BIP-VI: variational inference for invariant features}
\label{alg:viml}
\KwInput{Dataset $\calD$, iterations $T$, number of gradient samples $M$, learning rate scheduler $r(\cdot)$}
\KwOutput{Variational posterior $q_{\phi_T}(z)$}
Initialize variational parameter $\phi_0$ \\
\For{$t = 1$ \KwTo $T$}{
    Compute $M$ unbiased gradient estimates of the ELBO using U2G (\Cref{alg:u2g}, see the supplementary materials):
    $\hat h_m \approx \nabla_\phi \mathcal{L}(\calD,\phi_{t-1})$ for  $m=1,\dots,M$ \\
    Update variational parameters:
    $
      \phi_{t}
      \leftarrow
      \phi_{t-1} + r(t) \cdot \frac{1}{M}\sum_{m=1}^M \hat h_m
    $
}
\end{algorithm}

Exact posterior inference enumerates all possible invariant selectors, which is infeasible for large feature sets. We therefore turn to variational inference \citep[VI,][]{blei2017variational}, which approximates the posterior with a simpler distribution, to develop BIP-VI.

We posit a mean-field family of approximate posterior distributions of $z$:
 \begin{align*}
 q_\phi(z) & \coloneqq \prod_{j=1}^p \textrm{Bernoulli}\left(z^{(j)} \mid \textrm{sigmoid}(\phi^{(j)})\right),
 \end{align*}
where $\textrm{sigmoid}(\cdot) \coloneqq \frac{\exp(\cdot)}{1 + \exp(\cdot)}$, and $\phi \in \mathbb{R}^p$ are the variational parameters. We then fit these parameters by maximizing the evidence lower bound (ELBO),
\begin{align}
  \label{eq:elbo}
  \begin{split}
 \mathcal{L}(\calD, \phi) &= \mathbb{E}_{q_\phi(z)}\left[\log p(z, \calD) - \log q_\phi(z)\right] \\
 &= \mathbb{E}_{q_\phi(z)}\left[\log p(z) + \sum_{e=1}^E \sum_{i=1}^{n_e}\log \frac{\pooldist(y_{ei}\mid x_{ei}^z)}{p_e(y_{ei}\mid x_{ei}^z)} - \log q_\phi(z)\right] + C,
 \end{split}
 \end{align}
where $C = \sum_{e=1}^E\sum_{i=1}^{n_e}\log p_e(x_{ei},y_{ei})$ is a constant with respect to $\phi$. The second line substitutes the joint distribution from \Cref{eq:joint-expression} and absorbs the $\phi$-independent log local-joint terms $\sum_{e,i}\log p_e(x_{ei},y_{ei})$ into the constant $C$. The optimal variational parameters minimize the KL divergence to the exact posterior.

We optimize the ELBO by stochastic gradient ascent, where the local and pooled conditionals, $\hat p_e(y \mid x^z)$ and $\hat \pooldist(y \mid x^z)$, are estimated from data as in \Cref{eq:estimated-local-conditional,eq:estimated-pooled-conditional}. We use Monte Carlo estimates of the ELBO gradient, a form of black-box variational inference~\citep{kingma2014vae,ranganath2014black}. \Cref{alg:viml} summarizes this procedure.  The complexity of BIP-VI is $O(T \cdot M \cdot c(\calD, p_{\max}))$ for $T$ iterations and $M$ gradient samples, where each gradient sample evaluates the conditionals on a candidate subset of size at most $p_{\max}$, the subset-size cap imposed by the prior $p(z)$. 

A technical challenge in BIP-VI is computing gradients of the ELBO, which involve expectations over the discrete variational distribution $q_\phi(z)$. We use the unbiased U2G estimator \citep{yin2020probabilistic}; see \Cref{alg:u2g} in the supplementary materials. Other discrete gradient estimators, such as those surveyed by \citet{mohamed2020monte}, are equally compatible. Additional implementation details are provided in \Cref{app:vi:optimization}.

\section{Theory}
\label{sec:theory}

We analyze the asymptotic behavior of the exact posterior $\hat p(z \mid \calD)$ (\Cref{alg:exact}, under estimated models $\hat{p}_e$). For simplicity we take a common per-environment sample size $n_e=n$, and to let the number of environments $E$ grow, we draw each environment from a uniform distribution $p(e)$ over the environments of interest $\calE$. Throughout, $\conv$ denotes convergence in probability, $\supp{p(z)}$ the support of $p(z)$, and $\forall_p z$ ``for every $z$ with positive prior mass.'' We outline our theoretical findings as follows: 
\begin{itemize}[itemsep=6pt, parsep=0pt, topsep=2pt]
  
\item \Cref{subsec:theory-main}: under suitable assumptions the posterior concentrates on $z^*$ as $E$ and $n$ grow, and under additional regularity conditions it contracts exponentially in $n$ and $E$, faster under stronger prior knowledge and greater environment heterogeneity.
  
\item \Cref{subsec:theory-discussion}: we extend these results to finite \( n \) or \( E \), and show that relaxing the assumptions makes the posterior concentrate on the most ``approximately invariant'' selectors under the given model and prior.
  
\end{itemize}

\subsection{Main results}\label{subsec:theory-main}

We make the following assumptions: 
\begin{assumption}[Uniqueness]\label{ass:uniquness}
  In \Cref{ass:invariance}, the existence of $z^*$ is unique.
\end{assumption}

\begin{assumption}[Prior positivity]\label{ass:positive-prior} In
  \Cref{ass:invariance},  $z^*$ has positive prior mass.
\end{assumption}

\begin{assumption}[Estimation consistency] \label{ass:likelihood-consistency}
$\forall_p z$, as $n \to \infty$ and $E \to \infty$,
\begin{align*}
\frac{1}{nE}\sum_{i=1}^n \sum_{e=1}^E \log \hat p_e(y_{ei} \mid x_{ei}^z) &\conv 
\mathbb{E}_{p(e) p_e(y,x^z)}\left[ \log p_e(y \mid x^z) \right], \\
\textrm{and }\frac{1}{nE}\sum_{i=1}^n \sum_{e=1}^E \log \hat g(y_{ei} \mid x_{ei}^z) & \conv 
\mathbb{E}_{p(e)p_e(y , x^z)}\left[ \log g(y \mid x^z) \right].  
\end{align*}
\end{assumption}

With these assumptions, we now state our first theorem. Given infinite data and environments, BIP correctly identifies the invariant $z^*$.

\begin{theorem}[Posterior consistency]\label{thm:posterior-consistency}
Suppose \Cref{ass:invariance,ass:uniquness,ass:positive-prior,ass:likelihood-consistency} hold, and take the limits $n\to \infty$ and  $E \to \infty$. The BIP posterior satisfies the following properties: 
\begin{itemize}
    \item consistency of the posterior mode: $\hat z_{n,E}: = \argmax_z \hat p(z \mid \calD) \stackrel{P}{\to} z^*$
    \item consistency of the posterior at $z^*$: $\hat p(z^* \mid \calD) \stackrel{P}{\to} 1. $
\end{itemize}
\end{theorem}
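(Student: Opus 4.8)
The plan is to work with the (random) log-posterior-ratio between any $z \in \supp{p(z)}$ and the true $z^*$, and show that it diverges to $-\infty$ at the right rate. Write, from \Cref{eq:posterior-estimated-proportionality},
\begin{align*}
\frac{1}{nE}\log\frac{\hat p(z\mid\calD)}{\hat p(z^*\mid\calD)}
&= \frac{1}{nE}\log\frac{p(z)}{p(z^*)}
+ \frac{1}{nE}\sum_{e=1}^{E}\sum_{i=1}^{n_e}\Big[\log\hat\pooldist(y_{ei}\mid x_{ei}^z) - \log\hat p_e(y_{ei}\mid x_{ei}^z)\Big]\\
&\quad - \frac{1}{nE}\sum_{e=1}^{E}\sum_{i=1}^{n_e}\Big[\log\hat\pooldist(y_{ei}\mid x_{ei}^{z^*}) - \log\hat p_e(y_{ei}\mid x_{ei}^{z^*})\Big].
\end{align*}
The prior term is $O(1/(nE))\to 0$. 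For the remaining terms I would apply \Cref{ass:likelihood-consistency} to each of the four sums, so that the whole right-hand side converges in probability to
\[
\Delta(z) := \bbE_{p(e)p_e(y,x^z)}\big[\log g(y\mid x^z) - \log p_e(y\mid x^z)\big] - \bbE_{p(e)p_e(y,x^{z^*})}\big[\log g(y\mid x^{z^*}) - \log p_e(y\mid x^{z^*})\big].
\]

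Next I would identify each expectation with a (negative) KL divergence: $\bbE_{p_e(y,x^z)}[\log g(y\mid x^z) - \log p_e(y\mid x^z)] = -\bbE_{p_e(x^z)}\kl{p_e(y\mid x^z)}{g(y\mid x^z)} \le 0$, with equality iff $p_e(y\mid x^z) = g(y\mid x^z)$ for $p_e$-almost every $x^z$. At $z=z^*$, \Cref{prop:faithfulness} gives $g(y\mid x^{z^*}) = p_e(y\mid x^{z^*})$ for all $e$, so the second bracket in $\Delta(z)$ is exactly $0$; hence $\Delta(z) = -\,\bbE_{p(e)p_e(x^z)}\kl{p_e(y\mid x^z)}{g(y\mid x^z)} \le 0$. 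To get strict negativity for $z\ne z^*$, I invoke \Cref{prop:faithfulness} in the contrapositive: since $z\ne z^*$ and $z^*$ is unique (\Cref{ass:uniquness}), $g(y\mid x^z)$ cannot equal $p_e(y\mid x^z)$ simultaneously for all $e\in\calE$, so at least one environment contributes a strictly positive KL term and, because $p(e)$ has full support on $\calE$, $\Delta(z) < 0$. Thus $\frac{1}{nE}\log\{\hat p(z\mid\calD)/\hat p(z^*\mid\calD)\} \conv \Delta(z) < 0$ for every $z\in\supp{p(z)}\setminus\{z^*\}$, which forces $\hat p(z\mid\calD)/\hat p(z^*\mid\calD)\conv 0$.

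To conclude, since $\supp{p(z)}$ is finite (a subset of $\{0,1\}^p$), a union bound over the finitely many $z\ne z^*$ gives $\max_{z\ne z^*}\hat p(z\mid\calD)/\hat p(z^*\mid\calD)\conv 0$, which immediately yields $\hat z_{n,E}\conv z^*$. For the second claim, write $\hat p(z^*\mid\calD) = \big(1 + \sum_{z\ne z^*}\hat p(z\mid\calD)/\hat p(z^*\mid\calD)\big)^{-1}$; the sum inside is a finite sum of terms each converging to $0$, so $\hat p(z^*\mid\calD)\conv 1$. The main obstacle I anticipate is the careful handling of the ``only if'' direction of \Cref{prop:faithfulness} under the estimated models: \Cref{prop:faithfulness} is stated for the true conditionals, and I need the limiting quantities in \Cref{ass:likelihood-consistency} to be the population conditionals $p_e(y\mid x^z)$ and the population pooled conditional $g(y\mid x^z)$ (not merely their projections onto $\mathcal{P}_{y\mid x^z}$) — so either $\mathcal{P}_{y\mid x^z}$ must be assumed well-specified, or \Cref{ass:likelihood-consistency} must be read as already encoding that the estimators are consistent for the true objects; I would state this dependence explicitly and otherwise treat the identifiability gap between pooled and local conditionals as supplied by \Cref{prop:faithfulness}. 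A secondary technical point is justifying the exchange of limits / joint convergence in $(n,E)$, which I would handle by assuming the convergences in \Cref{ass:likelihood-consistency} hold jointly as $\min(n,E)\to\infty$.
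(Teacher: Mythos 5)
Your proposal is correct and follows essentially the same route as the paper: the paper works with the scaled log-likelihood-ratio $\hat S_{n,E}(z)/(nE)\conv\mu(z)$ (the same KL-based limit as your $\Delta(z)=-\mu(z)$), establishes strict positivity of $\mu(z)$ for $z\neq z^*$ via uniqueness and an invariance lemma (your contrapositive use of \Cref{prop:faithfulness}), and then concludes by the argmax-convergence and exponential-decay lemmas, which are exactly your "negative per-observation limit forces the ratio to zero" plus a finite union bound and the identity $\hat p(z^*\mid\calD)=\bigl(1+\sum_{z\neq z^*}\hat p(z\mid\calD)/\hat p(z^*\mid\calD)\bigr)^{-1}$. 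Your closing caveats are also consistent with the paper's reading: \Cref{ass:likelihood-consistency} is stated with the true conditionals as limits (i.e., it already encodes well-specified estimation), and the joint $(n,E)$ limit is treated there exactly as you propose.
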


The proof is provided in \Cref{app:subsec:proof-consistency}. 
\Cref{thm:posterior-consistency} relies on key assumptions:
\Cref{ass:invariance} asserts that the true data generating process
satisfies invariance; \Cref{ass:uniquness} guarantees
identifiability of $z^*$; \Cref{ass:positive-prior} requires the prior
is well-specified; \Cref{ass:likelihood-consistency} ensures
consistent estimation of the likelihood. 

Next, we characterize the posterior contraction rate. First we make an assumption.
\begin{assumption}[Finite variance of log-likelihood ratio]\label{ass:finite-variance}
 $\forall_p z$,  the log-likelihood ratio $\log p_e(y \mid x^z) - \log \pooldist (y\mid x^z)$ has finite variance, that is, 
 \begin{align*} 
v(z):=\textrm{Var}_{p(e) p_e(y,x^z)} \left( \left[ \log p_e(y \mid x^z) - \log \pooldist (y \mid x^z) \right] \right)  < \infty.
\end{align*}
\end{assumption}
This regularity condition enables control over posterior contraction through Chebyshev's inequality. To state the contraction rate, we define the total variational distance $\textrm{TV}(\cdot, \cdot)$ between the posterior and the Dirac measure centered at the true $z^*$ as follows,
\begin{align*}
     \textrm{TV}\Big( \hat p(z \mid \calD), \delta_{z^*}(z) \Big) &= \frac{1}{2} \left( \sum_{z \neq z^*} | \hat p(z \mid \calD)-0| + | p(z^* | \calD) - 1| \right) = 1- \hat p(z^* | \calD). 
\end{align*}

\begin{theorem}[Posterior contraction rate]\label{thm:rate}
Given \Cref{ass:invariance,ass:uniquness,ass:positive-prior,ass:likelihood-consistency,ass:finite-variance}, 
there exists a sequence $\epsilon_{n,E} = O(R \cdot e^{-\kappa nE \mu_{\min} })$ such that 
\begin{align*}
    P\Bigg(\textrm{TV}\Big( \hat p(z \mid \calD), \delta_{z^*}(z) \Big) >  \epsilon_{n,E}  \Bigg)\to 0 
\end{align*}
as $n\to \infty$ and $E \to \infty$, where   $\kappa$ is a fixed value in $(0,1)$, and 
\begin{align}
    R &:= \left(\max_{z\neq z^*} \frac{p(z)}{p(z^*)} \right)\cdot | \supp{p(z)}|, \label{eq:def-R} \\
    \mu(z) &:=\mathbb{E}_{p(e)p_e(x^z)} \left[\kl{ p_e(y \mid x^z)}{ \pooldist (y\mid x^z))}\right], \label{eq:def-mu} \\ 
    \mu_{\min} &:= \min_{z \neq z^*, p(z)>0} \mu(z) \label{eq:def-mu-min}. 
\end{align}
\end{theorem}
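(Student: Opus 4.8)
The plan is to bound $1-\hat p(z^*\mid\calD)$, which by the displayed identity just above the theorem equals $\textrm{TV}\big(\hat p(z\mid\calD),\delta_{z^*}(z)\big)$. Writing the posterior in the proportionality form of \Cref{eq:posterior-estimated-proportionality} with $\hat\Lambda(z)=\prod_{e=1}^E\prod_{i=1}^{n_e}\hat\pooldist(y_{ei}\mid x_{ei}^z)/\hat p_e(y_{ei}\mid x_{ei}^z)$, I would lower-bound the normalizing constant by its $z^*$ term alone:
\[
1-\hat p(z^*\mid\calD)=\frac{\sum_{z\neq z^*}p(z)\hat\Lambda(z)}{\sum_{z}p(z)\hat\Lambda(z)}\le\sum_{z\neq z^*}\frac{p(z)}{p(z^*)}\,\frac{\hat\Lambda(z)}{\hat\Lambda(z^*)}\le |\supp{p(z)}|\cdot\Big(\max_{z\neq z^*}\tfrac{p(z)}{p(z^*)}\Big)\cdot\max_{z\neq z^*}\frac{\hat\Lambda(z)}{\hat\Lambda(z^*)}=R\cdot\max_{z\neq z^*}\frac{\hat\Lambda(z)}{\hat\Lambda(z^*)},
\]
with $R$ as in \Cref{eq:def-R}. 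So it suffices to show that, with probability tending to one, $\hat\Lambda(z)/\hat\Lambda(z^*)\le e^{-\kappa nE\mu_{\min}}$ simultaneously over all $z\neq z^*$ in $\supp{p(z)}$.

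Next I would identify the population limit of the averaged log-likelihood ratio. Since $\log\hat\Lambda(z)=\sum_{e,i}\big[\log\hat\pooldist(y_{ei}\mid x_{ei}^z)-\log\hat p_e(y_{ei}\mid x_{ei}^z)\big]$, \Cref{ass:likelihood-consistency} gives, for every $z$ with $p(z)>0$,
\[
\frac{1}{nE}\log\hat\Lambda(z)\conv\bbE_{p(e)p_e(y,x^z)}\!\left[\log\pooldist(y\mid x^z)-\log p_e(y\mid x^z)\right]=-\,\mu(z),
\]
where the equality uses $\bbE_{p_e(y\mid x^z)}[\log p_e(y\mid x^z)-\log\pooldist(y\mid x^z)]=\kl{p_e(y\mid x^z)}{\pooldist(y\mid x^z)}$ together with \Cref{eq:def-mu}. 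At $z=z^*$, \Cref{prop:faithfulness} forces $\pooldist(y\mid x^{z^*})\equiv p_e(y\mid x^{z^*})$, so the same computation gives $\frac1{nE}\log\hat\Lambda(z^*)\conv0$; hence $\frac1{nE}\log\big(\hat\Lambda(z)/\hat\Lambda(z^*)\big)\conv-\mu(z)$ for each $z\neq z^*$. Finally, \Cref{prop:faithfulness} with \Cref{ass:uniquness} shows $\mu(z)>0$ for every $z\neq z^*$ with positive prior mass, and since $\supp{p(z)}\subseteq\{0,1\}^p$ is finite, $\mu_{\min}>0$.

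To turn the limits into a rate, fix any $\kappa\in(0,1)$, so $-\mu(z)<-\kappa\mu(z)$ strictly. For each $z\neq z^*$ I would apply \Cref{ass:finite-variance} and Chebyshev's inequality to the average $\frac1{nE}\log\hat\Lambda(z)$ around its mean $-\mu(z)$, obtaining
\[
P\!\left(\frac1{nE}\log\frac{\hat\Lambda(z)}{\hat\Lambda(z^*)}>-\kappa\mu(z)\right)\le\frac{v(z)}{nE\,(1-\kappa)^2\mu(z)^2}+o(1)\longrightarrow0,
\]
the $o(1)$ accounting for the deviation of $\frac1{nE}\log\hat\Lambda(z^*)$ from $0$. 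A union bound over the finitely many $z\in\supp{p(z)}$ produces a single event of probability tending to one on which $\hat\Lambda(z)/\hat\Lambda(z^*)\le e^{-\kappa nE\mu(z)}\le e^{-\kappa nE\mu_{\min}}$ for all $z\neq z^*$. Combining with the first display, on that event $1-\hat p(z^*\mid\calD)\le R\,e^{-\kappa nE\mu_{\min}}$, so setting $\epsilon_{n,E}:=R\,e^{-\kappa nE\mu_{\min}}$ makes $P\big(\textrm{TV}(\hat p(z\mid\calD),\delta_{z^*}(z))>\epsilon_{n,E}\big)$ at most the complement probability, which $\to0$ — exactly the claim, with the stated $R$, $\mu_{\min}$, and a fixed $\kappa\in(0,1)$.

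The step I expect to be the main obstacle is making the Chebyshev argument rigorous, because \Cref{ass:finite-variance} bounds the variance of the \emph{true} log-ratio $\log p_e(y\mid x^z)-\log\pooldist(y\mid x^z)$, whereas $\hat\Lambda(z)$ is built from the \emph{estimated} conditionals $\hat p_e,\hat\pooldist$. A clean proof must either pass from $\hat\Lambda(z)$ to its true-conditional counterpart and show the averaged log-likelihood gap is $o_P(1)$ — which should follow from the consistency in \Cref{ass:likelihood-consistency}, but needs care in the joint limit $n\to\infty$, $E\to\infty$ (only $nE\to\infty$ is really used, so uniformity over the finite support is free) — or strengthen \Cref{ass:finite-variance} to the estimated ratios; either way the strict slack $(1-\kappa)\mu(z)>0$ is what absorbs the estimation error. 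A secondary, more routine point is verifying $\mu(z)>0$ for $z\neq z^*$ via \Cref{prop:faithfulness} (a set of positive $p_e(x^z)$-measure on which $p_e(y\mid x^z)\neq\pooldist(y\mid x^z)$ for some $e$ with $p(e)>0$) and checking that $\hat\Lambda(z^*)$ stays controlled in the log-average sense.
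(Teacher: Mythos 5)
Your proposal is correct and follows essentially the same route as the paper: bound $1-\hat p(z^*\mid\calD)$ by $R$ times the worst likelihood ratio over the finite prior support, apply Chebyshev via \Cref{ass:finite-variance} to the true log-ratio with slack $(1-\kappa)nE\mu(z)$, and absorb the estimation error using \Cref{ass:likelihood-consistency}. The obstacle you flag at the end is handled in the paper exactly as you suggest—decomposing $\hat S_{n,E}(z)=S_{n,E}(z)+B_{n,E}(z)$ and splitting the $(1-\kappa)$ slack between the Chebyshev bound on the true ratio and the $o_P(nE)$ bias term—so no new idea is missing.
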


 The proof is provided in \Cref{app:subsec:proof-rate}. \Cref{thm:rate} shows that the BIP posterior contracts around 
 $z^*$ exponentially fast, with respect to $n$ and $E$.  It also 
reveals  two key factors that influences the contraction rate. 

The first is the prior, which influences the term  $R$ in \Cref{eq:def-R}.  An informative prior that assigns lower
probability to non-invariant $z$s will lead to smaller value of 
$\max_{z\neq z^*} \frac{p(z)}{p(z^*)}$ (or $| \supp{p(z)}|$), and hence a smaller $R$ and faster contraction. In contrast, a prior fully supported on
$\{0,1\}^p$ results in an exponential dependence of $R$ on the
dimensionality $p$, since $|\supp{p(z)}|=2^p$. This analysis
highlights the challenge of high-dimensional invariant prediction problems and
the importance of an informative prior in that setting.
    
The second factor is the heterogeneity of environments, captured by  $\mu_{\min}$ in \Cref{eq:def-mu-min}. $\mu_{\min}$ is defined as
minimum of $\mu(z)$ (\Cref{eq:def-mu}) among non-invariant $z$s, where each $\mu(z)$ measures the discrepancy between
local and pooled conditionals given $z$. By \Cref{ass:invariance}, 
$\mu(z^*)=0$ and $\mu(z)>0$ for $z\neq z^*$. When environments are
more similar,  local conditionals $p_e(y \mid x^z)$ tend to align and hence closer to the pooled
conditional $\pooldist (y \mid x^z)$, yielding smaller $\mu(z)$. As a result, $\mu_{\min}$ is smaller, and the posterior contracts more slowly.  Conversely, a
larger $\mu_{\min}$ suggests greater environment heterogeneity, leading to faster posterior contraction. We empirically confirm this relationship in \Cref{subsec:synthetic-theory}, where we manually increase the environment 
heterogeneity level and observe faster contraction.

\subsection{Finite-sample extensions and relaxations of assumptions}
\label{subsec:theory-discussion}

Our main results consider the limit of infinite samples and environments. We now discuss finite-sample extensions and relaxations of core assumptions, and suggest practical guidelines. Formal statements are in \Cref{app:subsec:theory-extensions,app:subsec:justificiation-discussion}.

\parhead{Finite-sample extensions.}
With a fixed set of $E$ environments and per-environment sample size $n \to \infty$, the consistency and rate results of \Cref{subsec:theory-main} continue to hold (\Cref{thm:posterior-consistency-fiex-E,thm:rate-fixed-E}), provided $z^*$ stays identifiable from the observed environments.

Conversely, with finite per-environment samples but a growing number of environments ($E \to \infty$), each local conditional $\hat p_e(y \mid x^z)$ is estimated from limited data and is therefore biased. Consistency continues to hold as long as this bias is smaller than the heterogeneity signal $\mu_{\min}$ defined in \Cref{eq:def-mu-min}   (\Cref{thm:posterior-consistency-fixed-n,thm:rate-estimated-wrt-E}). The bias shrinks as the per-environment sample size $n$ grows, so it falls below $\mu_{\min}$ once $n$ is moderately large.

\parhead{Relaxations of core assumptions.}
In practice the assumptions behind \Cref{thm:posterior-consistency} rarely hold exactly. \Cref{thm:posterior-consistency-general} covers relaxations of each core assumption:
\begin{itemize}[leftmargin=*]
\item \textit{Invariance (\Cref{ass:invariance}).} Recall
   $\mu(z)$  in \Cref{eq:def-mu} that measures the discrepancy between local and pooled conditionals given $z$, or equivalently the violation level of invariance. If
  invariance holds exactly, $\mu(z)$ reaches its minimum at
  $z^*$, with zero discrepancy $\mu(z^*)=0$. When invariance is violated, the posterior concentrates on the minimizer, $\argmin_{z \in \supp{p}} \mu(z)$,
  which is the most
  \emph{approximately invariant} solution.

\item \textit{Uniqueness (\Cref{ass:uniquness}).}
 When several feature selectors are invariant, the posterior spreads its mass over them. This is more likely when the environments are few: fewer environments tend to admit more invariant sets, and adding environments reduces the number of invariant sets. \Cref{app:subsec:synthetic-uq} gives an empirical illustration on synthetic data, where the posterior is multi-modal under two environments and concentrates on the single remaining one once a third environment is added.

\item \textit{Prior positivity
    (\Cref{ass:positive-prior}).}  If the prior excludes the true $z^*$,  then the
  posterior concentrates on the most approximately invariant solution within the prior support. 
\item \textit{Estimation consistency
    (\Cref{ass:likelihood-consistency}).} When the conditional model is
  misspecified, the estimated local and pooled conditionals no longer
  converge to the true distributions but to their best-fitting
  approximations in the assumed model class. The posterior then concentrates
  on the minimizer of the discrepancy between these approximations.
\end{itemize}

In all four cases the posterior concentrates on the most approximately invariant selectors representable under the chosen prior and model class, reflecting the full landscape of invariant solutions consistent with these modeling choices. \Cref{thm:posterior-consistency} is the special case in which all assumptions hold and this landscape collapses to a unique $z^*$.

\parhead{Practical guidelines.}
These analyses suggest a few guidelines. When per-environment data are limited, the conditional models must stay estimable: we can cap the number of active features at a reasonable $p_{\max}$ in the prior, and if the study design yields too few samples per experimental condition, we can merge similar conditions into one environment to increase estimation power at the cost of fewer environments (the gene study in \S\ref{sec:gene-study} is an example, where all gene perturbation samples are pooled into one environment). Having fewer environments has two consequences. First, the heterogeneity signal $\mu_{\min}$ (\Cref{eq:def-mu-min}) tends to be smaller, so the posterior contracts more slowly (\Cref{thm:rate}). Second, multiple invariant solutions become plausible, yielding a multimodal posterior (\Cref{thm:posterior-consistency-general}). In either case the posterior carries genuine uncertainty, and reporting summaries such as marginal inclusion probabilities is more informative than a point estimate.

\section{Experiments}\label{sec:experiments}

\subsection{Comparison methods}\label{subsec:comparison-methods}

We consider the following invariant prediction methods, including our BIP and BIP-VI. In all settings, the conditional outcome is modeled as a linear Gaussian model.

\begin{enumerate}[leftmargin=*]

\item \textit{Invariant causal prediction (ICP)~\citep{peters2016causal}}: ICP enumerates candidate invariant feature sets, and for each set, tests the null hypothesis that the conditional outcome distribution is the same across all environments. It returns the intersection of all sets that are not rejected at significance level $\alpha$. 

\item \textit{Hidden-ICP~\citep{rothenhausler2019causal}}:  Different from the distributional invariance (\Cref{ass:invariance}), Hidden-ICP instead considers 
\emph{inner-product invariance}, assuming $\mathbb{E}[x (y - x^\top \beta)]$  is constant across environments. $\beta$ is estimated by a least-square procedure with the Gram matrices replaced by differences of Gram matrices in different environments.  The invariant features are selected at significance level $\alpha$. 

\item \textit{EILLS~\citep{fan2023environment}}: A regularized linear regression method that introduces a penalty term encoding a similar inner-produce invariance condition as in Hidden-ICP. The invariance regularization strength~$\gamma$ controls the trade-off between predictive fit and cross-environment stability: larger~$\gamma$ introduces stronger invariance penalty.

\item \textit{BIP [this paper]}: Our method with exact posterior inference (\Cref{alg:exact}). BIP enumerates all candidate invariant selectors and computes the exact posterior. Features are selected based on posterior summaries, e.g. posterior mode or by thresholding the marginal posterior inclusion probability at a level~$t$. 

\item \textit{BIP-VI [this paper]}: Our method with variational inference (\Cref{alg:viml}). Features are selected based on variational posterior summaries, as in the exact inference. 
\end{enumerate}

\parhead{Scaling in high dimensions.} BIP (with exact posterior inference), ICP, and EILLS require enumerating all candidate invariant sets, which is infeasible for large~$p$. Hidden-ICP requires Gram matrix inversion, which is intractable when $p$ exceeds the number of observations.  To apply these methods in high dimensions, we first run Lasso with cross-validation and select the top 10 features by absolute coefficient magnitude. If Lasso returns no nonzero coefficients, we instead select the top 10 features by marginal correlation with the outcome. We label these screened versions as \textit{BIP-screen, ICP-screen, EILLS-screen}, and \textit{Hidden-ICP-screen}. 

In contrast, BIP-VI operates on full features by optimizing a variational objective, without requiring pre-screening.
We use a uniform prior over selectors with at most $p_{\max}$ active features, where $p_{\max}$ is set per experiment setting. We choose $p_{\max} < n_e$ for every environment $e$ so that each local conditional model $p_e(y \mid x^z)$ remains estimable. 

\parhead{Computational comparison.} We report a full wall-clock comparison in \Cref{app:sec:compute}. Consistent with its theoretical complexity (\Cref{subsec:vi}), BIP-VI's cost is governed by its optimization budget rather than the number of candidate invariant feature sets. BIP-VI takes longer than baselines that operate on a small screened feature set, but those methods could not enumerate the much larger search space BIP-VI considers.

\subsection{Synthetic study}\label{sec:synthetic}

We evaluate the BIP framework with simulated data, where the true invariant features are known by construction. This setting allows us to  verify our theoretical results, measure each method's accuracy against the ground truth, and compare our methods to existing approaches. All experimental details are provided in \Cref{app:sec:synthetic-data}. We summarize our findings as follows:

\begin{itemize}[leftmargin=*]

\item In \Cref{subsec:synthetic-theory} we empirically verify the theory in \Cref{subsec:theory-main} with $p=3$ features where exact inference for BIP is tractable. Our results confirm posterior consistency and contraction rates as the number of environments $E$ and per-environment sample size $n$ increase.

\item In \Cref{subsec:synthetic-comparison} we compare BIP and BIP-VI to  existing methods for invariant prediction. We study both a low-dimensional setting ($p=10)$ and a  high-dimensional setting ($p=450$). 
We show that BIP and BIP-VI recover the true invariant feature selector in low dimensions;  and for high-dimensions, BIP-VI is more scalable and accurate than existing methods and BIP with exact inference which require feature screening. 
\end{itemize}

\usetikzlibrary{decorations.pathreplacing, calc}

\definecolor{invcolor}{HTML}{C04828}
\definecolor{invfill}{HTML}{FAECE7}
\definecolor{noncolor}{HTML}{B0AFA8}
\definecolor{nonfill}{HTML}{F1EFE8}
\definecolor{ycolor}{HTML}{378ADD}
\definecolor{yfill}{HTML}{E6F1FB}
\definecolor{ytext}{HTML}{185FA5}
\definecolor{hammercolor}{HTML}{444441}

\newcommand{\hammerpath}{%
    (5.5, -2.5) -- (6, 3)
    -- (3.5, 4.5) -- (0, 5.5)
    .. controls (-2, 7) and (-5, 7) .. (-6.5, 5)
    -- (-5.5, 4) -- (-4, 3) -- (-5, 1.5)
    -- (-5.5, -2.5)
    -- (-1.8, -3.5) -- (-2, -19)
    .. controls (-2, -21) and (2, -21) .. (2, -19)
    -- (1.8, -3.5) -- (4, -2.5) -- cycle%
}

\newsavebox{\hammersym}
\savebox{\hammersym}{\tikz[scale=0.0132, rotate=45]{%
    \fill[hammercolor, opacity=0.8] \hammerpath;
}}

\newsavebox{\hammerleg}
\savebox{\hammerleg}{\tikz[scale=0.008, rotate=45]{%
    \fill[hammercolor, opacity=0.8] \hammerpath;
}}

\newsavebox{\hammericon}
\savebox{\hammericon}{\raisebox{1pt}{\tikz[scale=0.0144, rotate=45]{%
    \fill[hammercolor] \hammerpath;
}}}

\newcommand{\legswatch}[3]{%
    \tikz[baseline=-0.5ex]{%
        \fill[#1] (0,0) rectangle (0.3,0.22);
        \draw[#2, line width=#3] (0,0) rectangle (0.3,0.22);
    }%
}

\begin{figure}[t]
\centering
\begin{tikzpicture}[
    xscale=1.4,
    box/.style={rectangle, rounded corners=3pt, minimum width=1.1cm, minimum height=0.65cm, draw, font=\small\itshape},
    invbox/.style={box, fill=invfill, draw=invcolor, text=invcolor, line width=0.4pt},
    nonbox/.style={box, fill=nonfill, draw=noncolor, text=noncolor!70!black, line width=0.4pt},
    ybox/.style={box, fill=yfill, draw=ycolor, text=ytext, minimum height=0.75cm, line width=0.8pt},
    brace/.style={decorate, decoration={brace, amplitude=4pt, mirror}, gray!70},
    env label/.style={font=\small, anchor=east},
]

\draw[->, gray!60, thick] (0, 4.8) -- (6.8, 4.8);
\node[font=\scriptsize, gray!60, above] at (3.4, 4.8) {Data-generating order};

\foreach \i/\lab in {1/$t^{(1)}$, 2/$t^{(2)}$, 3/$t^{(3)}$, 4/$t^{(4)}$, 5/$t^{(5)}$, 6/$t^{(6)}$} {
    \node[font=\scriptsize\itshape, gray!70] at (\i - 0.05, 4.35) {\lab};
}

\node[env label] at (-0.3, 3.65) {\textbf{$e=1$}};
\node[env label, font=\small\itshape] at (-0.3, 3.3) {(observational)};

\node[invbox]  (o1) at (1, 3.5) {$x^{(2)}$};
\node[invbox]  (o2) at (2, 3.5) {$x^{(4)}$};
\node[ybox]    (o3) at (3, 3.5) {$y$};
\node[nonbox]  (o4) at (4, 3.5) {$x^{(1)}$};
\node[nonbox]  (o5) at (5, 3.5) {$x^{(3)}$};
\node[nonbox]  (o6) at (6, 3.5) {$x^{(5)}$};

\draw[brace] (0.55, 3.05) -- (2.45, 3.05)
    node[midway, below=5pt, font=\scriptsize] {Invariant features $x^{z^*}$};
\draw[brace] (3.55, 3.05) -- (6.45, 3.05)
    node[midway, below=5pt, font=\scriptsize] {Non-invariant features};

\node[env label] at (-0.3, 1.65) {\textbf{$e=2$}};
\node[env label, font=\small\itshape] at (-0.3, 1.3) {(interventional)};

\node[invbox]  (i1) at (1, 1.5) {$x^{(2)}$};
\node[invbox]  (i2) at (2, 1.5) {$x^{(4)}$};
\node[ybox]    (i3) at (3, 1.5) {$y$};
\node[nonbox]  (i4) at (4, 1.5) {$x^{(1)}$};
\node[nonbox]  (i5) at (5, 1.5) {$x^{(3)}$};
\node[nonbox]  (i6) at (6, 1.5) {$x^{(5)}$};

\node at (2.38, 1.90) {\usebox{\hammersym}};
\node at (4.38, 1.90) {\usebox{\hammersym}};
\node at (6.38, 1.90) {\usebox{\hammersym}};

\node[env label] at (-0.3, 0.25) {\textbf{$e=3$}};
\node[env label, font=\small\itshape] at (-0.3, -0.1) {(interventional)};

\node[invbox]  (j1) at (1, 0.1) {$x^{(2)}$};
\node[invbox]  (j2) at (2, 0.1) {$x^{(4)}$};
\node[ybox]    (j3) at (3, 0.1) {$y$};
\node[nonbox]  (j4) at (4, 0.1) {$x^{(1)}$};
\node[nonbox]  (j5) at (5, 0.1) {$x^{(3)}$};
\node[nonbox]  (j6) at (6, 0.1) {$x^{(5)}$};

\node at (1.38, 0.50) {\usebox{\hammersym}};
\node at (5.38, 0.50) {\usebox{\hammersym}};

\node[font=\small, anchor=north] at (3.4, -0.7) {%
    $y$ is never intervened on: in this example, $p(y \mid x^{(2)}, x^{(4)})$ is invariant across all environments};

\end{tikzpicture}

\vspace{2pt}
{\scriptsize 
\hspace{2cm}%
\legswatch{invfill}{invcolor}{0.4pt}\,Invariant feature\quad
\legswatch{yfill}{ycolor}{0.8pt}\,Outcome $y$\quad
\legswatch{nonfill}{noncolor}{0.4pt}\,Non-invariant feature\quad
\raisebox{1pt}{\usebox{\hammerleg}}\,Intervened
}

\caption{Illustration of the synthetic data-generating process with $p=5$ features.
Variables are generated sequentially in a fixed ordering $t^{(1)}, \ldots, t^{(p+1)}$, a permutation of $(x^{(1)}, \cdots, x^{(p)}, y)$.
Features preceding $y$ in the ordering, \textcolor{invcolor}{($x^{(2)}, x^{(4)}$)}, are
invariant: the conditional distribution $p_e(y \mid x^{(2)}, x^{(4)})$ is held fixed
across environments. We first specify the conditional distributions $p_1(t^{(i)} \mid t^{(1:i-1)})$ for $i=1,\cdots, p$ in the observational environment ($e{=}1$). In interventional environments
($e \geq 2$), a randomly selected subset of features (marked with \usebox{\hammericon}) has its conditional modified, while $y$'s  is never changed.}
\label{fig:synthetic-dgp}
\end{figure}

\parhead{Data generation.}
We simulate multi-environment data from a linear structural causal model with Gaussian noise, a standard setup in the invariance literature \citep{peters2016causal, fan2023environment, rothenhausler2019causal}. This setup reflects settings where variables are related through a sequential mechanism and external perturbations modify some, but the outcome generating mechanism remains unchanged. In gene regulation, for example, gene knockouts alter the expression distributions of the perturbed genes while leaving the outcome gene's regulatory mechanism intact.

We draw a random ordering of the features and outcome, permuting $(x^{(1)}, \ldots, x^{(p)}, y)$ into $(t^{(1)}, \ldots, t^{(p+1)})$, and generate each variable in turn conditional on its predecessors:
\begin{align}\label{eq:synthetic-data-joint-factorization}
    p_e(x,y) = \prod_{i=1}^{p+1} p_e(\di{t}{i} \mid \di{t}{1:i-1}), \qquad \forall e=1,\cdots, E,
\end{align}
where each conditional is a linear model with additive Gaussian noise. Within each run, the ordering is sampled once and held fixed across all environments. We first specify all conditionals in an observational environment ($e=1$). For each interventional environment ($e \geq 2$), we randomly select a subset of features to \textit{intervene} on by modifying the parameters in their conditional distributions, and keep the rest of the conditionals the same as in $e=1$. The outcome $y$ is never intervened on, so its conditional stays fixed across environments, and we treat all variables preceding $y$ in the ordering as the invariant features. \Cref{fig:synthetic-dgp} illustrates an example in which $(t^{(1)}, \cdots, t^{(6)})=(x^{(2)}, x^{(4)}, y, x^{(1)}, x^{(3)}, x^{(5)})$, so that $x^{(2)}, x^{(4)}$ are the invariant features.
We modulate environment heterogeneity by varying the \textit{intervention strength}, the expected fraction of features intervened on.

BIP and BIP-VI adopt a uniform prior over $\{0,1\}^p$ in low dimensions ($p=3, 10$). In high dimensions ($p=450$), BIP uses a uniform prior over the screened feature space, and BIP-VI uses a prior supported on selectors with at most $p_{\max} =20$ active features---below the smallest per-environment sample size considered ($n_e=50$). 

\subsubsection{Synthetic study: empirical verification  of theory}
\label{subsec:synthetic-theory}

\begin{figure}[!t]
    \centering
    \includegraphics[scale=0.52]{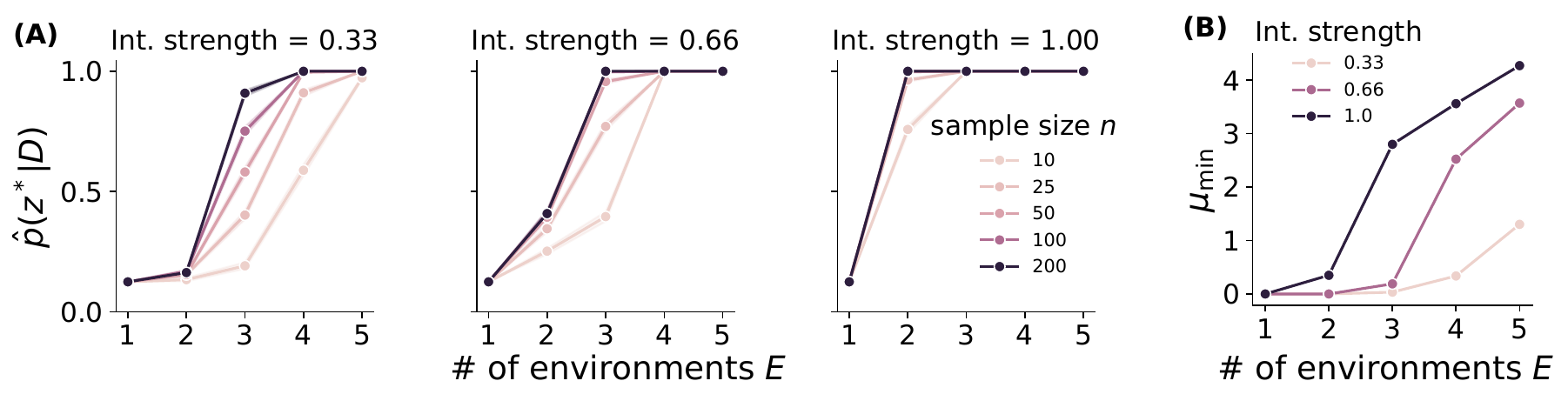}
    \caption{Synthetic study: empirical verification of posterior consistency (left) and contraction rate (right) with $p=3$ features. \textbf{(A)}: The posterior value at the invariant $z^*$, $\hat p(z^* \mid \calD)$, increases with the per-environment sample size $n$, number of environments $E$, and intervention strength ``Int. strength".  Averaged over 1,000 simulations; 95\% confidence bands.  \textbf{(B)}: The heterogeneity measure $\mu_{\min}$ (\Cref{eq:def-mu-min}), which controls the posterior contraction rate $\epsilon_{n,E} = O(R \cdot e^{-\kappa nE \mu_{\min} })$, increases with both the number of environments $E$ as well as intervention strength.  }
    \label{fig:synthetic-theory}
\end{figure}

We study the empirical behavior of the posterior in simulations with
$p=3$ features. We vary per-environment sample size $n$ from 10 to
200, the number of environments $E$ from 1 to 5, and intervention
strength from 0.33 to 1 (intervening on 1 to 3 features).

\Cref{fig:synthetic-theory} (A) shows the posterior value at the true $z^*$, $\hat p (z^* | \calD)$. For each level of intervention strength, $\hat p (z^* | \calD)$ converges to 1 with increasing $n$ and $E$, corroborating the theory on posterior consistency in \Cref{subsec:theory-main}. The empirical rate at which $\hat p (z^* | \calD)$ converges to 1 is faster under higher intervention strength. 
\Cref{fig:synthetic-theory} (B) plots the heterogeneity metric $\mu_{\min}$ that governs the posterior contraction rate $\epsilon_{n,E} = O(R \cdot e^{-\kappa nE \mu_{\min} })$ against the number of environments $E$ under different intervention strength.  We observe that $\mu_{\min}$ increases with both $E$ and intervention strength. In both cases, the observed environments become more heterogeneous, leading to faster posterior contraction.  The trend in \Cref{fig:synthetic-theory} (B) aligns with the empirical trend in \Cref{fig:synthetic-theory} (A), confirming the expected posterior contraction behavior.

\subsubsection{Synthetic study: comparison to other methods}\label{subsec:synthetic-comparison}

We now compare BIP and BIP-VI to other methods in low and high-dimensional settings. 

\parhead{Setup.} We perform separate simulations with feature sizes $p=10$ and $p=450$. The size of true invariant features is capped by 10 in both cases. We vary the per-environment sample size $n$ from $\{50, 200, 500\}$, and vary the number of environments $E$ from 2 to 20 for $p=10$, and from 2 to 40 for $p=450$. For each combination of $(p,n,E)$, we generate 400 datasets under different intervention strengths.

We compare the invariance methods described in \Cref{subsec:comparison-methods}. We run ICP at $\alpha=0.05$; Hidden-ICP at $\alpha=0.05$; EILLS with $\gamma=36$ (default in the original codebase); we return the posterior mode of invariant features from BIP and BIP-VI, to compare against the ground-truth invariant features and to verify the consistency and contraction limits (\Cref{thm:posterior-consistency,thm:rate}) as the data grow. We additionally consider two baselines:
 (1) \textit{Oracle}: pooled linear regression on the true invariant features $x^{z^*}$, returning features significant at $\alpha=0.05$; (2) \textit{Regression}: similar to Oracle but instead regressing on all features; if $p+1>nE$, using Lasso with 10-fold CV. 

\parhead{Evaluation metrics.} Each method produces an estimate of the invariant selector $\hat{z}^*$. Given the ground truth $z^*$, we use two metrics: (i) \textit{exact discovery rate}, which measures the average occurrence where the estimated invariant features exactly recover the true invariant features, and (ii) \textit{coverage}, which measures the average occurrence where the estimated invariant features is a subset of the true invariant features.  A perfect method should achieve 1 on both metrics. We emphasize that coverage alone can be misleading, as always predicting the empty set yields full coverage but zero exact discovery.

\setcounter{topnumber}{1}
\begin{figure}[t]
    \centering
    \includegraphics[scale=0.47]{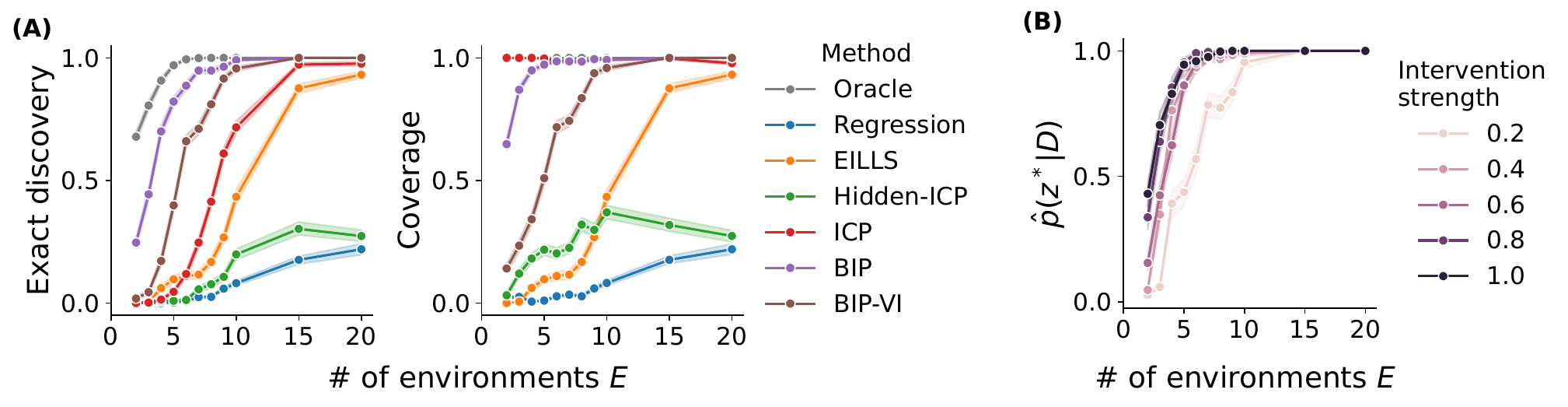}
    \caption{Synthetic study: comparison to other methods with $p=10$ features.  Results are averaged over  cases with $n=50, 200, 500$, each with 400 random simulations under varying intervention strength. Error bars indicate 95\% confidence intervals.
    \textbf{(A):} Exact discovery rate improves with $E$ for all methods, with Oracle converging fastest, followed by BIP and BIP-VI. Oracle and ICP maintain coverage near 1 across all $E$, while coverage for BIP, BIP-VI, and EILLS gradually increases to 1. Other methods show relatively low coverage.
    \textbf{(B):} BIP's posterior value at $z^*$  increases with  both $E$ and  intervention strength, as each case leads to  higher  heterogeneity across environments.}
    \label{fig:synthetic-sweep-p10}
\end{figure}

\parhead{Low-dimensional results.}
\Cref{fig:synthetic-sweep-p10} displays the results for $p=10$. As $E$ increases, all methods improve in exact discovery rate (Panel A, left); at $E=20$ all except Regression and Hidden-ICP reach nearly 1, with Oracle converging fastest, followed by BIP and BIP-VI. 
Coverage (Panel A, right) rises with $E$ for all methods. Oracle and ICP stay near 1 throughout, though at small $E$ ICP does so vacuously, returning an empty prediction in most simulations (high coverage, low exact discovery). BIP and BIP-VI reach near-perfect coverage by $E=20$. 
\Cref{fig:synthetic-sweep-p10}(B) shows BIP's posterior value at the true $z^*$, which converges to 1 as $E$ increases, faster under higher intervention strength, consistent with the theoretical contraction rates. BIP-VI's variational posterior shows the same trend but converges more slowly (\Cref{app:subfig:p10-posterior-vi}).

\begin{figure}[t]
    \centering
    \includegraphics[scale=.47]{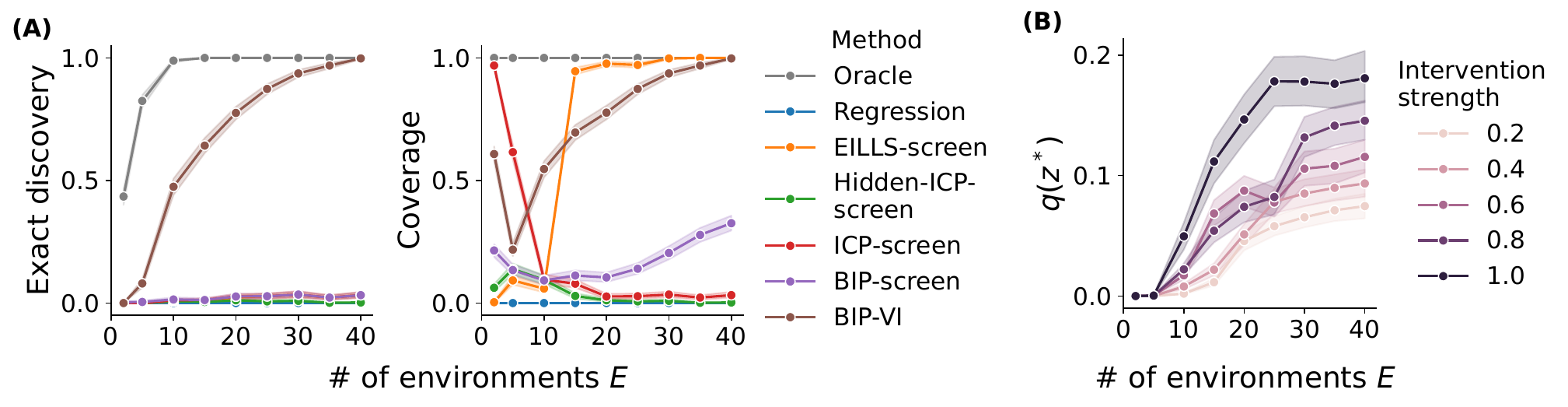}
    \caption{Synthetic study: comparison to other methods with $p=450$ features where $E$ sweeps from $2$ to $40$.  Results are averaged over  cases with $n=50, 200, 500$, each with 400 random simulations under varying intervention strength. Error bars indicate 95\% confidence intervals. Methods with the ``-screen" suffix use a pre-screening step.
    \textbf{(A):} The exact discovery rate of Oracle and BIP-VI converges to 1 as $E$ increases, while it remains low for other methods.  Oracle maintains near-perfect coverage across all $E$, and coverage of BIP-VI gradually approaches to 1. EILLS-screen achieves high coverage at large $E$ but this is  due to frequent empty predictions. Other methods have low coverage.  \textbf{(B):} The variational posterior value at the true $z^*$ by BIP-VI increases with both $E$ and  intervention strength, but remains below 0.2 even at the largest $E=40$ considered.}
    \label{fig:synthetic-sweep-p450}
\end{figure}
\setcounter{topnumber}{2}

\parhead{High-dimensional results.} \Cref{fig:synthetic-sweep-p450} gives the results for $p=450$. Panel A (Left) shows that the exact discovery rate of Oracle and BIP-VI converges to 1 as the number of environments $E$ increases. In contrast, the remaining methods exhibit a near-zero exact discovery rate across $E$, primarily due to failures in the pre-screening step. 
Coverage (Panel A, right) should be read together with exact discovery, since it rewards conservative (empty) predictions. Oracle stays near-perfect, while Regression is low throughout, selecting predictive but non-invariant features. BIP-VI predicts mostly empty sets at $E=2$ (vacuously high coverage), then dips as its predictions become non-empty but imperfect, and approaches 1 as $E$ grows. EILLS-screen's high coverage at large $E$ likewise comes only from empty predictions. At small $E$, ICP also predicts empty sets and scores high coverage, but at large $E$ its coverage falls toward 0: the screened pool increasingly retains no subset passing the invariance test, so ICP returns no prediction (scored as coverage 0). Hidden-ICP-screen is low throughout, and BIP-screen slightly better.

\Cref{fig:synthetic-sweep-p450} (B) plots the variational posterior density $q(z^*)$ at the true $z^*$ by BIP-VI. $q(z^*)$ increases with larger $E$ or higher intervention strength. However, $q(z^*)$ stays well below 1 even at the largest $E=40$, a substantial uncertainty that the mean-field factorization compounds across features: if each marginal $q({z^*}^{(i)}) =.996, \forall i$, the joint value is exponentially smaller, $q(z^*) = .996^{450} = .165$.


\subsection{Gene perturbation study}\label{sec:gene-study}

We revisit a real-world application studied in \citet{peters2016causal,meinshausen2016methods,rothenhausler2019causal} which involves predicting genes whose deletion significantly alters the expression of some target gene. All experimental details are provided in \Cref{app:sec:gene-study}. 
Our findings are summarized below.

\begin{itemize}[leftmargin=*]
\item BIP-VI achieves high precision and moderate recall,
  outperforming existing invariant prediction approaches that depend
  on feature pre-screening in high dimensions.

  \item BIP-VI's predictions closely align with the top 10 stable
    findings reported in \citet{meinshausen2016methods}.
\end{itemize}

\parhead{Dataset and setup.} The data come from a large-scale yeast gene-perturbation experiment measuring genome-wide mRNA expression for 6,170 genes \citep{kemmeren2014large}. 
There are two environments: 262 observational samples ($e=1$, no deletion) and 1,479 interventional samples ($e=2$, each a unique gene deletion). We merge all deletion samples into one interventional environment: treating each deletion as its own environment would leave only a single sample per environment, too few to estimate that environment's distribution.
Our goal is to predict whether an unseen gene deletion significantly changes the expression of some target gene. 

Following \citet{peters2016causal}, this problem can be approached by inferring invariant features. We treat the expression level of the target gene as  outcome $y$ and the expression levels of the remaining $p=6,169$ genes as  features $x$. Under the invariance assumption, if an invariant feature  is deleted, then the outcome $y$ is expected to change in the resulting new environment. \citet{peters2016causal} also provides a causal perspective on the invariance solution, where the invariant features are viewed as the direct causes of the target gene. 

We split the $e=2$ interventional samples into 5 blocks and sweep through all 6,170 genes, treating each as the target and the rest as features. For each target, the training set uses all $e=1$ observational samples as one environment and 4 of the 5 interventional blocks as the second, holding out the fifth for validation; we drop any training sample that perturbs the target gene itself, so the outcome is never directly intervened on. Repeating over the five held-out blocks validates each perturbation at least once.

\begin{figure}[!t]
    \centering
    \includegraphics[width=\textwidth]{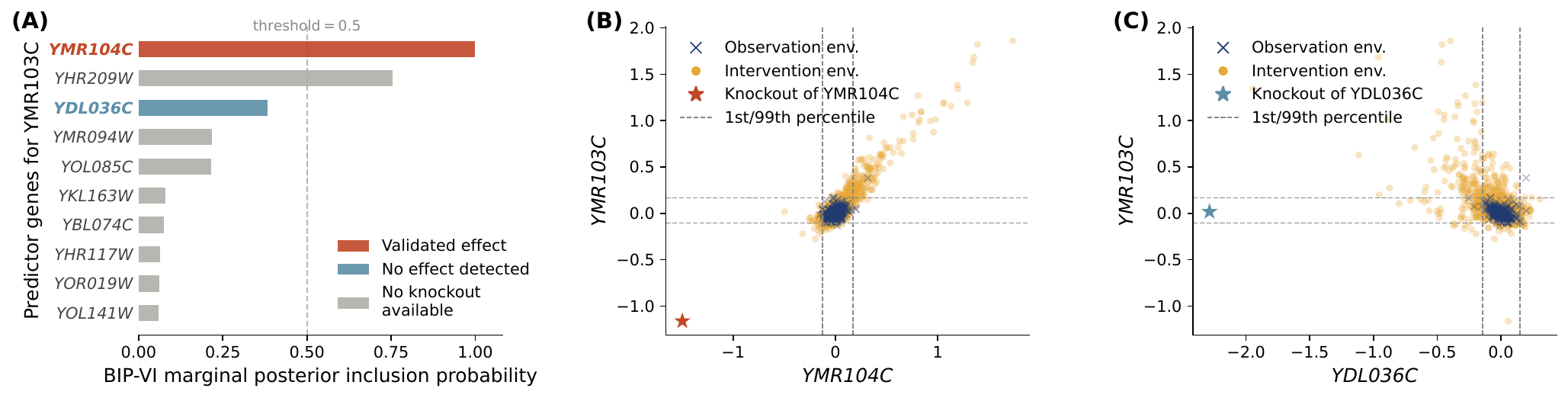}
    \caption{Gene perturbation study: illustration of BIP-VI output and validation procedure on the target gene, YMR103C. \textbf{(A)}: Top 10 feature genes ranked by BIP-VI's marginal posterior inclusion probability; the dashed line marks a posterior inclusion probability threshold of $0.5$ (up to the user's choice) used to declare a prediction. Bars are colored by validation outcome on held-out interventional samples: red if knocking out the feature produces a significant change in YMR103C (validated effect), blue if the knockout produces no detectable change (no effect detected), and gray if no knockout sample for that feature is available in the dataset. \textbf{(B)}: Scatter of the top-ranked feature YMR104C by BIP-VI against the target YMR103C across observational ($\times$) and interventional ($\bullet$) samples; the star marks the YMR104C-knockout sample in the held-out validation. Dashed lines indicate the 1st/99th percentiles in observational data. The knockout sample falls well outside this percentile band, validating YMR104C as a predictor of the target YMR103C. \textbf{(C)}: Same plot for YDL036C. The YDL036C-knockout sample lies far from the bulk along the $x$-axis but the target YMR103C's expression level remains within the percentile band, so knocking out YDL036C has no detectable effect.}
    \label{fig:gene-illustration}
\end{figure}

We apply the invariance methods of \Cref{subsec:comparison-methods}, plus two baselines: marginal regression (features with significant marginal correlation with the outcome) and Lasso with 5-fold cross-validation. The screening-based methods use the top 10 features from the same 5-fold Lasso; BIP-screen places a uniform prior over them, while BIP-VI uses a prior over selectors with at most 200 active features. Since the gene study has no ground-truth invariant set and each held-out knockout validates one feature at a time, we summarize BIP-VI marginally: we report the posterior inclusion probability $q(\di{z}{i}=1)$ and predict genes above a threshold $t$. \Cref{fig:gene-illustration}(A) shows this posterior profile for the target gene YMR103C; BIP-screen is scored the same way over its screened features. 

\parhead{Evaluation metrics.}  We evaluate the predicted invariant gene using a held-out validation set. If deleting the feature gene leads to a significant change in the expression level of the target,
the prediction is considered correct. 
Due to the limited  validation data, not all features can be
empirically validated. \Cref{fig:gene-illustration}(B) and (C) illustrate this validation step for two features of the target YMR103C from BIP-VI's output in Panel~(A). In~(B), knocking out the top-ranked feature YMR104C pushes YMR103C outside its  percentile band in the observational environment, so the prediction is validated as a correct effect. In~(C), knocking out YDL036C (whose posterior inclusion probability is below the $0.5$ threshold) leaves YMR103C within the band, so no effect is detected. Other genes shown in Panel~(A) have no knockout sample available and cannot be validated.

We compute \textit{precision} as the ratio of correct predictions to the total number of predictions that can be validated.  Similarly,  \textit{recall} is the ratio of the correct predictions to the total number of significant gene deletion effects that can be validated. 
\begin{figure}[!t]
    \centering
    \includegraphics[scale=0.55]{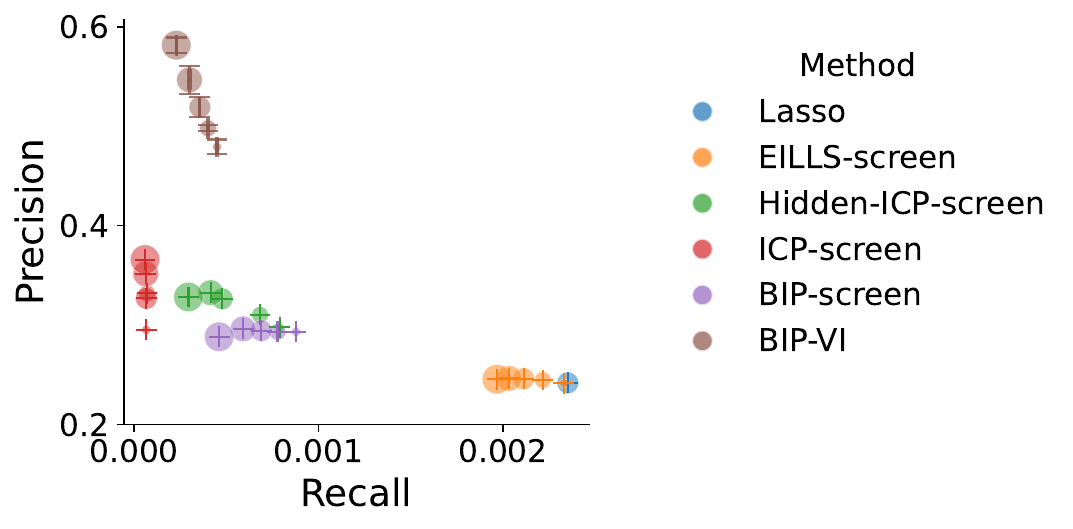}
    \caption{Gene perturbation study: precision v.s. recall. Each dot shows the average result over 3 random seeds, with error bars indicating 2 standard errors. Colors denote methods, and multiple dots per method reflect different hyperparameter settings with larger dots indicating more conservative settings. BIP-VI consistently achieves a favorable precision-recall trade-off with the highest precision and recall higher than ICP-screen across different hyperparameter settings.}
    \label{fig:gene-results2}
\end{figure}

\parhead{Results.} In \Cref{fig:gene-results2}, we trace each method's precision--recall trade-off by sweeping its hyperparameter: the marginal posterior inclusion threshold $t \in \{0.5, 0.6,0.7,0.8, 0.9\}$ for BIP-VI and BIP-screen, the significance level $\alpha \in \{0.001, 0.005, 0.01, 0.05, 0.1\}$ for ICP-screen and Hidden-ICP-screen, and the invariance penalty $\gamma \in \{1, 25, 50, 75, 100\}$ for EILLS-screen. 
We summarize findings as follows: (i) BIP-VI has the highest precision and a moderate recall; (ii) ICP-screen
is the second most accurate but also the most conservative as
indicated by the lowest recall; (iii) Other invariant prediction methods and Lasso have lower precision and higher recall compared to BIP-VI; (iv) Marginal regression (not plotted in the figure) has the lowest precision ($0.11$) but the highest recall ($0.87$), identifying 5,500 of the 6,169 features on average.

BIP-VI's recall remains moderate for several reasons. Although it draws candidates from all features rather than a much smaller pre-screened pool, it reports only features with high marginal posterior inclusion probability, and with only two environments and limited data the posterior carries substantial uncertainty, so it selects relatively few features. Moreover, with only two environments the invariant set need not be unique (\Cref{ass:uniquness} may fail), so the posterior spreads over multiple approximately invariant sets rather than a single $z^*$ (\S\ref{subsec:theory-discussion}, \Cref{thm:posterior-consistency-general}); this multi-modality is hard for the mean-field variational family to capture, leading to partial coverage. Other contributing factors include invariance-assumption violations and outcome-model misspecification.

\parhead{Comparison between BIP-VI and ICP-screen.}
Finally, we provide a detailed comparison between BIP-VI and ICP-screen, as they achieve higher precision than other methods. 
We examine the predictions made by BIP-VI with  a marginal probability threshold of $t=0.5$, and by ICP-screen with a significance level of  $\alpha=0.01$. For a fixed target gene, we take the intersection of predicted invariant feature sets across 3 random seeds per method. We find that approximately 42\% (27 out of 67) of correct  predictions  by ICP-screen are also identified by BIP-VI, suggesting some degree of consistency. Moreover, BIP-VI is less conservative than ICP-screen, with a total of 371 correct predictions -- about 5.5 times as many as ICP-screen. 

\setcounter{footnote}{0} 

\ifjrssb\else
\begin{table}[!ht]
 \resizebox{\textwidth}{!}{%
\begin{tabular}{l lll}
\hline
 & \multicolumn{3}{c}{Inferred invariant feature gene(s)}          \\ \cline{2-4} 
\multirow{-2}{*}{Target gene} & \multicolumn{1}{l}{\citet{meinshausen2016methods}}                      & \multicolumn{1}{l}{ICP-screen ($\alpha=0.01$)}                     & BIP-VI ($t=0.5$)                                \\ \hline
YMR103C                       & \multicolumn{1}{l}{{\color[HTML]{3166FF} YMR104C}} & \multicolumn{1}{l}{{\color[HTML]{3166FF} YMR104C}} & {{\color[HTML]{3166FF} YMR104C}, YHR209W$^*$} \\ 
YMR321C                       & \multicolumn{1}{l}{{\color[HTML]{3166FF} YPL273W}} & \multicolumn{1}{l}{{\color[HTML]{3166FF} YPL273W}} & {\color[HTML]{3166FF} YPL273W}          \\ 
YCL042W                       & \multicolumn{1}{l}{{\color[HTML]{3166FF} YCL040W}} & \multicolumn{1}{l}{{\color[HTML]{3166FF} YCL040W}} & {\color[HTML]{3166FF} YCL040W}          \\ 
YLL020C                       & \multicolumn{1}{l}{{\color[HTML]{3166FF} YLL019C}} & \multicolumn{1}{l}{{\color[HTML]{3166FF} YLL019C}} & {\color[HTML]{3166FF} YLL019C}          \\ 
YPL240C                       & \multicolumn{1}{l}{{\color[HTML]{3166FF} YMR186W}} & \multicolumn{1}{l}{{\color[HTML]{3166FF} YMR186W}} & {\color[HTML]{3166FF} YMR186W} \\ 
YBR126C                       & \multicolumn{1}{l}{{\color[HTML]{3166FF} YDR074W}} & \multicolumn{1}{l}{$\emptyset$}                               & YGR008C$^*$, YKL035W$^*$, {{\color[HTML]{3166FF}YDR074W}}               \\ 
YMR173W-A                     & \multicolumn{1}{l}{{\color[HTML]{3166FF} YMR173W}} & \multicolumn{1}{l}{{\color[HTML]{3166FF}YMR173W}, YOL100W$^*$}               & {\color[HTML]{3166FF} YMR173W}          \\ 
YGR264C                       & \multicolumn{1}{l}{{\color[HTML]{3166FF}YGR162W}}                        & \multicolumn{1}{l}{$\emptyset$}                               &                              $\emptyset$           \\ 
YJL077C                       & \multicolumn{1}{l}{ {\color[HTML]{3166FF} YOR027W} }                        & \multicolumn{1}{l}{  $\emptyset$}                               & YLL026W$^*$, {\color[HTML]{3166FF}YOR027W}, YFL010W-A$^*$             \\ 
YLR170C                       & \multicolumn{1}{l}{YJL115W$^\times$}                        & \multicolumn{1}{l}{YDR322C-A$^*$, YDR180W$^*$ YJL184W$^*$, YLR438C-A$^*$}                               &         $\emptyset$                                \\ \hline
\end{tabular}
}
\caption{Invariant genes predicted by different methods: ICP from previous findings \citep{meinshausen2016methods} (left column); our implementation of ICP-screen (middle column); and BIP-VI (right column). We observe consistency in most predictions on the 10 target genes selected by \citet{meinshausen2016methods}. Genes in blue are validated to have significant effects on the corresponding target gene; genes marked with a superscript $^*$ cannot be checked given existing data. Genes with a superscript $^\times$ can be checked but do not have a significant effect; $\emptyset$ means no invariant features are predicted.}
\label{tab:gene-study-consistency}
\end{table}

\fi

We also compare our findings with those of \citet{meinshausen2016methods}. They  use ICP at a significance level of $\alpha=0.01$ after a Lasso pre-screening procedure, followed by an additional stability selection step  \citep{meinshausen2010stability} applied to 50 randomly bootstrapped datasets. In addition, the dataset used in \citet{meinshausen2016methods} only has 160 observational samples, whereas the updated version that we use has 262 observational samples.\footnote{The updated dataset can be downloaded at \url{https://deleteome.holstegelab.nl/}.} \Cref{tab:gene-study-consistency} gives the results.  Most of the verified effects predicted by BIP-VI, ICP-screen, and \citet{meinshausen2016methods} are the same.



\section{Discussion and future works}\label{sec:conclusion}

We present \textit{Bayesian Invariant Prediction (BIP)} to infer the subset of features that invariantly predict the outcome across diverse environments. For high-dimensional settings, we develop \textit{BIP-VI}, a scalable variational algorithm that addresses the exponential complexity of exact inference. We establish theoretical guarantees on posterior consistency and derive posterior contraction rates that depend both on the amount of data and the heterogeneity of the environments. We validate BIP and BIP-VI in both synthetic and real data studies, demonstrating their accuracy, uncertainty quantification, and scalability.

The fundamental assumptions of BIP are that the data generative process is truly invariant, and that the conditional outcome model reflects the invariant prediction mechanism. When these conditions are met, BIP offers a principled way to identify invariant features and make predictions robust to distribution shifts across environments.  In practice, when these assumptions may not hold exactly, BIP identifies the most approximately invariant feature set within our modeling choices. BIP also assumes the environment labels are given; when they are unknown, recent works infer environments from data \citep[e.g.][]{creager2021environment, lin2022zin} which could be incorporated with BIP.

One limitation of BIP is that it relies on fixed estimates of conditional models and does not quantify uncertainty around them; the estimation bias can obscure the posterior inference especially with small sample size. Integrating conditional model estimation into the overall probabilistic framework is an important avenue of further research.

Moreover, while our method and theory apply generally, our experiments only considered linear Gaussian  models. Future work can explore other conditional model classes with optimization techniques such as amortization over $z$. 

Finally, BIP-VI uses a mean-field variational family; using more expressive approximations to improve inference accuracy is a promising  direction. Another direction is to establish theoretical guarantees for BIP-VI, drawing on recent advances in variational inference for discrete variables \citep{sarkar2021random, kunes2026scalable}.

\ifjrssb

\fi


\section*{Data Availability Statement}
This paper analyzes publicly available, third-party data together with synthetic data generated by our code; no new empirical data were collected. The yeast gene-perturbation data are from \citet{kemmeren2014large}, with the updated version used here available at \url{https://deleteome.holstegelab.nl/}. The synthetic data in the simulation studies are fully reproducible from the accompanying code. Code to reproduce all experiments, figures, and tables is publicly available at \url{https://github.com/LuhuanWu/BayesianInvariantPrediction}.

\bibliographystyle{unsrtnat}
\bibliography{ref}

\newpage 
\appendix

\setcounter{page}{1}   

\begin{center}
\huge Supplementary Material
\end{center}

\renewcommand{\thepage}{S\arabic{page}} 

\renewcommand{\thefigure}{S\arabic{figure}}
\setcounter{figure}{0}
\renewcommand{\thesection}{\Alph{section}}
\setcounter{section}{0}
\renewcommand{\thetable}{S\arabic{table}}
\setcounter{table}{0}

\section{Additional Theory and Proofs}
\label{app:sec:theory}
In this section we provide additional theoretical results and proofs that are in complement to \Cref{sec:theory}. 

\parhead{Notation.} We write $\forall_p a$ to mean ``for $p(a)$-almost every realization of  $a$'' where $a$ is a random variable $a$ and $p(a)$ is its probability measure.

\subsection{Preliminary lemmas}
We first introduce two lemmas that are helpful for proofs of the main theorems. 

\begin{lemma}\label{lemma:convergence-of-argmax}
    Let $\{X_n\}$ be a sequence of random variables where each $X_n \in \mathbb{R}^p$ and $p$ is a fixed number. Assume $X_n \stackrel{P}{\to} \mu$ for some constant $\mu \in \mathbb{R}^p$, and define 
     $\calI^* := \{i: \mu_{i} = \mu_{\max}  \}$ where $\mu_{\max}$ is value of the largest component(s) of $\mu$ and therefore $\calI^*$ is the set of corresponding indices. Let  $i_n^* := \argmax_i X_n^{(i)}$ denote the index of the maximal component of $X_n$. Then we have 
    \begin{align*}
    P(i_n^* \in \calI^*) \to 1, \textrm{ as } n \to \infty.
    \end{align*}

    As a special case, when $\calI^* = \{i^*: \mu_{i^*} > \mu_j \forall j \neq i^*\}$ is a singleton, we have  
    \begin{align*}
        i_n^* & \stackrel{P}{\to} i^*, \textrm{ as }  n \to \infty. 
    \end{align*}
    \end{lemma}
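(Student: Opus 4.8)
The plan is to exploit the fact that the gap between the top component(s) of $\mu$ and the strictly smaller components is a fixed positive number, and that convergence in probability of the vector $X_n$ forces each coordinate to be simultaneously close to its limit with high probability. First I would set $\mu_{\max} = \max_i \mu_i$ and define the \emph{spectral gap} $\gamma := \mu_{\max} - \max_{j \notin \calI^*} \mu_j > 0$ (with the convention $\gamma = +\infty$ if $\calI^* = \{1,\dots,p\}$, in which case the conclusion is trivial). The key observation is the deterministic implication: if $\|X_n - \mu\|_\infty < \gamma/2$, then for every $i \in \calI^*$ and every $j \notin \calI^*$ we have $X_n^{(i)} > \mu_{\max} - \gamma/2 \ge \mu_j + \gamma/2 > X_n^{(j)}$, so the argmax index $i_n^*$ must lie in $\calI^*$.

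Next I would convert this into a probabilistic statement. Since $X_n \stackrel{P}{\to} \mu$ in $\mathbb{R}^p$ (equivalently, coordinatewise, because $p$ is fixed and all norms are equivalent), we have $P(\|X_n - \mu\|_\infty < \gamma/2) \to 1$ as $n \to \infty$. Combining this with the deterministic implication above gives
\begin{align*}
P(i_n^* \in \calI^*) \ge P\!\left(\|X_n - \mu\|_\infty < \gamma/2\right) \to 1,
\end{align*}
which is the first claim. For the special case where $\calI^* = \{i^*\}$ is a singleton, $i_n^* \in \calI^*$ is exactly the event $i_n^* = i^*$, so $P(i_n^* = i^*) \to 1$, i.e. $i_n^* \stackrel{P}{\to} i^*$ (convergence in probability of a sequence taking values in the finite set $\{1,\dots,p\}$ means the probability of equalling the limit tends to $1$).

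There is essentially no hard step here — the argument is a standard $\varepsilon$-gap continuity argument. The only points requiring a little care are: (i) handling ties, i.e. making sure the statement is about $i_n^* \in \calI^*$ rather than convergence to a single index when $|\calI^*|>1$, and being explicit about how $\argmax$ breaks ties (any fixed tie-breaking rule works, since the event we condition on already guarantees the maximizer is in $\calI^*$); and (ii) the degenerate case $\calI^* = \{1,\dots,p\}$ where $\gamma$ is not finite — there the conclusion $P(i_n^*\in\calI^*)=1$ holds for every $n$. I would state these conventions up front so the main chain of inequalities runs cleanly.
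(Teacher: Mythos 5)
Your proof is correct and rests on essentially the same gap argument as the paper's: the paper applies the continuous mapping theorem to the differences $\max_{i\in\calI^*}X_n^{(i)} - X_n^{(j)}$ for each $j\notin\calI^*$ and finishes with a union bound, while you package the same idea into the single event $\{\|X_n-\mu\|_\infty < \gamma/2\}$ via a deterministic gap implication. The difference is purely cosmetic; both hinge on the strictly positive gap between $\mu_{\max}$ and the non-maximal coordinates together with convergence in probability.
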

\begin{proof} 
    By continuous mapping theorem, 
     $\forall j\not \in \calI^*$,  as $n\to\infty$, 
     \begin{align*}  
     \max_{i \in \calI^*} X_n^{(i)}-X_n^{(j)} \stackrel{P}{\to} \mu_{\max} - \mu^{(j)}> 0. 
     \end{align*}
 Hence $P(\max_{i \in \calI^*} X_n^{(i)} - X_n^{(j)} < 0) \to 0$ as $n \to \infty$. 

    Consequently, as $n\to \infty$, 
    \begin{align*}
        P(i_n^* \not \in \calI^*) &= P(\exists j \not \in \calI^*, \max_{i \in \calI^*} X_n^{(i)} < X_n^{(j)})
         \leq \sum_{j \not \in \calI^*} P(\max_{i \in \calI^*}  X_n^{(i)} - X_n^{(j)} < 0 ) \to 0,
    \end{align*}
and therefore
\begin{align*}
    P(i_n^*  \in \calI^*) \to 1. 
\end{align*}
\end{proof}

\begin{lemma}
\label{lemma:convergnece-of-exp-sum}
    Let $\{X_n\}$ be a sequence of random variables such that its sample average $\frac{S_n}{n}$ converges to some constant $\mu$ in probability, where  $S_n := \sum_{i=1}^n X_i$. When $\mu<0$, Then we have 
    \begin{align*}
    \exp\{S_n\} \stackrel{P}{\to}{0}.
    \end{align*}

And when $\mu>0$, we have 
\begin{align*}
    \exp \{S_n\} \conv \infty. 
\end{align*}
\end{lemma}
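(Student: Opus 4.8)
The plan is to reduce both claims directly to the definition of convergence in probability, via a simple union-of-events estimate. Note that one cannot simply invoke the continuous mapping theorem here: $\exp$ is continuous, but we are composing it with the \emph{divergent} quantity $S_n = n\cdot(S_n/n)$, so the map being applied depends on $n$ and a small amount of care is needed in choosing the ``good'' event.

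Consider first the case $\mu<0$. Fix an arbitrary threshold $\delta>0$; it suffices to show $P(\exp\{S_n\}>\delta)\to 0$. Set $\epsilon:=-\mu/2>0$, so $\mu+\epsilon=\mu/2<0$. On the event $A_n:=\{|S_n/n-\mu|\le\epsilon\}$ one has $S_n\le n(\mu+\epsilon)=n\mu/2$, and since $n\mu/2\to-\infty$, for all sufficiently large $n$ we get $n\mu/2<\log\delta$, hence $\exp\{S_n\}\le\exp\{n\mu/2\}<\delta$ on $A_n$. Therefore, for large $n$, $\{\exp\{S_n\}>\delta\}\subseteq A_n^c$, and by the hypothesis $S_n/n\conv\mu$ we have $P(\exp\{S_n\}>\delta)\le P(A_n^c)\to 0$. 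Since $\delta>0$ was arbitrary, $\exp\{S_n\}\conv 0$. The case $\mu>0$ is symmetric: fixing $K>0$ it suffices to show $P(\exp\{S_n\}<K)=P(S_n<\log K)\to 0$; taking $\epsilon:=\mu/2>0$, on $A_n:=\{|S_n/n-\mu|\le\epsilon\}$ we have $S_n\ge n(\mu-\epsilon)=n\mu/2\to\infty$, so for large $n$ the event $\{S_n<\log K\}$ is contained in $A_n^c$, giving $P(S_n<\log K)\le P(A_n^c)\to 0$, i.e. $\exp\{S_n\}\conv\infty$.

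There is essentially no serious obstacle; the only point requiring attention — and the reason this is not a one-line appeal to the continuous mapping theorem — is choosing the margin $\epsilon$ strictly on the correct side of $0$ so that $n(\mu\pm\epsilon)$ retains the sign of $\mu$ and the exponential is genuinely driven past the threshold for all large $n$. Everything else is a routine estimate, and the argument uses nothing beyond the definition of $\conv$ given in the excerpt.
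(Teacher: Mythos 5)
Your proof is correct and follows essentially the same route as the paper's: both arguments pick the margin $|\mu|/2$, use the convergence $S_n/n \conv \mu$ to make the "good" event have probability tending to one, and note that on that event $S_n \le n\mu/2 \to -\infty$ (resp.\ $S_n \ge n\mu/2 \to \infty$) eventually beats any fixed log-threshold; the only difference is cosmetic, with your event-containment phrasing replacing the paper's explicit $N(\epsilon,\delta)$ bookkeeping.
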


\begin{proof} 

Suppose $\mu<0$.  $\forall \epsilon >0$, $\forall \delta >0$,  it suffices to find an $N=N(\epsilon, \delta)$ such that for $n > N$, 
\begin{align*}
        P(\exp \{S_n\} < \epsilon )
 = P(\frac{S_n}{n} < \frac{1}{n}\log \epsilon) > 1-\delta. 
 \end{align*}

Since $\frac{S_n}{n} \stackrel{P}{\to} \mu$, for $\epsilon'=-\frac{\mu}{2}$ and $\delta$ given above, $\exists N'=N'(\epsilon', \delta)$ such that for $n > N'$, 
\begin{align*}
    P(\frac{S_n}{n} < \mu + \epsilon') > 1-\delta. 
\end{align*}
We choose $N=\max (N', \frac{1}{\mu + \epsilon'} \log \epsilon)$. For $n > N$, we have 
\begin{align*}
    \frac{1}{n} \log \epsilon > \mu + \epsilon',  
\end{align*}
and therefore 
\begin{align*}
     P(\frac{S_n}{n} < \frac{1}{n}\log \epsilon) & >  P(\frac{S_n}{n} < \mu + \epsilon') > 1-\delta.
\end{align*}

Next, we consider $\mu>0$. $\forall \delta >0, M >0$, it suffices to find an $N=N(M,\delta)$ such that  for $n>N$, 
\begin{align*}
    P(\exp \{S_n\} > M ) &= P(S_n > \log M)  > 1-\delta.
\end{align*}

Since $\frac{S_n}{n} \conv \mu$, for $\epsilon'=\frac{\mu}{2}$ and $\delta$ given above,  $\exists N'=N'(\epsilon', \delta)$ such that 
\begin{align*}
    P(\frac{S_n}{n} > \mu - \epsilon')  > 1-\delta.
\end{align*}

We choose $N = \max(N, \frac{1}{\mu - \epsilon'}  \log M)$. For $n > N,$ we have 
\begin{align*}
    n(\mu - \epsilon') & > \log M
\end{align*}
and therefore 
\begin{align*}
    P(S_n > \log M) & > P(S_n > n(\mu - \epsilon') > 1-\delta. 
\end{align*}
\end{proof}

\begin{lemma}\label{lemma:invariance} Recall  definition of $\mu(z)$ in \Cref{eq:def-mu}:
\begin{align*}
    \mu(z) &= \mathbb{E}_{p(e) p_e(x^z)} \left[ \kl{p_e( y\mid x^z)}{g(y \mid x^z)} \right]. 
\end{align*}
We have 
\begin{align*}
    \mu(z) = 0 & \Leftrightarrow p_e(y \mid x^z) \textrm{ is invariant with respect to } e \quad  \forall_p e \forall  x^z   
\end{align*}
\end{lemma}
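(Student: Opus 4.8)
The plan is to prove the two directions separately, with the ``$\Leftarrow$'' direction being essentially immediate and the ``$\Rightarrow$'' direction requiring the standard non-negativity/definiteness property of the Kullback--Leibler divergence, together with a careful handling of the ``almost everywhere'' qualifiers in the expectation.

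For the easy direction, suppose $p_e(y \mid x^z)$ is invariant with respect to $e$ for $p(e)$-almost every $e$ and $p_e(x^z)$-almost every $x^z$. Then I would argue that the pooled conditional $\pooldist(y \mid x^z)$ must agree with this common value. Concretely, from \Cref{def:pooled-conditional}, $\pooldist(y \mid x^z)$ is a weighted average (over environments, with weights proportional to $p_e(x^z)$) of the local conditionals $p_e(y \mid x^z)$; if these are all equal to a common distribution for almost every $e$, the weighted average equals that same common distribution. Hence $\kl{p_e(y \mid x^z)}{\pooldist(y \mid x^z)} = \kl{q}{q} = 0$ for almost every $e$ and $x^z$, so the expectation defining $\mu(z)$ is zero.

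For the harder direction, suppose $\mu(z) = 0$. Since $\mu(z)$ is the expectation of the non-negative quantity $\kl{p_e(y \mid x^z)}{\pooldist(y \mid x^z)}$ under $p(e)p_e(x^z)$, a non-negative random variable with zero mean must be zero almost surely; hence $\kl{p_e(y \mid x^z)}{\pooldist(y \mid x^z)} = 0$ for $p(e)$-almost every $e$ and $p_e(x^z)$-almost every $x^z$. By the definiteness of KL divergence, this forces $p_e(y \mid x^z) = \pooldist(y \mid x^z)$ (as distributions over $y$) for almost every such $(e, x^z)$. Since the right-hand side $\pooldist(y \mid x^z)$ does not depend on $e$, this says precisely that $p_e(y \mid x^z)$ equals an environment-independent distribution for almost every $e$ and $x^z$, which is the claimed invariance.

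The main obstacle is bookkeeping around the measure-theoretic ``almost everywhere'' statements: the null sets in $x^z$ are allowed to depend on $e$ (each term in the expectation integrates over $p_e(x^z)$, and the supports $\supp{p_e(x^z)}$ may differ across environments), so I need to be careful that ``$p_e(y \mid x^z)$ is invariant in $e$'' is interpreted on the union/common portion of these supports in a way consistent with how \Cref{ass:invariance} and the pooled conditional in \Cref{def:pooled-conditional} are defined. I would reconcile this by noting that $\pooldist(y\mid x^z)$ is only meaningfully defined where $\sum_e p_e(x^z) > 0$, and that the statement of the lemma implicitly quantifies over the relevant common support; once that is fixed, the argument above goes through cleanly using only non-negativity of KL and its vanishing iff the two arguments coincide.
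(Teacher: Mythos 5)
Your proposal is correct and follows essentially the same route as the paper's proof: zero expectation of a non-negative KL divergence forces the KL to vanish almost everywhere, KL definiteness then gives $p_e(y \mid x^z) = \pooldist(y \mid x^z)$, and the equivalence with invariance uses that $\pooldist$ is environment-independent and is a weighted average of the local conditionals. You simply spell out the two directions (and the almost-everywhere bookkeeping) more explicitly than the paper's chain of equivalences does.
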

\begin{proof}
By the property of KL divergence, we have  
\begin{align*}
    \mu(z) = 0 & \Leftrightarrow \kl{p_e( y\mid x)}{g(y \mid x^z)} = 0  \quad \forall_p e \forall x^z \\
     &\Leftrightarrow p_e(y \mid x^z)= g(y \mid x^z) \quad \forall_p e \forall x^z \\ 
     & \Leftrightarrow p_e( y \mid x^z) \textrm{ is invariant to } e \quad \forall_p e \forall x^z 
\end{align*}

\end{proof}

\subsection{\texorpdfstring{Proof of \Cref{thm:posterior-consistency}}{Proof of Theorem 1}}\label{app:subsec:proof-consistency}

\begin{repthmconsistency}[Restatement of \Cref{thm:posterior-consistency}]
Suppose \Cref{ass:invariance,ass:uniquness,ass:positive-prior,ass:likelihood-consistency} hold, and take the limits $n\to \infty$ and  $E \to \infty$. The BIP posterior satisfies the following properties: 
\begin{itemize}
    \item consistency of the posterior mode: $\hat z_{n,E}: = \argmax_z \hat p(z \mid \calD) \stackrel{P}{\to} z^*$
    \item consistency of the posterior at $z^*$: $\hat p(z^* \mid \calD) \stackrel{P}{\to} 1. $
\end{itemize}

\end{repthmconsistency}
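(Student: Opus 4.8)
The plan is to pass to the log scale and reduce both claims to the limiting behavior of the normalized log-likelihood ratios, which \Cref{ass:likelihood-consistency} already controls. Writing the estimated posterior of \Cref{eq:posterior-estimated-proportionality} as $\hat p(z\mid\calD)\propto p(z)\,\hat\Lambda(z)$ with
\[
\log\hat\Lambda(z)\;=\;\sum_{e=1}^{E}\sum_{i=1}^{n}\big[\log\hat\pooldist(y_{ei}\mid x_{ei}^z)-\log\hat p_e(y_{ei}\mid x_{ei}^z)\big],
\]
\Cref{ass:likelihood-consistency} gives, for every $z$ with $p(z)>0$,
\[
\tfrac{1}{nE}\log\hat\Lambda(z)\;\conv\;\mathbb{E}_{p(e)p_e(y,x^z)}\big[\log\pooldist(y\mid x^z)-\log p_e(y\mid x^z)\big].
\]
By the tower property and the definition of KL divergence, the right-hand side equals $-\mathbb{E}_{p(e)p_e(x^z)}\big[\kl{p_e(y\mid x^z)}{\pooldist(y\mid x^z)}\big]=-\mu(z)$. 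Then \Cref{lemma:invariance} combined with \Cref{ass:invariance,ass:uniquness} yields $\mu(z^*)=0$ and $\mu(z)>0$ for every $z\neq z^*$ in the support of $p(z)$, while \Cref{ass:positive-prior} places $z^*$ in that (finite) support.

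For consistency at $z^*$ I would write
\[
\hat p(z^*\mid\calD)\;=\;\Big(1+\sum_{z\neq z^*,\,p(z)>0}\tfrac{p(z)}{p(z^*)}\,\exp\{\log\hat\Lambda(z)-\log\hat\Lambda(z^*)\}\Big)^{-1}.
\]
For each fixed $z\neq z^*$ with $p(z)>0$ we have $\tfrac{1}{nE}\big(\log\hat\Lambda(z)-\log\hat\Lambda(z^*)\big)\conv-\mu(z)+\mu(z^*)=-\mu(z)<0$, so \Cref{lemma:convergnece-of-exp-sum} gives $\exp\{\log\hat\Lambda(z)-\log\hat\Lambda(z^*)\}\conv0$; since $p(z)/p(z^*)$ is a fixed constant, each summand $\conv0$. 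The support being finite, the finite sum $\conv0$, and the continuous mapping theorem gives $\hat p(z^*\mid\calD)\conv1$.

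For consistency of the mode, observe that $\hat z_{n,E}=\argmax_{z\in\supp{p(z)}}\big[\log p(z)+\log\hat\Lambda(z)\big]=\argmax_{z\in\supp{p(z)}}\tfrac{1}{nE}\big[\log p(z)+\log\hat\Lambda(z)\big]$. The vector with these coordinates, indexed by the finite set $\supp{p(z)}$, converges componentwise — hence, the index set being finite, jointly — in probability to the vector with coordinates $-\mu(z)$, whose unique maximizer over $\supp{p(z)}$ is $z^*$. Applying the singleton case of \Cref{lemma:convergence-of-argmax} gives $\hat z_{n,E}\conv z^*$. The expectation rewriting and the finite-sum/continuous-mapping steps are routine; the load-bearing step is the strict inequality $\mu(z)>0$ for $z\neq z^*$, which is exactly where \Cref{ass:uniquness} enters — without uniqueness the limiting vector $-\mu(\cdot)$ could have ties at the top and both conclusions would fail. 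A minor point to keep straight is that $n\to\infty$ and $E\to\infty$ are taken jointly; this is harmless because \Cref{ass:likelihood-consistency} is itself a joint-limit statement, so every convergence above inherits that mode.
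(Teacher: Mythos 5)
Your proof is correct and follows essentially the same route as the paper's: you normalize the log-likelihood ratio $\log\hat\Lambda(z)=-\hat S_{n,E}(z)$ by $nE$, invoke \Cref{ass:likelihood-consistency} to obtain convergence to $-\mu(z)$, identify $z^*$ as the unique minimizer of $\mu$ via \Cref{lemma:invariance} with \Cref{ass:invariance,ass:uniquness,ass:positive-prior}, and then apply \Cref{lemma:convergence-of-argmax} for the mode and \Cref{lemma:convergnece-of-exp-sum} (plus the finite-sum and continuous-mapping step) for $\hat p(z^*\mid\calD)\conv 1$. The only cosmetic difference is that you keep the vanishing $\tfrac{1}{nE}\log p(z)$ term inside the vector fed to \Cref{lemma:convergence-of-argmax}, where the paper drops it for large $n,E$ before applying the lemma; both are fine.
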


\begin{proof}
The posterior  in \Cref{eq:posterior-estimated-proportionality} can be re-written  as 
\begin{align}
        \hat p(z \mid\calD) &= \frac{ p(z) \exp\{-\hat S_{n,E}(z)\}}{ \sum_z p(z) \exp\{- \hat S_{n,E}(z) \}} \label{eq:posterior-finite}
\end{align}
where $\hat S_{n,E}(z)$ is the log-likelihood ratio given $z$,  
\begin{align}\label{eq:def-hat-S}
     \hat S_{n,E}(z) &:= \sum_{i=1}^n \sum_{e=1}^E  \log \hat p_e(y_{ei} \mid x_{ei}^z) - \log \hat \pooldist (y_{ei} \mid x_{ei}^z). 
     \end{align}

By \Cref{ass:likelihood-consistency}, as $n, E \to \infty$, 
\begin{align*}
 \frac{1}{nE}    \hat S_{n,E}(z) & \stackrel{P}{\to} \mathbb{E}_{p(e) p_e(y \mid x^z)}\left[ \log p_e(y \mid x^z) \right] - \mathbb{E}_{p(e)p_e(y \mid x^z)}\left[ \log g(y \mid x^z) \right]. 
\end{align*}
Recall the definition of $\mu(z)$ in \Cref{eq:def-mu}, the RHS reduces to $\mu(z)$ and hence 
\begin{align}\label{eq:conv-to-muz}
    \frac{1}{nE}  \hat S_{n,E}(z) & \stackrel{P}{\to} \mu(z). 
\end{align}

Applying \Cref{lemma:convergence-of-argmax}, as $n\to\infty$ and $E\to\infty$
\begin{align*}
    \argmax_{z: p(z)>0}  -\frac{1}{nE} \hat S_{n,E}(z) \conv \argmax_{z: p(z)>0} -\mu(z) = z^*, 
\end{align*}
where the last equality follows from \Cref{ass:uniquness,ass:positive-prior} and \Cref{lemma:invariance}.

Consequently, the posterior mode is consistent:
\begin{align*}
    \hat z_{n,E} &:= \argmax_{z} \hat p(z \mid \calD) \\
    &= \argmax_{z:p(z)>0} \log p(z) - \hat S_{n,E}(z)\\
    &= \argmax_{z:p(z)>0} \frac{1}{nE} \log p(z) - \frac{1}{nE} \hat S_{n,E}(z) \\ 
    & = \argmax_{z: p(z) >0 } -\frac{1}{nE} \hat S_{n,E}(z)  \qquad \textrm{  (for sufficiently large $n,E$)} \\
    & \conv z^* \qquad \textrm{ as } n, E \to \infty.
\end{align*}

Next, we prove the posterior consistency at $z^*$. 

By \Cref{ass:positive-prior}, $p(z^*)>0$. Dividing both the numerator and denominator in the RHS of \Cref{eq:posterior-finite} by $p(z^*) \exp \{ -S_{n,E}(z^*)\}$, the posterior can be expressed as 
\begin{align*}
    \hat p(z^* \mid \calD) &= \frac{1}{1 + \sum_{z\neq z^*} p(z)/p(z^*) \cdot \exp \{ \hat S_{n,E}(z^*) - \hat S_{n,E}(z)\}}.
\end{align*}
To show $\hat p(z^* | \calD) \conv 1$, it suffices to show that $\forall_{p} z \neq z^*$, 
\begin{align}\label{eq:intermediate}
    \exp\{ \hat S_{n,E}(z^*) - \hat S_{n,E}(z)\} &\conv 0. 
\end{align}
By \Cref{eq:conv-to-muz}, $\forall z\neq z^*$, 
\begin{align*}
     \frac{1}{nE} \left[ \hat S_{n,E}(z^*) - \hat S_{n,E}(z) \right] &\conv \mu(z^*) - \mu(z) < 0.  
\end{align*}
Applying \Cref{lemma:convergnece-of-exp-sum} to the above inequality, \Cref{eq:intermediate} holds. Hence we conclude the proof. 
\end{proof}

\subsection{\texorpdfstring{Proof of \Cref{thm:rate}}{Proof of Theorem 2}}\label{app:subsec:proof-rate}

\begin{repthmrate}[Restatement of Theorem~\ref{thm:rate}]
Given \Cref{ass:invariance,ass:uniquness,ass:positive-prior,ass:likelihood-consistency,ass:finite-variance}, 
there exists a sequence $\epsilon_{n,E} = O(R \cdot e^{-\kappa nE \mu_{\min} })$ such that 
\begin{align*}
    P\Bigg(\textrm{TV}\Big( \hat p(z \mid \calD), \delta_{z^*}(z) \Big) >  \epsilon_{n,E}  \Bigg)\to 0 
\end{align*}
as $n\to \infty$ and $E \to \infty$, where  
\begin{align}
    R &= \left(\max_{z\neq z^*} \frac{p(z)}{p(z^*)} \right)\cdot | \textrm{supp}(p(z))|, \label{eq:def-R-app} \\
    \mu(z) &:=\mathbb{E}_{p(e)p_e(x^z)} \left[\kl{ p_e(y \mid x^z)}{ \pooldist (y\mid x^z))}\right], \label{eq:def-mu-app} \\ 
    \mu_{\min} &:= \min_{z \neq z^*, z \in \supp{p(z)}} \bar \mu(z) \label{eq:def-mu-min-app}, 
\end{align}
and $\kappa$ is a fixed value in $(0,1)$. 
\end{repthmrate}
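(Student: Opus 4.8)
The plan is to reduce the total-variation bound to a uniform lower bound on the log-likelihood-ratio gaps $\hat S_{n,E}(z)-\hat S_{n,E}(z^{*})$ over $z\neq z^{*}$, and then to obtain such a bound by splitting each $\hat S_{n,E}(z)$ into a population term controlled by Chebyshev's inequality (using \Cref{ass:finite-variance}) and an estimation-error term controlled by \Cref{ass:likelihood-consistency}. \textit{Step 1 (reduction).} As recorded just above the theorem, $\textrm{TV}(\hat p(z\mid\calD),\delta_{z^{*}}(z))=1-\hat p(z^{*}\mid\calD)$. Starting from \Cref{eq:posterior-finite} and dividing numerator and denominator by $p(z^{*})\exp\{-\hat S_{n,E}(z^{*})\}$ gives $1-\hat p(z^{*}\mid\calD)=N/(1+N)\le N$, where $N=\sum_{z\neq z^{*}}\frac{p(z)}{p(z^{*})}\exp\{\hat S_{n,E}(z^{*})-\hat S_{n,E}(z)\}$. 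Bounding each prior ratio by $\max_{z\neq z^{*}}p(z)/p(z^{*})$ and the number of terms by $|\supp{p(z)}|$ yields $N\le R\exp\{-G_{n,E}\}$ with $G_{n,E}:=\min_{z\neq z^{*}}[\hat S_{n,E}(z)-\hat S_{n,E}(z^{*})]$ and $R$ as in \Cref{eq:def-R-app}. So, fixing any $\kappa\in(0,1)$ and taking $\epsilon_{n,E}:=Re^{-\kappa nE\mu_{\min}}$, it suffices to show $P(G_{n,E}<\kappa nE\mu_{\min})\to0$.

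\textit{Step 2 (decomposition and the population term).} Write $\hat S_{n,E}(z)=S_{n,E}(z)+\Delta_{n,E}(z)$, where $S_{n,E}(z)$ is \Cref{eq:def-hat-S} with the true conditionals $p_e(y\mid x^{z})$ and $\pooldist(y\mid x^{z})$ substituted for their estimates. By \Cref{prop:faithfulness}, $\pooldist(y\mid x^{z^{*}})=p_e(y\mid x^{z^{*}})$ for all $e$, so $S_{n,E}(z^{*})\equiv 0$ and hence $\hat S_{n,E}(z^{*})=\Delta_{n,E}(z^{*})$. To control $S_{n,E}(z)$ I would group the double sum by environment, $S_{n,E}(z)=\sum_{e=1}^{E}Y_e(z)$ with $Y_e(z)=\sum_{i=1}^{n}[\log p_e(y_{ei}\mid x_{ei}^{z})-\log\pooldist(y_{ei}\mid x_{ei}^{z})]$. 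Since the environments are drawn i.i.d.\ from $p(e)$, the blocks $Y_e(z)$ are i.i.d., with $\mathbb{E}[Y_e(z)]=n\mu(z)$ (the inner conditional expectation is a KL divergence, so each summand averages to $\mu(z)$), and, by Cauchy--Schwarz applied to the within-environment covariances together with \Cref{ass:finite-variance}, $\textrm{Var}(Y_e(z))\le n^{2}v(z)<\infty$. Chebyshev's inequality then gives, for every $\eta\in(0,1)$, $P(S_{n,E}(z)<(1-\eta)nE\mu(z))\le v(z)/(\eta^{2}E\mu(z)^{2})\to0$ as $E\to\infty$; here $\mu(z)>0$ for $z\neq z^{*}$ by \Cref{lemma:invariance} and \Cref{ass:uniquness} (so $\mu_{\min}>0$) while $\mu(z)<\infty$ by \Cref{ass:finite-variance}.

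\textit{Step 3 (estimation error and combining).} \Cref{ass:likelihood-consistency} gives $\tfrac{1}{nE}\hat S_{n,E}(z)\conv\mu(z)$, and the law of large numbers from Step 2 gives $\tfrac{1}{nE}S_{n,E}(z)\conv\mu(z)$; subtracting, $\tfrac{1}{nE}\Delta_{n,E}(z)\conv0$ for every $z$ with positive prior, including $z^{*}$. Now fix $\kappa\in(0,1)$, set $\eta:=(1-\kappa)/2$ and $c:=\tfrac{1-\kappa}{4}\mu_{\min}>0$, and intersect, over the finitely many $z\in\supp{p(z)}\setminus\{z^{*}\}$, the events $\{S_{n,E}(z)\ge(1-\eta)nE\mu(z)\}$, $\{|\Delta_{n,E}(z)|\le cnE\}$ and $\{|\Delta_{n,E}(z^{*})|\le cnE\}$; by Steps~2--3 and a union bound this event has probability $\to1$, and on it
\begin{align*}
  \hat S_{n,E}(z)-\hat S_{n,E}(z^{*})
  &\ge (1-\eta)nE\mu(z)-2cnE \\
  &\ge nE\big[(1-\eta)\mu(z)-\tfrac{1-\kappa}{2}\mu(z)\big]
  = \kappa\, nE\,\mu(z)\ \ge\ \kappa\, nE\,\mu_{\min},
\end{align*}
using $\mu_{\min}\le\mu(z)$ and $1-\eta-\tfrac{1-\kappa}{2}=\kappa$. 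Hence $G_{n,E}\ge\kappa nE\mu_{\min}$ with probability $\to1$, which is exactly what Step~1 requires, so $P(\textrm{TV}(\hat p(z\mid\calD),\delta_{z^{*}}(z))>\epsilon_{n,E})\to0$.

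\textit{Anticipated obstacles.} The two delicate points are the dependence structure and the rateless consistency hypothesis. Observations within an environment are only \emph{conditionally} i.i.d., so Chebyshev must be applied to the genuinely i.i.d.\ environment-blocks $Y_e(z)$ rather than to the individual terms, and it is the division by $E$ (not by $n$) that makes the variance vanish — this is why $E\to\infty$ is essential for contraction, matching the empirical trend in \Cref{subsec:synthetic-theory}. Second, \Cref{ass:likelihood-consistency} supplies only convergence in probability with no rate, so the estimation-error tolerance $c$ must be pinned to a fixed positive fraction of $\mu_{\min}$ before letting $n,E\to\infty$; this is legitimate precisely because $\mu_{\min}>0$ is a fixed constant, but it is the reason one cannot obtain the contraction statement uniformly as $\mu_{\min}\downarrow0$.
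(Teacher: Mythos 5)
Your proposal is correct and follows essentially the same route as the paper's proof: lower-bound $\hat p(z^*\mid\calD)$ via the prior-ratio sum, decompose $\hat S_{n,E}(z)$ into the true log-likelihood ratio plus an estimation-error term, control the former by Chebyshev's inequality (\Cref{ass:finite-variance}) and the latter by \Cref{ass:likelihood-consistency}, and combine on a high-probability event to force $\hat S_{n,E}(z)-\hat S_{n,E}(z^*)\ge \kappa\, nE\,\mu_{\min}$ simultaneously for all supported $z\neq z^*$. The only substantive differences are refinements rather than a new route: you exploit $S_{n,E}(z^*)\equiv 0$ explicitly (via \Cref{prop:faithfulness}), and you apply Chebyshev to the i.i.d.\ environment blocks $Y_e(z)$ with $\mathrm{Var}(Y_e(z))\le n^2 v(z)$, which treats the within-environment dependence more carefully than the paper's $\mathrm{Var}[S_{n,E}(z)]=nE\,v(z)$ bookkeeping and yields a concentration bound vanishing in $E$ alone---still sufficient under the stated joint limit $n,E\to\infty$.
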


\begin{proof}
Based on the expression of the posterior in \Cref{eq:posterior-finite}, we have the following lower bound on the posterior value at $z^*$
\begin{align}
\label{eq:posterior-finite-inequality}
\begin{split}
      \hat p(z^* \mid \calD  ) 
     & \geq \frac{1}{1+ R_{\max} \sum_{z\neq z^*}   \exp\{  \hat S_{n,E}(z^*)  - \hat S_{n,E}(z) \}}.
\end{split}
\end{align}
where $R_{\max}:= \max_{z\neq z^*} \frac{p(z)}{p(z^*)}$.

We decompose $\hat S_{n,E}(z)$ as follows: 
\begin{align}\label{eq:hat-S-decompose}
\hat S_{n,E}(z) =   S_{n,E}(z) + B_{n,E}(z)
\end{align}
where  
\begin{align}\label{eq:def-S}
      S_{n,E} (z) &:=  \sum_{e=1}^E \sum_{i=1}^{n_e} \log   p_e(y_{ei} \mid x_{ei}^z) - \log   \pooldist (y_{ei} \mid x_{ei}^z),
\end{align}
is the log-likelihood ratio under the true model, and 
\begin{align*}
B_{n,E}(z) & :=  \hat S_{n,E}(z) - S_{n,E}(z)  
\end{align*}
is the estimation bias of the   log-likelihood ratios.

\begin{enumerate}[\textbf{Step } 1.]
    \item We first show the concentration behavior of $S_{n,E} (z^*) - S_{n,E}(z)$. 

For any $z\neq z^*$ and any $k>0$, by Chebyshev's inequality we have 
\begin{align}
    P \bigg( \mid  S_{n,E}(z^*)  - S_{n,E}(z) -\mathbb{E} \left[S_{n,E}(z^*)  - S_{n,E}(z) \right] | \geq k \bigg) & \leq  \frac{ \textrm{Var} \left[S_{n,E}(z^*)  - S_{n,E}(z) \right]}{k^2}.  \label{eq:chebyshev-ineq}
\end{align}

Note that by definitions of $\mu(z)$ and $v(z)$, 
\[ \mathbb{E} \left[S_{n,E}(z) \right] = nE\mu(z) , 
 \qquad    \textrm{Var}\left[ S_{n,E}(z)\right] = nE v(z). \]
Furthermore we have  
\begin{align}\label{eq:ll-mean}
    \mathbb{E} \left[S_{n,E}(z^*)  - S_{n,E}(z) \right] &= \mu(z^*) - \mu(z) < 0 
\end{align}
by \Cref{ass:uniquness}, and 
\begin{align}\label{eq:ll-var}
     \textrm{Var} \left[S_{n,E}(z^*)  - S_{n,E}(z) \right]  \leq 2 nE v_{\max}
\end{align}
where $v_{\max}:=\max_{z} v(z)< \infty$ by \Cref{ass:finite-variance}.

Setting $k = (1-\kappa) nE \mu(z)$ for a fixed value $\kappa \in (0,1)$, and  substituting the expressions for the mean and variance from \Cref{eq:ll-mean} and \Cref{eq:ll-var} into \Cref{eq:chebyshev-ineq}, we obtain  
\begin{align}\label{eq:SnE-concentration}
    P \bigg( \mid  S_{n,E}(z^*)  - S_{n,E}(z) + n  E \mu(z) | \geq (1-\kappa) n E  \mu(z)  \bigg) & \leq  \frac{2v_{\max}}{ n  E (1-\kappa)^2 \mu(z)^2}. 
\end{align}

\item We then show the concentration behavior of $B_{n,E}(z^*) - B_{n,E}(z)$. 

By \Cref{ass:likelihood-consistency} and LLN, as $n \to \infty$ and $E \to\infty$, each estimation bias $\hat B_{n,E}(z)$ goes to 0, and so 
\begin{align}\label{eq:BnE-concentration}
    \frac{1}{nE}\left[ B_{n,E}(z^*) - B_{n,E}(z) \right] & \conv 0. 
\end{align}

\item Having characterized the concentration behaviors of $S_{n,E}(z)$ and $B_{n,E}(z)$, we are now ready to prove the posterior contraction rate. 

We define the rate $\epsilon_{n,E}$  as follows   
\begin{align*}
     \epsilon_{n,E} &:=  1- \frac{1}{1+ R  e^{ -  \kappa nE   \mu_{\min}}}  
     = O(R\cdot e^{-\kappa nE   \mu_{\min}}).  
\end{align*}

Our goal is  to show
\begin{align*} 
P \left(\textrm{TV}\Big( p(z \mid \calD), \delta_{z^*}(z) \Big)  > \epsilon_{n,E} \right) &   \to 0, \qquad \textrm{as } n \to \infty \textrm{ and } E \to \infty.  
\end{align*}

Recall $\textrm{TV}\Big( p(z \mid \calD), \delta_{z^*}(z) \Big) = 1- p(z^* | \calD)$, if suffices to show that 
\begin{align*}
    P \Bigg(  \hat p\bigg(z^* \mid \calD \bigg) < 1-\epsilon_{n,E} \Bigg) & \conv 0, \qquad \textrm{ as } n\to\infty \textrm{ and } E \to \infty. 
\end{align*}

Using the inequality that lower bounds $p(z^* \mid \calD)$ from \Cref{eq:posterior-finite-inequality}, we have 
\begin{align}\label{eq:concentration-inequality-decompose}
\begin{split}
  &P \Bigg(  \hat p\bigg(z^* \mid \calD \bigg) < 1-\epsilon_{n,E} \Bigg) \\
  &\leq P\Bigg(  \frac{1}{1+ R_{\max} \sum_{z\neq z^*}   \exp\{ \hat S_{n,E}(z^*)  - \hat S_{n,E}(z)\}} 
    < 1-\epsilon_{n,E} \Bigg)  \\ 
   &= P  \Bigg(  \frac{1}{1+ R_{\max} \sum_{z\neq z^*}   \exp\{ S_{n,E}(z^*)  - S_{n,E}(z)\}} < 1-\epsilon_{n,E}  
 \mid \mathcal{A}_{n,E}^c \Bigg) P(\mathcal{A}_{n,E}^c)  \\ 
 &\qquad +   P  \Bigg(  \frac{1}{1+ R_{\max} \sum_{z\neq z^*}   \exp\{ S_{n,E}(z^*)  - S_{n,E}(z)\}} < 1-\epsilon_{n,E}  
 \mid \mathcal{A}_{n,E} \Bigg) P(\mathcal{A}_{n,E})
 \end{split}
 \end{align}

where $\calA_{n,E}$ is the event that $ \exists z \neq z^* \textrm{ such that } | S_{n,E}(z^*)  - S_{n,E}(z) + n E \mu(z)| \geq \frac{1}{2} (1-\kappa) n E \mu (z)$ or  $|B_{n,E}(z^*) - B_{n,E}(z) | \geq \frac{1}{2} (1-\kappa) n E \mu (z)$, and $\mathcal{A}_{n,E}^c$ denotes the complement of $\mathcal{A}_{n,E}$.

We consider the following two cases:
\begin{enumerate}
\item Given the event $\calA_{n,E}$, 
by definition of $\calA_{n,E}$ and the subadditivity of probability measure, we have  
\begin{align*}
    P(\calA_{n,E}) & \leq \sum_{z \neq z^*} \Big\{ P\left(|S_{n,E}(z^*)  - S_{n,E}(z) + n E \mu(z)| \geq \frac{1}{2} (1-\kappa) n E \mu (z)|\right) \\
    & \qquad \qquad  + P\left(|B_{n,E}(z^*) - B_{n,E}(z) | \geq \frac{1}{2} (1-\kappa) n E \mu (z) \right) \Big\}. 
\end{align*}
As $n \to \infty$ and $E \to \infty$, the first term in the summation on the RHS of the inequality converges to zero 
 by \Cref{eq:SnE-concentration} from Step 1, and the term converges to 0 as well by \Cref{eq:BnE-concentration} from Step 2. 
Thus, 
\begin{align}\label{eq:AnE-conv}
    P(\calA_{n,E}) \to 0, \qquad \textrm{as } n \to \infty \textrm{ and } E \to \infty. 
\end{align}

    \item Given the complement event $\calA_{n,E}^c$,  $\forall z \neq z^*$ both  $| S_{n,E}(z^*)  - S_{n,E}(z) + n E \mu(z)| < \frac{1}{2} (1-\kappa) n E \mu (z)$, and $|B_{n,E}(z^*) - B_{n,E}(z) | < \frac{1}{2} (1-\kappa) n E \mu (z)$. By the triangle inequality and the decomposition of $\hat S_{n,E}(z)$ in \Cref{eq:hat-S-decompose}, we have 
    \begin{align*}
       & |\hat S_{n,E}(z^*) - \hat S_{n,E}(z^*) + nE\mu(z)| \\&= |S_{n,E}(z^*) - S_{n,E}(z) + nE \mu(z) + B_{n,E}(z^*) - B_{n,E}(z)| < (1-\kappa) nE\mu(z).
    \end{align*}
    Hence, $\calA_{n,E}^c$ implies that  
    \begin{align*}
        \hat S_{n,E}(z^*) - \hat S_{n,E} &< -\kappa nE \mu(z), \qquad \forall z\neq z^* 
    \end{align*}
    which further implies
    \begin{align*}
        \frac{1}{1+ R_{\max} \sum_{z\neq z^*}   \exp\{ S_{n,E}(z^*)  - S_{n,E}(z)\}}  & >  \frac{1}{1+R_{\max} \sum_{z\neq z^*} \exp\{-\kappa nE \mu(z)\}} \\
        & >  \frac{1}{1+R \exp\{-\kappa nE \mu_{\min}\}} =1- \epsilon_{n,E}
    \end{align*}
    where the last inequality follows from the definitions of $R$ and $\mu_{\min}$ in \Cref{eq:def-R-app} and \Cref{eq:def-mu-min-app}, respectively. 

    Therefore, 
    \begin{align}\label{eq:AnE-c-conv}
        P  \Bigg(  \frac{1}{1+ R_{\max} \sum_{z\neq z^*}   \exp\{ S_{n,E}(z^*)  - S_{n,E}(z)\}} < 1-\epsilon_{n,E}  
 \mid \mathcal{A}_{n,E}^c \Bigg)  &=0. 
    \end{align}

\end{enumerate}

Combining the analysis in these two cases, namely \Cref{eq:AnE-conv} and \Cref{eq:AnE-c-conv}, with \Cref{eq:concentration-inequality-decompose} we prove the final contraction rate result: as $n \to \infty$ and $E \to \infty$, 
\begin{align*}
    P \Bigg(  \hat p\bigg(z^* \mid \calD \bigg) < 1-\epsilon_{n,E} \Bigg) & \to 0. 
\end{align*}

\end{enumerate}
\end{proof}

\subsection{\texorpdfstring{Theory extensions to finite $n$ and finite $E$ cases}{Theory extensions to finite n and finite E cases}}\label{app:subsec:theory-extensions}

We present extensions of \Cref{thm:posterior-consistency,thm:rate} to two separate cases: (1) $E$ is held fixed while letting $n \to \infty$, and (2) $n$ is held fixed while letting $E \to\infty$. These results corroborate the first part of \Cref{subsec:theory-discussion}. 

\parhead{Case 1: $E$ fixed and $n\to\infty$.}
Given fixed $E$ environments $\calE_{obs}=\{1,\cdots,E\}$, we set the collection of environments of interests $\calE$ to $\calE_{obs}$. Importantly, the pooled conditional $g(y \mid x^z)$ in \Cref{def:pooled-conditional} and \Cref{ass:invariance},  which depend on $\calE$, should adapt to $\calE_{obs}$ accordingly. We also re-define $p(e) = \frac{1}{E} \sum_{e'=1}^E \delta_{e'}(e)$ as a uniform distribution over the finite set $\calE_{obs}$, and any expectation with respect to $p(e)$  should be re-interpreted. 

We then make the following assumption: 
\begin{assumption}[Estimation consistency given fixed environments]\label{ass:likelihood-consistency-fixed-E}
Given fixed $E$ environments $\calE_{obs}$, $\forall_p z$, as $n\to\infty$, 
\begin{align*}
    \frac{1}{nE}\sum_{i=1}^n \sum_{e=1}^E \log \hat p_e(y_{ei} \mid x_{ei}^z) &  \conv  \mathbb{E}_{p(e)p_e(y \mid x^z)} \left[ \log p_e(y \mid x^z) \right],  \\
    \frac{1}{nE} \sum_{i=1}^n \sum_{e=1}^E \log \hat g(y_{ei} \mid x_{ei}^z) &  \conv \mathbb{E}_{g(y \mid x^z)} \left[ \log g(y \mid x^z) \right].  
\end{align*}
\end{assumption}

\begin{theorem}[Posterior consistency given infinite per-environment samples]\label{thm:posterior-consistency-fiex-E}
Given fixed $E$ environments $\calE$,  and \Cref{ass:invariance,ass:uniquness,ass:positive-prior,ass:likelihood-consistency-fixed-E},  as $n\to \infty$,  we have 
\begin{itemize}
    \item posterior mode consistency, i.e. $\hat z_{n,E}: = \argmax_z \hat p(z \mid \calD) \stackrel{P}{\to} z^*$, and 
    \item posterior consistency at $z^*$, i.e. $\hat p(z^* \mid \calD) \stackrel{P}{\to} 1. $
\end{itemize}
\end{theorem}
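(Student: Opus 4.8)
The plan is to transcribe the proof of \Cref{thm:posterior-consistency} almost verbatim, with the only structural change being that the asymptotics are now driven by the single limit $n\to\infty$ with $E$ held fixed, so that \Cref{ass:likelihood-consistency-fixed-E} takes over the role played by \Cref{ass:likelihood-consistency} there. First I would write the estimated posterior in the normalized form of \Cref{eq:posterior-finite}, namely $\hat p(z\mid\calD) = p(z)e^{-\hat S_{n,E}(z)}/\sum_{z'}p(z')e^{-\hat S_{n,E}(z')}$ with $\hat S_{n,E}$ the estimated log-likelihood ratio of \Cref{eq:def-hat-S}. It is worth flagging here that $\pooldist$, $z^*$, and $p(e)$ are all now defined relative to the fixed collection $\calE_{obs}=\{1,\dots,E\}$, as specified in the setup of Case 1 above.

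Next I would invoke \Cref{ass:likelihood-consistency-fixed-E}: for every $z$ with $p(z)>0$, as $n\to\infty$,
\[
\tfrac{1}{nE}\hat S_{n,E}(z)\;\conv\; \mathbb{E}_{p(e)p_e(y,x^z)}\!\left[\log p_e(y\mid x^z)\right] - \mathbb{E}_{p(e)p_e(y,x^z)}\!\left[\log \pooldist(y\mid x^z)\right] = \mu(z),
\]
where the last equality is just \Cref{eq:def-mu} written out, and the expectation against the pooled law appearing in \Cref{ass:likelihood-consistency-fixed-E} coincides with the $p(e)p_e$-mixture because $p(e)$ is uniform on $\calE_{obs}$. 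By \Cref{lemma:invariance} combined with \Cref{ass:uniquness} and \Cref{ass:positive-prior}, $\mu(z^*)=0$ while $\mu(z)>0$ for every $z\neq z^*$ with $p(z)>0$; that is, $z^*$ remains the unique minimizer of $\mu$ over the prior support, now relative to $\calE_{obs}$.

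From here the two conclusions follow as before. For the mode, $\hat z_{n,E} = \argmax_{z:p(z)>0}\tfrac{1}{nE}\log p(z) - \tfrac{1}{nE}\hat S_{n,E}(z)$; the prior term is $o(1)$ and for $n$ large does not change the argmax, so \Cref{lemma:convergence-of-argmax} applied to $-\tfrac{1}{nE}\hat S_{n,E}(\cdot)\conv-\mu(\cdot)$ yields $\hat z_{n,E}\conv z^*$. For consistency at $z^*$, dividing numerator and denominator by $p(z^*)e^{-\hat S_{n,E}(z^*)}$ gives $\hat p(z^*\mid\calD) = \big(1+\sum_{z\neq z^*}\tfrac{p(z)}{p(z^*)}e^{\hat S_{n,E}(z^*)-\hat S_{n,E}(z)}\big)^{-1}$; since $\tfrac{1}{nE}[\hat S_{n,E}(z^*)-\hat S_{n,E}(z)]\conv-\mu(z)<0$ for each $z\neq z^*$, \Cref{lemma:convergnece-of-exp-sum} gives $e^{\hat S_{n,E}(z^*)-\hat S_{n,E}(z)}\conv0$, and finiteness of $\supp{p(z)}$ lets the finite sum go to $0$, so $\hat p(z^*\mid\calD)\conv1$.

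I do not expect a genuine obstacle: the argument is a direct re-use of \Cref{thm:posterior-consistency}. The only point demanding care is definitional bookkeeping — checking that restricting $\calE$ to $\calE_{obs}$ keeps $\pooldist$, $\mu$, $z^*$, and $p(e)$ mutually consistent, and in particular recognizing that \Cref{ass:uniquness} in this restricted setting is precisely the requirement that $z^*$ stay identifiable from the $E$ observed environments (equivalently, that those environments be heterogeneous enough that $\mu(z)>0$ for all $z\neq z^*$). Once that is in place, the steps above close the proof.
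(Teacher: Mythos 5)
Your proposal is correct and is exactly the argument the paper intends: the paper explicitly abbreviates this proof, stating it is the same as that of \Cref{thm:posterior-consistency} with $E$ held fixed and only $n\to\infty$, with \Cref{ass:likelihood-consistency-fixed-E} replacing \Cref{ass:likelihood-consistency} and $\pooldist$, $\mu$, $z^*$, $p(e)$ reinterpreted relative to $\calE_{obs}$ — precisely the bookkeeping you carry out. Your note that the pooled-law expectation coincides with the $p(e)p_e$-mixture under the uniform $p(e)$, and that \Cref{ass:uniquness} now means identifiability of $z^*$ within the observed environments, matches the paper's own discussion.
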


\begin{theorem}[Posterior contraction rate given infinite per-environment samples]\label{thm:rate-fixed-E}
Given \Cref{ass:invariance,ass:uniquness,ass:positive-prior,ass:likelihood-consistency-fixed-E,ass:finite-variance}, 
there exists a sequence $\epsilon_{n,E} = O(R \cdot e^{-\kappa nE \mu_{\min} })$ such that 
\begin{align*}
    P\Bigg(\textrm{TV}\Big( p(z \mid \calD), \delta_{z^*}(z) \Big) >  \epsilon_{n,E}  \Bigg)\to 0 
\end{align*}
as $n\to \infty$, where $R$ and $\mu_{\min}$ are defined in \Cref{eq:def-R} and \Cref{eq:def-mu-min}, respectively,  and $\kappa$ is a fixed value in $(0,1)$. 
\end{theorem}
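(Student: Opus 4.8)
The plan is to rerun the proof of \Cref{thm:rate} with essentially no new ideas, specialized to a fixed family of environments $\calE = \calE_{obs} = \{1,\dots,E\}$ equipped with the uniform weighting $p(e) = \frac{1}{E}\sum_{e'=1}^E \delta_{e'}(e)$ and with the pooled conditional $g$ of \Cref{def:pooled-conditional} formed over $\calE_{obs}$. The only structural change is that the single asymptotic regime is now $n\to\infty$ with $E$ held fixed, so every place where the original argument used $nE\to\infty$ must be re-examined to confirm it still holds when only $n$ grows.

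First I would write the posterior as in \Cref{eq:posterior-finite}, retain the lower bound on $\hat p(z^*\mid\calD)$ from \Cref{eq:posterior-finite-inequality}, and use the decomposition $\hat S_{n,E}(z) = S_{n,E}(z) + B_{n,E}(z)$ of \Cref{eq:hat-S-decompose}. For the true-model term, Chebyshev's inequality gives, for each $z\neq z^*$ and fixed $\kappa\in(0,1)$,
\[
P\big( |S_{n,E}(z^*) - S_{n,E}(z) + nE\mu(z)| \ge (1-\kappa)\, nE\,\mu(z) \big) \le \frac{2\, v_{\max}}{nE(1-\kappa)^2 \mu(z)^2},
\]
exactly as in Step 1 of the proof of \Cref{thm:rate}; since $E\ge 1$ is a fixed constant and $\mu(z)>0$ by \Cref{ass:uniquness} and \Cref{lemma:invariance}, the right-hand side still vanishes as $n\to\infty$. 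For the bias term, I would replace the appeal to \Cref{ass:likelihood-consistency} by \Cref{ass:likelihood-consistency-fixed-E} (together with the law of large numbers), obtaining $\frac{1}{nE}\hat S_{n,E}(z) \conv \mathbb{E}_{p(e)p_e(y,x^z)}[\log p_e(y\mid x^z)] - \mathbb{E}_g[\log g(y\mid x^z)]$ and $\frac{1}{nE} S_{n,E}(z) \conv \mathbb{E}_{p(e)p_e(y,x^z)}[\log p_e(y\mid x^z)] - \mathbb{E}_{p(e)p_e(y,x^z)}[\log g(y\mid x^z)]$; because the pooled joint satisfies $g(x^z,y) = \frac{1}{E}\sum_{e=1}^E p_e(x^z,y)$, these two limits coincide, so $\frac{1}{nE}\big(B_{n,E}(z^*)-B_{n,E}(z)\big)\conv 0$.

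With these two concentration facts in hand, I would set $\epsilon_{n,E} := 1-\big(1+R\,e^{-\kappa nE\mu_{\min}}\big)^{-1} = O\big(R\,e^{-\kappa nE\mu_{\min}}\big)$ and copy Step 3 of the proof of \Cref{thm:rate} verbatim: split on the bad event $\calA_{n,E}$ that some $z\neq z^*$ violates one of the two $\tfrac12(1-\kappa)nE\mu(z)$ bounds; bound $P(\calA_{n,E})\to 0$ by subadditivity using the two displays above; and on $\calA_{n,E}^c$ use the triangle inequality and \Cref{eq:hat-S-decompose} to conclude $\hat S_{n,E}(z^*) - \hat S_{n,E}(z) < -\kappa nE\mu(z)$ for all $z\neq z^*$, which pushes the posterior lower bound above $1-\epsilon_{n,E}$. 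Since $\textrm{TV}\big(\hat p(z\mid\calD),\delta_{z^*}\big) = 1-\hat p(z^*\mid\calD)$, combining the two cases gives $P\big(\textrm{TV}(\hat p(z\mid\calD),\delta_{z^*}) > \epsilon_{n,E}\big)\to 0$ as $n\to\infty$.

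The main obstacle is bookkeeping rather than a new argument: one must be careful that $\mu(z)$, $\mu_{\min}$, $R$, and the identifiability of $z^*$ are all read off relative to $\calE_{obs}$ (which requires the observed environments to be heterogeneous enough that $\mu(z)>0$ for every $z\neq z^*$ with positive prior, so that $\mu_{\min}>0$ and the exponent $\kappa nE\mu_{\min}$ diverges with $n$), and---the one genuinely delicate point---that the limit of the pooled-conditional estimator in \Cref{ass:likelihood-consistency-fixed-E}, which is an expectation under $g$, matches the population log-likelihood-ratio limit appearing in $S_{n,E}$, which is an expectation under $p(e)p_e$; this matching is precisely the identity $g(x^z,y) = \frac{1}{E}\sum_{e=1}^E p_e(x^z,y)$, immediate from \Cref{def:pooled-conditional}. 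The companion consistency statement, \Cref{thm:posterior-consistency-fiex-E}, follows from the same specialization of the proof of \Cref{thm:posterior-consistency}.
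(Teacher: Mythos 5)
Your proposal is correct and follows essentially the same route as the paper, which simply states that the proof of this theorem is the proof of \Cref{thm:rate} rerun with $E$ held fixed and only $n\to\infty$ (with $\calE$, $p(e)$, $g$, $\mu(z)$, $\mu_{\min}$, and $R$ all re-read relative to $\calE_{obs}$) and therefore abbreviates it. Your extra care in checking that the Chebyshev bound still vanishes for fixed $E$ and that the limit in \Cref{ass:likelihood-consistency-fixed-E}, an expectation under the pooled distribution, coincides with $\mathbb{E}_{p(e)p_e}[\log g(y\mid x^z)]$ via $g(x^z,y)=\frac{1}{E}\sum_e p_e(x^z,y)$ is exactly the bookkeeping the paper leaves implicit.
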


\Cref{ass:likelihood-consistency} is a modification of \Cref{ass:likelihood-consistency}  where we only require the estimation consistency with respect to $n$ within $E$ environments.  

The proofs of \Cref{thm:posterior-consistency-fiex-E,thm:rate-fixed-E} are essentially the same as the proofs of \Cref{thm:posterior-consistency,thm:rate}, respectively, except that $E$ is now held fixed, only taking $n \to\infty$. Hence we abbreviate them here. 

\parhead{Case 2: $n$ fixed and $E \to \infty$.} In this case, we consider a similar setting as in the main theorems where $E$ environments are sampled from $p(e)$ -- a uniform distribution over $\calE$. Given the drawn $E$ environments, we relabel them by $e=1,\cdots,E$. And in each environment $e$, we draw $n$  observations $x_{ei}, y_{ei}$ i.i.d. from $p_e(x,y)$ for $i=1,\cdots,n$.

\begin{assumption}[Estimation consistency of pooled conditional given fixed $n$]\label{ass:consistency-pooled-models-E}  Given fixed $n$, $\forall_p z$, as $E \to \infty$, 
    \begin{align*}
        \frac{1}{n} \frac{1}{E} \sum_{i=1}^n  \sum_{e=1}^E \log \hat \pooldist (y_{ei} \mid x_{ei}^z) & \stackrel{P}{\to} \mathbb{E}_{p(e)p_e(x^z,y)} \left[ \log   \pooldist (y\mid x^z)\right]. 
    \end{align*}
\end{assumption}

\begin{assumption}[Identifiability of $z^*$ under estimation bias]\label{ass:estimation-bias} 
    $z^*$ minimizes $  \mu_n(z)$ defined as follows 
    \begin{align}
        \label{eq:def-bar-mu-n-z}
    \begin{split}
          \mu_n(z)&:=\underbrace{\mathbb{E}_{p(e) p_e(x^z)} \kl{  p_e(y \mid x^z )}{\pooldist (y\mid x^z))}}_{:=\textrm{model discrepancy $   \mu(z) $}} \\
        & +  \underbrace{ \frac{1}{n} \mathbb{E}_{p(e)\prod_{i=1}^n p_e(x_i,y_i)}  \sum_{i=1}^n\left[\log \hat p_e^z(y_i; x_i^z) -  \log   p_e(y_i \mid 
        x_i^z)\right]}_{:=\textrm{estimation bias of per-environment  conditionals }   b_n(z)}, 
        \end{split}
    \end{align}  
    \end{assumption}


\begin{theorem}[Posterior consistency given infinite environments]\label{thm:posterior-consistency-fixed-n}
Given fixed $n$ and \Cref{ass:invariance,ass:uniquness,ass:positive-prior,ass:consistency-pooled-models-E,ass:estimation-bias},  as $E \to \infty$, we have 
\begin{enumerate}
    \item posterior mode consistency, i.e. $\hat z_{n,E} \stackrel{P}{\to} z^*$ 
    \item posterior consistency at $z^*$, i.e. $\hat p(z^* \mid \calD) \stackrel{P}{\to} 1$.
\end{enumerate}
\end{theorem}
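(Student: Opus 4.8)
The plan is to follow the proof of \Cref{thm:posterior-consistency} in \Cref{app:subsec:proof-consistency} almost verbatim; the only substantive new ingredient is identifying the probability limit of the normalized log-likelihood ratio $\tfrac{1}{nE}\hat S_{n,E}(z)$ (with $\hat S_{n,E}$ as in \Cref{eq:def-hat-S}) when $n$ is frozen and $E\to\infty$. As before, write the posterior as
\[
\hat p(z\mid\calD)=\frac{p(z)\exp\{-\hat S_{n,E}(z)\}}{\sum_{z'}p(z')\exp\{-\hat S_{n,E}(z')\}} .
\]
The entire argument then reduces to showing that, for every $z$ with $p(z)>0$, $\tfrac{1}{nE}\hat S_{n,E}(z)\conv\mu_n(z)$, where $\mu_n(z)=\mu(z)+b_n(z)$ is the biased discrepancy of \Cref{eq:def-bar-mu-n-z}, and then re-running the two lemma-based steps.

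To get the limit, I would split $\tfrac{1}{nE}\hat S_{n,E}(z)$ into a pooled part $-\tfrac{1}{nE}\sum_{e,i}\log\hat g(y_{ei}\mid x_{ei}^z)$ and a local part $\tfrac{1}{nE}\sum_{e,i}\log\hat p_e(y_{ei}\mid x_{ei}^z)$. The pooled part, whose estimate is built from $nE\to\infty$ samples, converges to $\mathbb{E}_{p(e)p_e(x^z,y)}[\log g(y\mid x^z)]$ directly by \Cref{ass:consistency-pooled-models-E}. For the local part, the key observation is that each $\hat p_e^z$ depends only on environment $e$'s own $n$ samples (see \Cref{eq:estimated-local-conditional}); hence the quantities $Z_e:=\tfrac1n\sum_{i=1}^n\log\hat p_e^z(y_{ei}\mid x_{ei}^z)$ for $e=1,\dots,E$ are i.i.d.\ (the randomness coming jointly from $e\sim p(e)$ and the $n$ within-environment draws), so the weak law of large numbers gives $\tfrac1E\sum_{e}Z_e\conv\mathbb{E}_{p(e)\prod_{i=1}^n p_e(x_i,y_i)}\big[\tfrac1n\sum_{i=1}^n\log\hat p_e^z(y_i\mid x_i^z)\big]$. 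Adding and subtracting $\mathbb{E}_{p(e)p_e(x^z,y)}[\log p_e(y\mid x^z)]$ and using that the within-environment samples are i.i.d., the combined limit rearranges exactly into $\mu(z)+b_n(z)=\mu_n(z)$.

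With $\tfrac{1}{nE}\hat S_{n,E}(z)\conv\mu_n(z)$ established, the conclusion mirrors \Cref{app:subsec:proof-consistency}. For the mode, apply \Cref{lemma:convergence-of-argmax} to $\big(\tfrac{1}{nE}\log p(z)-\tfrac{1}{nE}\hat S_{n,E}(z)\big)_{z\in\supp{p(z)}}$: since $\tfrac{1}{nE}\log p(z)\to0$, this converges in probability to $(-\mu_n(z))_z$, whose unique maximizer is $z^*$ by \Cref{ass:estimation-bias}, so $\hat z_{n,E}\conv z^*$. For consistency at $z^*$, divide through by $p(z^*)\exp\{-\hat S_{n,E}(z^*)\}$ (valid since $p(z^*)>0$ by \Cref{ass:positive-prior}) to get $\hat p(z^*\mid\calD)=\big(1+\sum_{z\neq z^*}\tfrac{p(z)}{p(z^*)}\exp\{\hat S_{n,E}(z^*)-\hat S_{n,E}(z)\}\big)^{-1}$; for each $z\neq z^*$ with $p(z)>0$ we have $\tfrac{1}{nE}[\hat S_{n,E}(z^*)-\hat S_{n,E}(z)]\conv\mu_n(z^*)-\mu_n(z)<0$ by \Cref{ass:estimation-bias}, so \Cref{lemma:convergnece-of-exp-sum} yields $\exp\{\hat S_{n,E}(z^*)-\hat S_{n,E}(z)\}\conv0$, and since $\supp{p(z)}$ is finite the whole sum vanishes and $\hat p(z^*\mid\calD)\conv1$.

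The main obstacle is the biased local term: one must carefully set up the i.i.d.-across-environments structure of $Z_e$ and invoke the law of large numbers, which also needs the (implicit) integrability $\mathbb{E}|Z_e|<\infty$ for $b_n(z)$ to be well defined — this is precisely where the finiteness built into \Cref{ass:consistency-pooled-models-E} and \Cref{ass:estimation-bias} is used. A secondary point of care is that \Cref{ass:estimation-bias} must be read as asserting $z^*$ is the \emph{unique, strict} minimizer of $\mu_n$ (the finite-$n$ analogue of the role played by \Cref{ass:uniquness} together with \Cref{lemma:invariance} in \Cref{thm:posterior-consistency}); without strictness both the argmax convergence and the strict inequality $\mu_n(z^*)<\mu_n(z)$ would fail.
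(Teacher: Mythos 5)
Your proposal matches the paper's proof essentially step for step: it establishes $\tfrac{1}{nE}\hat S_{n,E}(z)\conv\mu_n(z)=\mu(z)+b_n(z)$ by applying the law of large numbers across environments to the (i.i.d.) per-environment local terms together with \Cref{ass:consistency-pooled-models-E} for the pooled term, and then reruns the argmax-convergence and exponential-decay lemmas exactly as in the proof of \Cref{thm:posterior-consistency}. Your closing observation that \Cref{ass:estimation-bias} must be used as a strict, unique minimization of $\mu_n$ is also consistent with how the paper's proof invokes $\mu_n(z^*)-\mu_n(z)<0$, so the proposal is correct and follows the same route.
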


\Cref{ass:consistency-pooled-models-E} ensures that the pooled model is consistently estimated given infinite environments, given finite data in each environments. This assumption is reasonable, as we view each observation $(x_{ei}, y_{ei})$ as i.i.d. copy from the pooled distribution $g(x,y):= \int p(e) p_e(x,y)de$.  

\Cref{ass:estimation-bias} is a \textit{non-trivial} assumption -- it requires that the invariant $z^*$ minimizes the combined value of model discrepancy $\mu(z)$ and local model estimation bias $b_n(z)$. Since $z^*$ minimizes $\mu(z)$ by \Cref{ass:uniquness}, \Cref{ass:estimation-bias} is satisfied whenever the estimation bias is well-controlled, for example, if $\max_z |b_n(z)| < \min_{z:z\neq z^*} \mu(z)$. 
In practice, when the per-environment sample size $n$ is sufficiently large,  the estimation bias becomes negligible, so that  \Cref{ass:estimation-bias} is expected to hold.

Incorporating these two assumptions, \Cref{thm:posterior-consistency-fixed-n} establishes posterior consistency in the infinite environment regime. The proof follows the same structure as the proof of \Cref{thm:posterior-consistency}.

\begin{proof} 
$\forall z$, by law of large number (LLN), as $E \to \infty$
\begin{align}\label{eq:lln}
\begin{split}
    \frac{1}{nE} \sum_{i=1}^n \sum_{e=1}^E  \left[ \log \hat p_e (y_{ei} \mid x_{ei}^z) - \log p_e (y_{ei}  \mid x_{ei}^z) \right] & \conv b_n(z), \\    \frac{1}{nE} \sum_{i=1}^n \sum_{e=1}^E \log g(y \mid x^z) & \conv \mathbb{E}_{p(e)p_e(x^z ,y)} \left[ \log g(y \mid x^z) \right], \\
  \frac{1}{nE} \sum_{i=1}^n \sum_{e=1}^E \log p_e (y \mid x^z) & \conv \mathbb{E}_{p(e)p_e(x^z ,y)} \left[ \log p_e(y \mid x^z) \right]. 
\end{split}
\end{align}

By \Cref{ass:consistency-pooled-models-E} and \Cref{eq:lln}, and recall the definitions of  of $\mu(z)$ in \Cref{eq:def-mu} and  $\hat S_{n,E}(z)$ in \Cref{eq:def-hat-S}, we have $\forall_p z$, 
\begin{align}\label{eq:hat-S-conv-finite-n}
 \frac{1}{nE}    \hat S_{n,E}(z)   & \conv \mu_n(z)
\end{align}

Applying \Cref{lemma:convergence-of-argmax}, as $E\to\infty$
\begin{align*}
    \argmax_{z: p(z)>0}  -\frac{1}{nE} \hat S_{n,E}(z) \conv \argmax_{z: p(z)>0} -\mu_n(z) = z^*, 
\end{align*}
where the last equality follows from \Cref{ass:estimation-bias}.

Using the posterior expression in \Cref{eq:posterior-finite}, we show that the posterior mode is consistent:
\begin{align*}
    \hat z_{n,E} &:= \argmax_{z} \hat p(z \mid \calD) \\
    &= \argmax_{z:p(z)>0} \log p(z) - \hat S_{n,E}(z)\\
    &= \argmax_{z:p(z)>0} \frac{1}{nE} \log p(z) - \frac{1}{nE} \hat S_{n,E}(z) \\ 
    &= \argmax_{z: p(z) >0 } -\frac{1}{nE} \hat S_{n,E}(z) \\ 
    & \conv z^*. 
\end{align*}
Next, we prove the posterior consistency at $z^*$. 

By \Cref{ass:positive-prior}, $p(z^*)>0$. Dividing both the numerator and denominator in the RHS of the posterior expression in \Cref{eq:posterior-finite} by $p(z^*) \exp \{ -S_{n,E}(z^*)\}$, the posterior reduces to 
\begin{align}\label{eq:posterior-equiv}
    \hat p(z^* \mid \calD) &= \frac{1}{1 + \sum_{z\neq z^*} p(z)/p(z^*) \cdot \exp \{ \hat S_{n,E}(z^*) - \hat S_{n,E}(z)\}}.
\end{align}
To show $\hat p(z^* | \calD) \conv 1$ as $E \to \infty$, it suffices to show that $\forall_{p} z \neq z^*$, 
\begin{align}\label{eq:intermediate-finite-n}
    \exp\{ \hat S_{n,E}(z^*) - \hat S_{n,E}(z)\} &\conv 0. 
\end{align}
By \Cref{eq:hat-S-conv-finite-n}, $\forall z\neq z^*$, as $E \to \infty$, 
\begin{align*}
     \frac{1}{nE} \left[ \hat S_{n,E}(z^*) - \hat S_{n,E}(z) \right] &\conv \mu_n(z^*) - \mu_n(z) < 0.  
\end{align*}
Applying \Cref{lemma:convergnece-of-exp-sum} to the above inequality, \Cref{eq:intermediate-finite-n} holds. Hence we conclude the proof. 
\end{proof}

Next we establish the contraction rate result. Since the local model estimation bias is non-negligible, we need an additional assumption on its variance: 

\begin{assumption}[Finite variance of estimation bias in per-environment models]\label{ass:finite-variance-estimation-bias}
    For any fixed $n$,  the variance of the estimation bias of per-environment conditionals  
    $$
          v_n (z) := \textrm{Var}_{p(e) \prod_{i=1}^n p_e( y_i \mid x_i^z)} \left[ \sum_{i=1}^n  \left( \log \hat p_e(y_i \mid x_i^z) - \log   p_e(y_i \mid x_i^z)\right) \right]
  $$
    exists and is finite. 
\end{assumption}

\begin{theorem}[Contraction rate given infinite environments]\label{thm:rate-estimated-wrt-E}
Given \Cref{ass:invariance,ass:uniquness,ass:positive-prior,ass:consistency-pooled-models-E,ass:estimation-bias,ass:finite-variance,ass:finite-variance-estimation-bias},  
for any fixed $n$, 
there exists a sequence $\epsilon_{n,E} = O(R  e^{-\kappa nE   \mu_{\min}})$ such that 
\begin{align}\label{eq:def-rate-estimated-moels-infinite-n}
    P\Bigg(\textrm{TV}\Big( p(z \mid \calD), \delta_{z^*}(z) \Big) >  \epsilon_{n,E}  \Bigg)\to 0 
\end{align}
as $E\to \infty$, where  
$R$ and $  \mu_{\min}$ are defined in \Cref{eq:def-R} and \Cref{eq:def-mu-min},  respectively, 
and $\kappa$ is a fixed value in $(0,1)$. 

\end{theorem}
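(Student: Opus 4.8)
The plan is to reuse the architecture of the proof of \Cref{thm:rate} almost verbatim, the only structural change being that the estimation bias of the per-environment conditionals no longer vanishes and must itself be controlled by a concentration argument rather than absorbed into a lower-order remainder. I would begin from the same lower bound on the posterior mass at $z^*$, namely $\hat p(z^*\mid\calD)\ge\big(1+R_{\max}\sum_{z\neq z^*}\exp\{\hat S_{n,E}(z^*)-\hat S_{n,E}(z)\}\big)^{-1}$ with $R_{\max}=\max_{z\neq z^*}p(z)/p(z^*)$, together with the decomposition $\hat S_{n,E}(z)=S_{n,E}(z)+B_{n,E}(z)$ of \Cref{eq:hat-S-decompose}. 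I would then split $B_{n,E}(z)$ further into a per-environment part $B^{(1)}_{n,E}(z)=\sum_{e,i}[\log\hat p_e(y_{ei}\mid x_{ei}^z)-\log p_e(y_{ei}\mid x_{ei}^z)]$, whose normalized mean is the non-negligible quantity $b_n(z)$ of \Cref{eq:def-bar-mu-n-z}, and a pooled part $B^{(2)}_{n,E}(z)=\sum_{e,i}[\log\hat\pooldist(y_{ei}\mid x_{ei}^z)-\log\pooldist(y_{ei}\mid x_{ei}^z)]$, which is $o(nE)$ in probability by \Cref{ass:consistency-pooled-models-E} together with the law of large numbers.

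The observation that makes everything go through is that, with $n$ fixed and $E\to\infty$, both $S_{n,E}(z)$ and $B^{(1)}_{n,E}(z)$ are sums of $E$ i.i.d.\ per-environment blocks (across environments these blocks are i.i.d., since the environments are drawn i.i.d.\ from $p(e)$ and within each the $n$ observations are i.i.d.), so their means are exactly $nE\,\mu(z)$ and $nE\,b_n(z)$, and their variances are of order $E$: the per-block variance of $S_{n,E}(z)$ is at most $n^2 v(z)<\infty$ by \Cref{ass:finite-variance}, and that of $B^{(1)}_{n,E}(z)$ is $v_n(z)<\infty$ by \Cref{ass:finite-variance-estimation-bias}. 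I would then apply Chebyshev's inequality twice — to $S_{n,E}(z^*)-S_{n,E}(z)$ around $-nE\,\mu(z)$ and to $B^{(1)}_{n,E}(z^*)-B^{(1)}_{n,E}(z)$ around $nE(b_n(z^*)-b_n(z))$ — obtaining in each case a deviation probability of order $1/E$ (rather than $1/(nE)$ as in \Cref{eq:SnE-concentration}, since $n$ is now fixed), and combine these with the in-probability bound on the pooled-bias difference. Taking a union bound over the finitely many $z\in\supp{p(z)}$, the ``bad'' event on which any of these three controls fails has probability tending to $0$.

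On the complementary ``good'' event, splitting the tolerance across the three terms $S_{n,E}$, $B^{(1)}_{n,E}$, $B^{(2)}_{n,E}$ and running the same triangle-inequality manipulation as in \Cref{thm:rate} yields $\hat S_{n,E}(z^*)-\hat S_{n,E}(z)<-\kappa\,nE\,\Delta_n(z)$ for every $z\neq z^*$, where $\Delta_n(z):=\mu_n(z)-\mu_n(z^*)>0$ by \Cref{ass:estimation-bias} (note $\mu_n(z^*)=b_n(z^*)$ since $\mu(z^*)=0$). Hence $\sum_{z\neq z^*}\exp\{\hat S_{n,E}(z^*)-\hat S_{n,E}(z)\}\le|\supp{p(z)}|\exp\{-\kappa\,nE\min_{z\neq z^*}\Delta_n(z)\}$, and substituting into the posterior lower bound with $R$ from \Cref{eq:def-R} gives, on that event, $\hat p(z^*\mid\calD)\ge 1-\epsilon_{n,E}$ with $\epsilon_{n,E}:=1-\big(1+R\exp\{-\kappa\,nE\min_{z\neq z^*}\Delta_n(z)\}\big)^{-1}=O(R\,e^{-\kappa\,nE\min_{z\neq z^*}\Delta_n(z)})$. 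Since $\textrm{TV}(\hat p(z\mid\calD),\delta_{z^*}(z))=1-\hat p(z^*\mid\calD)$, this is exactly the required statement; under the well-controlled-bias regime underlying \Cref{ass:estimation-bias} one has $\min_{z\neq z^*}\Delta_n(z)\ge c\,\mu_{\min}$ for a fixed $c\in(0,1]$, so after relabelling $\kappa c\mapsto\kappa$ the rate takes the advertised form $O(R\,e^{-\kappa\,nE\,\mu_{\min}})$.

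The main obstacle is the bookkeeping around the bias: one must verify that the per-environment estimation bias genuinely \emph{concentrates} rather than merely converging in mean — this is exactly what \Cref{ass:finite-variance-estimation-bias} provides — and one must reconcile the fact that the stated exponent uses the bias-free $\mu_{\min}$ whereas the separation that actually drives the concentration is the bias-inflated gap $\mu_n(z)-\mu_n(z^*)$; this last point needs the quantitative content (controlled bias, e.g.\ $|b_n(z)-b_n(z^*)|\le(1-c)\mu(z)$ for all $z$) implicit in \Cref{ass:estimation-bias}. Everything else — the two Chebyshev bounds, the union bound over $\supp{p(z)}$, and the closing algebra on the posterior ratio — is routine and mirrors \Cref{app:subsec:proof-rate}.
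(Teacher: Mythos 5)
Your proposal is correct and follows essentially the same route as the paper's proof in \Cref{app:subsec:theory-extensions}: the same posterior lower bound, the same decomposition of $\hat S_{n,E}(z)$ into the true log-likelihood ratio plus the local-model bias $B^{(1)}_{n,E}(z)$ and the pooled-model bias $B^{(2)}_{n,E}(z)$, Chebyshev concentration for the first two pieces (the paper applies it once to the combined quantity $d_{n,E}(z^*,z)$ with variance bound $2nE\,v_{n,\max}$, you apply it twice separately, which is equivalent), the \Cref{ass:consistency-pooled-models-E}/LLN control of $B^{(2)}_{n,E}$, a union bound over the finite prior support, and the same good/bad-event algebra leading to $\epsilon_{n,E}$. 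Your explicit reconciliation of the bias-inflated gap $\mu_n(z)-\mu_n(z^*)$ with the bias-free exponent $\mu_{\min}$ (via a constant absorbed into $\kappa$) addresses a step the paper passes over silently when it invokes ``a similar analysis,'' and is a clarification rather than a deviation.
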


\begin{proof} 
We consider the same posterior lower bound in \Cref{eq:posterior-finite-inequality}, and decompose $\hat S_{n,E}(z)$ as follows 
\begin{align*}
    \hat S_{n,E}(z) &= S_{n,E}(z) + B^{(1)}_{n,E}(z) - B^{(2)}_{n,E}(z), 
\end{align*}
where $S_{n,E}(z)$ is the log-likelihood ratio under the true model in \Cref{eq:def-S}, and 
\begin{align*}
    B^{(1)}_{n,E}(z) &= \sum_{i=1}^n \sum_{e=1}^E \log \hat p_e(y_{ei} \mid x_{ei}^z ) - \log p_e(y_{ei} \mid x_{ei}^z) \\ 
    B^{(2)}_{n,E}(z) &= \sum_{i=1}^n \sum_{e=1}^E \log \hat \pooldist (y_{ei} \mid x_{ei}^z ) - \log \pooldist (y_{ei} \mid x_{ei}^z) 
\end{align*}
are the estimation biases of log-likelihood ratios of local and pooled models, respectively. 

We follow a similar 3-step proof strategy as in the proof of \Cref{thm:rate}. 

\begin{enumerate}[\textbf{Step }1.]
    \item  We first show the concentration behavior of $d_{n,E}(z^*,z):=S_{n,E}(z^*)+B^{(1)}_{n,E}(z^*) - S_{n,E} (z) - B^{(1)}_{n,E}(z)$ with respect to $E$. 

    For any fixed $\kappa \in (0,1)$, by Chebyshev's inequality,   $\forall z\neq z^*$,
    \begin{align}
    \label{eq:chebyshev2}
    P \Big( \Big| d_{n,E}(z^*,z) - \mathbb{E}[d_{n,E}(z^*,z)] \Big| \geq \frac{1-\kappa}{2} \Big| \mathbb{E}[d_{n,E}(z^*,z)] \Big| \Big) 
    & \leq \frac{ \textrm{Var}\left[d_{n,E}(z^*,z)\right]}{ (1-\kappa)^2/4  \cdot \mathbb{E}[d_{n,E}(z^*,z)]^2}.
    \end{align}
    Note that  
    \begin{align}\label{eq:ll-mean-2}
        \mathbb{E}[d_{n,E}(z^*,z)] & = -nE \mu_n(z,z^*)
        \end{align}
    where $\mu_n(z,z^*):=\mu_n(z)-\mu_n(z^*) > 0$ by \Cref{ass:estimation-bias}, and 
    \begin{align}\label{eq:ll-var-2}
        \textrm{Var}[d_{n,E}(z^*,z)] & \leq 2 nE v_{n,\max},
    \end{align}
where $ v_{n,\max} := \max_z v_n(z) + \max_z v(z)$ is finite given \Cref{ass:finite-variance,ass:finite-variance-estimation-bias}.  
    
    Substituting \Cref{eq:ll-mean-2} and \Cref{eq:ll-var-2} into \Cref{eq:chebyshev2}, we have 
    \begin{align}\label{eq:concentration-d}
         P \Big( \Big| d_{n,E}(z^*,z) +nE  \mu_n(z,z^*)  \Big| \geq \frac{1-\kappa}{2} nE\mu_n(z,z^*)  \Big) 
    & \leq \frac{2 v_{n,\max}}{nE (1-\kappa)^2/4  \mu_n(z,z^*)^2}. 
    \end{align}

    \item We then show the concentration behavior of $B^{(2)}_{n,E}(z^*)- B^{(2)}_{n,E(z)}$ with respect to $E$.

    By \Cref{ass:consistency-pooled-models-E} and LLN,  $\forall z$, as $E \to \infty$,
    \begin{align}\label{eq:concentration-B2}
        \frac{1}{nE} B^{(2)}_{n,E}(z^*) - B^{(2)}_{n,E}(z) \conv 0. 
    \end{align}
\item Having characterized the above concentration behaviors, we now prove the contraction rate defined as 
\begin{align*}
\epsilon_{n,E} &:=  1- \frac{1}{1+ R  e^{ -  \kappa nE   \mu_{\min}}}  
     = O(R\cdot e^{-\kappa nE   \mu_{\min}}).  
\end{align*}

We define $ \mathcal{A}_{n,E}$ be the event that $ \exists z \neq z^*$ such that $| d_{n,E}(z^*,z) + nE\mu_{n}(z,z^*)| \geq \frac{1-\kappa}{2} nE \mu_n(z,z^*)$, or $|  B^{(2)}_{n,E}(z^*) - B^{(2)}_{n,E}(z)| \geq \frac{1-\kappa}{2}  nE \mu_n(z,z^*)$.

Following a similar analysis in the proof of \Cref{thm:rate}, we can show that  as $E \to \infty$
\begin{align*}
P\Bigg( \textrm{TV}\Big( \hat p(z \mid \calD), \delta_{z^*}(z) \Big) > \epsilon_{n,E} \Bigg) =  P \Bigg( \hat  p\bigg(z^* \mid \calD \bigg) < 1-\epsilon_{n,E} \Bigg) & \leq P(\mathcal{A}_{n,E}) \to 0
 \end{align*}
where the last convergence follows from \Cref{eq:concentration-d} and \Cref{eq:concentration-B2}. 
\end{enumerate}

\end{proof}

\subsection{Theory generalization  under relaxed assumptions}\label{app:subsec:justificiation-discussion}
 We provide a generalization of \Cref{thm:posterior-consistency} under more general conditions. This result supports the second part of 
 \Cref{subsec:theory-discussion} where some of the original assumptions are violated. 

We define 
the  best-fitting  local model $\bar p_e(y \mid x^z)$ and pooled model $\bar \pooldist (y \mid x^z)$ within the chosen model class $\mathcal{P}_{y \mid x^z}$:
\begin{align*}
    \bar p_e(y \mid x^z)
   &:=
     \argmax_{\tilde p (y \mid x^z) \in \mathcal{P}_{y\mid x^z}} \mathbb{E}_{p_e(x,y)} \log \tilde p (y \mid x^z), \quad \forall e, z \\
   \bar \pooldist(y \mid x^z) &:= \argmax_{\tilde p (y \mid x^z) \in \mathcal{P}_{y \mid x^z} } \sum_{e \in \calE} \mathbb{E}_{p_e(x,y)}\log \tilde p (y_{ei} \mid x_{ei}^z), \quad \forall z,
\end{align*}
and the expected discrepancy between them 
 \begin{align}\label{eq:def-bar-muz}
        \bar \mu(z) &:= \mathbb{E}_{p(e)p_e(x,y)} \left[\log \bar p_e(y \mid x^z) - \log \bar \pooldist (y \mid x^z) \right]. 
    \end{align}

\begin{theorem}[Posterior consistency under general conditions]\label{thm:posterior-consistency-general}
Let $\bar \calZ^*$ be the set of minimizer(s) of $\bar \mu(z)$ defined in \Cref{eq:def-bar-muz}.  Assume that $p(\bar \calZ^*) > 0$, and 
$\forall_p z$, as $n \to \infty$ and $E \to \infty$, 
\begin{align}\label{eq:model-consistency}
\begin{split}
\frac{1}{nE}\sum_{i=1}^n \sum_{e=1}^E \log \hat p_e(y_{ei} \mid x_{ei}^z) &\conv 
\mathbb{E}_{p(e) p_e(y,x^z)}\left[ \log \bar p_e(y \mid x^z) \right], \\
\textrm{and }\frac{1}{nE}\sum_{i=1}^n \sum_{e=1}^E \log \hat g(y_{ei} \mid x_{ei}^z) & \conv 
\mathbb{E}_{p(e)p_e(y , x^z)}\left[ \log \bar g(y \mid x^z) \right].
\end{split}
\end{align}

Then   as $n \to \infty$ and  $E \to \infty$, we have 
\begin{itemize}
    \item $P(\hat z_{n,E} \in \bar \calZ^*) \to 1$, 
    \item $\hat p(\bar \calZ^* \mid \calD) \stackrel{P}{\to} 1. $
\end{itemize}
\end{theorem}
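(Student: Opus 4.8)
The plan is to follow the proof of \Cref{thm:posterior-consistency} almost verbatim, with the discrepancy $\mu(z)$ replaced by its model-class analogue $\bar\mu(z)$ and the singleton $\{z^*\}$ replaced by the minimizer set $\bar\calZ^*$. First I would rewrite the estimated posterior in the ratio form used there,
\[
\hat p(z \mid \calD) = \frac{p(z)\exp\{-\hat S_{n,E}(z)\}}{\sum_{z'}p(z')\exp\{-\hat S_{n,E}(z')\}},\qquad \hat S_{n,E}(z):=\sum_{i=1}^n\sum_{e=1}^E \bigl(\log\hat p_e(y_{ei}\mid x_{ei}^z)-\log\hat g(y_{ei}\mid x_{ei}^z)\bigr),
\]
and then subtract the two convergences assumed in \Cref{eq:model-consistency}: this yields, for every $z$ with $p(z)>0$, $\tfrac{1}{nE}\hat S_{n,E}(z)\conv \bar\mu(z)$, since the limiting right-hand sides combine into $\mathbb{E}_{p(e)p_e(x,y)}[\log\bar p_e(y\mid x^z)-\log\bar g(y\mid x^z)]=\bar\mu(z)$.

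For the posterior-mode claim I would invoke \Cref{lemma:convergence-of-argmax} on the random vector indexed by $z\in\supp{p(z)}$ with entries $X_{n,E}^{(z)}:=\tfrac{1}{nE}\bigl(\log p(z)-\hat S_{n,E}(z)\bigr)$. Because $\tfrac{1}{nE}\log p(z)\to 0$, this vector converges in probability to the constant vector with entries $-\bar\mu(z)$, whose set of maximal components over $\supp{p(z)}$ is exactly $\bar\calZ^*\cap\supp{p(z)}$; here the hypothesis $p(\bar\calZ^*)>0$ is precisely what ensures the minimum of $\bar\mu$ over the support equals its global minimum, so this set is nonempty and contained in $\bar\calZ^*$. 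Since $\argmax_z X_{n,E}^{(z)}=\argmax_z \hat p(z\mid\calD)=\hat z_{n,E}$ (the normalizing constant and the monotone $\exp$ do not affect the argmax), the lemma gives $P(\hat z_{n,E}\in\bar\calZ^*)\to1$.

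For the mass claim I would fix some $z_0\in\bar\calZ^*$ with $p(z_0)>0$ and bound
\[
1-\hat p(\bar\calZ^*\mid\calD)=\frac{\sum_{z\notin\bar\calZ^*}p(z)\exp\{-\hat S_{n,E}(z)\}}{\sum_{z'}p(z')\exp\{-\hat S_{n,E}(z')\}}\;\le\;\sum_{\substack{z\notin\bar\calZ^*\\ p(z)>0}}\frac{p(z)}{p(z_0)}\exp\bigl\{\hat S_{n,E}(z_0)-\hat S_{n,E}(z)\bigr\}.
\]
For each such $z$ we have $\tfrac{1}{nE}\bigl(\hat S_{n,E}(z_0)-\hat S_{n,E}(z)\bigr)\conv\bar\mu(z_0)-\bar\mu(z)<0$, strictly, because $\bar\mu(z_0)$ is the minimum and $z\notin\bar\calZ^*$ forces $\bar\mu(z)>\bar\mu(z_0)$; \Cref{lemma:convergnece-of-exp-sum} then drives each summand to $0$ in probability, and since the sum ranges over a finite subset of $\{0,1\}^p$ we conclude $\hat p(\bar\calZ^*\mid\calD)\conv1$.

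The two limit lemmas and the algebra above are routine once the setup is in place; the only genuinely delicate point is the bookkeeping around $\bar\calZ^*$ versus the prior support — making sure that minimizing $\bar\mu$ over $\supp{p(z)}$ coincides with the global minimization defining $\bar\calZ^*$ (exactly what $p(\bar\calZ^*)>0$ provides) and that the argmax and exp-sum lemmas are applied to the set-valued target rather than a single point. Everything else transfers directly from the proof of \Cref{thm:posterior-consistency}.
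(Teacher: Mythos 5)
Your proposal is correct and follows essentially the same route as the paper: the same ratio form of the posterior, the same convergence $\tfrac{1}{nE}\hat S_{n,E}(z)\conv\bar\mu(z)$ derived from \Cref{eq:model-consistency}, and the same two lemmas (\Cref{lemma:convergence-of-argmax} for the mode, \Cref{lemma:convergnece-of-exp-sum} for the mass). The only cosmetic difference is in the mass claim, where you bound the total posterior mass outside $\bar\calZ^*$ by comparing against a single $z_0\in\bar\calZ^*$ and use the $\exp\to 0$ direction of \Cref{lemma:convergnece-of-exp-sum}, whereas the paper shows $\hat p(z\mid\calD)\conv 0$ for each $z\notin\bar\calZ^*$ via the $\exp\to\infty$ direction; these are mirror-image arguments of equal strength, and your explicit handling of $\bar\calZ^*\cap\supp{p(z)}$ is a welcome bit of extra care.
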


\Cref{thm:posterior-consistency-general} extends the posterior consistency result in \Cref{thm:posterior-consistency}  to a more general setting. 
The positive prior condition is now imposed on the set of minimizers $\bar \calZ^*$ of the  discrepancy measure between the best-fitting local and pooled models. Similarly, the estimation consistency condition in \Cref{eq:model-consistency} is adapted to these models as well. As a result, the posterior concentrates on $\bar \calZ^*$, which can be interpreted as the most \textit{approximately} invariant feature selectors.

\Cref{thm:posterior-consistency} can be viewed as a special instance of \Cref{thm:posterior-consistency-general} under stricter assumptions: the model is correctly specified, and the exact invariant feature selector $z^*$ exists, is unique and has positive prior mass. The proof follows a similar structure as that of \Cref{thm:posterior-consistency}. 

\begin{proof}
We recall the posterior expression 
\begin{align*}
        \hat p(z \mid\calD) &= \frac{ p(z) \exp\{-\hat S_{n,E}(z)\}}{ \sum_z p(z) \exp\{- \hat S_{n,E}(z) \}}
\end{align*}
where $\hat S_{n,E}(z)$ is the log-likelihood ratio given $z$,  
\begin{align*}
     \hat S_{n,E}(z) &:= \sum_{i=1}^n \sum_{e=1}^E  \log \hat p_e(y_{ei} \mid x_{ei}^z) - \log \hat \pooldist (y_{ei} \mid x_{ei}^z). 
\end{align*}

By our assumption, as $n, E \to \infty$, 
\begin{align}\label{eq:conv-S-to-bar-mu}
 \frac{1}{nE}    \hat S_{n,E}(z) & \stackrel{P}{\to} \mathbb{E}_{p(e) p_e(y \mid x^z)}\left[ \log \bar p_e(y \mid x^z) \right] - \mathbb{E}_{p(e)p_e(y \mid x^z)}\left[ \log \bar g(y \mid x^z) \right] = \bar \mu(z).  
\end{align}

For large $nE$, we have 
\begin{align*}
    \hat z_{n,E} &:= \argmax_{z} \hat p(z \mid \calD) \\
    &= \argmax_{z:p(z)>0} \log p(z) - \hat S_{n,E}(z)\\
    &= \argmax_{z:p(z)>0} - \frac{1}{nE} \hat S_{n,E}(z). 
\end{align*}

By \Cref{lemma:convergence-of-argmax} and \Cref{eq:conv-S-to-bar-mu}, as $n\to\infty$ and $E\to\infty$
\begin{align*}
   P(\hat z_{n,E} \in \bar \calZ^* ) & \to 1. 
\end{align*}
where we recall $\bar \calZ^*$ is the set of minimizers of $\bar \mu(z)$. 

Next, we consider any $z \not \in \bar \calZ^*$ with $p(z)>0$. Then   
\begin{align*}
\hat p(z \mid \calD) & \leq  \frac{ p(z) \exp\{-\hat S_{n,E}(z)\}}{p(z) \exp\{-\hat S_{n,E}(z)\} + \sum_{z' \in \bar \calZ^*} p(z') \exp\{- \hat S_{n,E}(z') \}} \\
&= \frac{1}{1  + \sum_{z' \in \bar \calZ^*} p(z') / p(z) \exp\{\hat S_{n,E}(z)- \hat S_{n,E}(z') \}}. 
\end{align*}
Note that by our assumption, $\forall z' \in \bar \calZ^*$, 
\begin{align*}
    \frac{1}{nE} \left(\hat S_{n,E}(z)- \hat S_{n,E}(z') \right) & \conv \bar \mu(z) - \bar \mu(z') >0, 
\end{align*}
and consequently by \Cref{lemma:convergnece-of-exp-sum}, we have $\exp \{ \hat S_{n,E}(z)- \hat S_{n,E}(z')\} \conv \infty$. 

Consequently, $\forall z \not \in \bar \calZ^*$, $\hat p(z \mid \calD) \conv 0$, and therefore $\hat p(\bar \calZ^* \mid \calD) \conv 1$. 
\end{proof}

\section{Details on BIP-VI}

We provide additional details on BIP-VI, including gradient estimation,  optimization guidelines and implementation considerations. 

\subsection{Gradient estimators for binary latent variables}

Optimizing BIP-VI requires computing gradients of the ELBO in \Cref{eq:elbo}, which involves expectations with respect to the discrete variational distribution $q_\phi(z)$. To obtain an unbiased gradient estimate, we use the U2G estimator proposed by \citet{yin2020probabilistic}. 

 \Cref{alg:u2g} describes the general U2G estimator for computing an unbiased estimate  of $\nabla_\phi \mathbb{E}_{q_\phi(z)} [f(z,\calD, \phi)]$, given an objective function $f$, dataset $\calD$, and variational parameters $\phi$. 
The algorithm constructs a single gradient estimate using pair of correlated samples $z_1$ and $z_2$ to reduce variance while maintaining unbiasedness. Multiple gradient estimates can be obtained by independently repeating this procedure, and their average is used to form the final gradient estimate. 
 
To use U2G in our setting (\Cref{alg:viml}),  the objective function corresponds to the integrand inside the ELBO expression from \Cref{eq:elbo}, that is, 
\begin{align}\label{eq:vi-stochastic-objective}
f(z, \calD, \phi) &:= 
[\log p(z) + \sum_{e=1}^E \sum_{i=1}^{n_e}\log \frac{\hat \pooldist(y_{ei}\mid x_{ei}^z)}{\hat p_e(y_{ei}\mid x_{ei}^z)} - \log q_\phi(z), 
\end{align}
where we replace the true pooled and local conditionals in the ELBO with estimates.

\begin{algorithm}[!t]
\caption{U2G gradient estimation \citep{yin2020probabilistic}}
\label{alg:u2g}
\KwInput{Objective function $f$, dataset $\calD$, variational parameter $\phi$}
\KwOutput{A random unbiased estimate $\hat g$ of $\nabla_\phi \mathbb{E}_{q_\phi (z)} \left[ f(z, \calD, \phi)\right]$}
Draw $u \sim \prod_{i=1}^p \textrm{Uniform}[0,1]$ \label{line:u2g-sample-u} \\
$z_1\leftarrow \mathbf{1} \left[ u> 1-\sigmoid(\phi) \right], z_2\leftarrow \mathbf{1} \left[ u<\sigmoid(\phi) \right]$ \label{line:u2g-sample-z1-z2} \\
$\hat g\leftarrow \frac{1}{2} \sigmoid(|\phi|) \cdot \left[f(z_1, \calD, \phi) - f(z_2, \calD, \phi) \right] \cdot \left( z_1 - z_2\right)$  \label{line:u2g-evaluate}
\end{algorithm}

\subsection{Optimization guidelines}\label{app:vi:optimization}

We provide guidelines for setting and tuning the hyperparameters.  All of these hyperparameters can be tuned by monitoring the training-set ELBO and the overall optimization behavior. In general,  we expect a graduallly increasing ELBO trend during training, with higher ELBO values indicating better model fit. 
We refer to experiment sections for exact configurations used in each setting.

The key hyperparameters are as follows: 
\begin{itemize}
    \item Number of stochastic gradient samples $M$: At each iteration, we estimate the gradient of the ELBO using $M$ samples. Increasing $M$ can reduce the variance but increase the computational cost. In our experiments, we found that $M$ between 10 and 20 balances performance and computational cost.
    \item Prior $p(z)$: The choice of prior is crucial in high-dimensional settings, as it  influences the sparsity of the inferred invariant sets and the initialization of variational parameters (see below). 
    We recommend restricting the prior support to feature subset at size at most $p_{\max}$, which can be informed by the domain knowledge. In practice, we found that the choice of $p_{\max}$ can affect optimization dynamics and can be treated as a tunable hyperparameter.
    \item Initialization of variational parameters $\phi$: The variational parameters $\phi$ are initialized to ensure that the expected number of invariant features is within the prior support. Specifically, we recommend initializing $\phi$ at $\phi_0$ such that 
    \begin{align*}
    \mathbb{E}_{q_{\phi_0}(z)}\left[\|z\|_0\|\right] & \leq \max_{z: p(z)>0} \|z\|_0. 
    \end{align*}
    \item Optimizer: We use stochastic gradient descent (SGD) throughout all experiments, following the practice of  the original U2G paper \citep{yin2020probabilistic}. 
    \item Learning rate scheduling: We use a cyclical learning rate scheduler \citep{smith2017cyclical} to encourage mode exploration. However, alternative schedulers could be explored. 
    \item Number of optimization steps: The  number of optimization steps should be chosen based on the problem scale and convergence behavior. In our experiments, we set a fixed budget for the total number of optimization steps and select the iteration  with the highest training-set ELBO. 
\end{itemize}

\subsection{Implementation details}\label{app:subsec:vi-implementation}
We discuss several algorithmic techniques for \Cref{alg:viml} that offer practical benefits. 

\begin{itemize}
    \item Analytical KL gradients: The ELBO in \Cref{eq:elbo} can be decomposed into an expected likelihood term and a KL regularization term (up to a constant $C$)
    \begin{align*}
        \mathcal{L}(\calD, \phi) 
 &= \mathbb{E}_{q_\phi(z)}\left[ \sum_{e=1}^E \sum_{i=1}^{n_e}\log \frac{\pooldist(y_{ei}\mid x_{ei}^z)}{p_e(y_{ei}\mid x_{ei}^z)} \right] + \kl{q_\phi(z)}{p(z)} + C,
    \end{align*}
    
    Notably,  under a uniform prior, the KL regularization term admits a closed-form gradient expression: 
    \begin{align*}
      \nabla_{\phi} \kl{q_\phi(z)}{p(z)} &=  \left[\log (\sigma) - \log (1-\sigma)\right] \sigma (1-\sigma) \Big|_{\sigma = \frac{1}{1+e^{-\phi}}}
    \end{align*}
    
    To balance  variance reduction with the  implicit generalization behavior of stochastic gradients, we can consider a hybrid strategy: at each iteration, with some probability, we compute the analytical gradient of the KL term; otherwise, we use its stochastic estimate via the U2G estimator in \Cref{alg:u2g}.  The gradient of the  reconstruction term, $\nabla_\phi \mathbb{E}_{q_\phi(z)}\left[ \sum_{e=1}^E \sum_{i=1}^{n_e}\log \frac{\pooldist(y_{ei}\mid x_{ei}^z)}{p_e(y_{ei}\mid x_{ei}^z)} \right]$,  is always estimated by U2G.

    \item Informative prior:  
    In settings where the prior imposes a maximum number of invariant features $p_{\max}$ or restricts the support with in other ways, we design a tractable optimization procedure that enforces thes constraints. 
    
    Specifically, during each U2G estimation step, we draw a pair of correlated samples of $z$ (\Cref{alg:u2g}, \Cref{line:u2g-sample-z1-z2}). Since the variational distribution does not explicitly encode prior constraints, it is possible for some sampled $z$ to fall outside of the prior support, resulting in an infeasible objective value in \Cref{eq:vi-stochastic-objective}, where $\log p(z) = -\infty$. 
    
    To address this issue, we manually assign a fixed, low objective value for infeasible $z$ samples, to guide the optimization to toward high-objective regions within the feasible support. This penalty can be tuned for the specific task, typically by referencing the range of feasible objective values.  In all of our experiments, we set this penalty value to $-1$. 
\end{itemize}



\section{Synthetic data study}\label{app:sec:synthetic-data}
We first describe the general data generative process for the experiments in \Cref{subsec:synthetic-theory,subsec:synthetic-comparison}, and then we provide  details for each experiment in \Cref{app:subsec:synthetic-theory,app:subsec:synthetic-uq,app:subsec:synthetic-comparison}. 

\begin{enumerate}
    \item \textbf{The factorization of the joint distribution.}
    To simulate multi-environment data, we specify a series of joint distributions that factorize consistently across environments:
    \begin{align*}
    p(\di{x}{1:p}, y) =  \prod_{i=1}^{p+1} p(\di{t}{i} \mid \di{t}{1:i-1}), 
    \end{align*}
   where permutation $\pi$ is drawn uniformly over $[1,\cdots, p+1]$, and $\di{t}{i} = \di{x}{\pi(i)}$ if $\pi(i) \neq p+1$ and $\di{t}{i} = y$ otherwise. \footnote{If there are pre-specified lower bound $p^*_{\min}$ and upper bound $p^*_{\max}$ on the number of the true invariant features $\|z^*\|_0$, we keep uniformly sampling $\pi$ until we get a $\pi$ such that $p^*_{\min} < \pi^{-1}(p+1) \leq p^*_{\max} + 1$.} For convenience, we define $\di{t}{1:0}$ to be an empty conditioning set. 

    Each conditional density is a linear Gaussian family, that is, 
    $$p_e(\di{t}{i} \mid \di{t}{1:i-1}) = \mathcal{N}( \di{t}{i} \mid \alpha_{ei}^\top \di{t}{1:i-1}+ \beta_{ei}, \sigma_{ei}^2), $$
    for 
    some parameters $\alpha_{ei} \in \mathbb{R}^{i-1}, \beta_{ei}, \sigma_{ei} \in \mathbb{R}$. For $i=1$ we set $p_e(\di{t}{1} \mid \di{t}{1:0}) = \mathcal{N}( \di{t}{1} \mid \beta_{e1}, \sigma_{e1}^2)$ with parameters $\beta_{e1}, \sigma_{e1} \in \mathbb{R}$.  (For convenience, $\di{t}{1:0}$ denotes an empty vector.)

    We next specify the joint  distributions corresponding to different environments. 

\item \textbf{Observational environment ($e=1$).} For $i = 1,\cdots, p+1$:  
\begin{itemize}
    \item  sample the intercept parameter $\beta_{1i}$ from $\mathcal{N}(0, 1)$,
    
    \item  sample the variance parameter $\sigma_{ei}^2$ from $\uniform{[\sigma^2_{\min}, \sigma^2_{\max}]}$, where $\sigma^2_{\min}$ and $\sigma^2_{\max}$ are lower and upper bounds on the variance parameter, 
    \item sample each dimension of the linear coefficient  parameter $\alpha_{ei}$ independently: 
    \begin{itemize}
        \item  if $\pi(i) = p+1$, i.e. $\di{t}{i} = y$, first sample its absolute value from $\uniform{[lb, ub]}$ and then assign a random sign;  
        \item  if $\pi(i) \neq p+1$ i.e. $\di{t}{i} = \di{x}{\pi(i)}$, (i) with probability $p_{\text{act}} \in [0,1]$ first sample its absolute value from $\uniform{[lb, ub]}$ and then assign a random sign, (ii) otherwise set to 0. 
    \end{itemize}
\end{itemize}

\item \textbf{Interventional Environments ($e=2,\cdots, E$)}  For each interventional environment $e>1$, we first randomly draw a fraction $\rho^{\text{int}}_e$ of \textit{features} $x'_e \subset x$ to be intervened on -- referred to as  the \textit{intervention strength} in the main text. Then, for $i=1,\cdots, p+1$, if $\di{t}{i} \not \in x'_e$, we set $p_e (\di{t}{i} \mid \di{t}{1:i-1}) = p_1(\di{t}{i} \mid \di{t}{1:i-1})$; otherwise, we draw new parameters for $p_e (\di{t}{i} \mid \di{t}{1:i-1})$ as follows 
    \begin{itemize}
        \item sample the intercept parameter $\beta_{ei}$ by first sampling its absolute value from $\mathcal{N}(m_e, 1)$ and assigning to it a random sign, for some $m_e \in \mathbb{R}$  specific to the environment $e$. 
        \item sample the variance parameter $\sigma_{ei}^2$ by first sampling 
        a multiplicative factor $\lambda \sim \textrm{Uniform}([\lambda_{e, \min}, \lambda_{e, \min} + \lambda_{e,\textrm{diff}}])$, and then setting $\sigma_{ei}^2 = \lambda^2 \sigma_{1i}^2 $,   
        where $\lambda_{e, \min}$,  $\lambda_{e,\textrm{diff}}$ are parameters specific to the environment $e$. 
         \item sample each dimension of the linear coefficient  parameter $\alpha_{ei}$ independently: 
    \begin{itemize}
        \item with probability $p_{\text{change}}$, copy the corresponding coefficient  from $e=1$ case
        \item with probability $1-p_{\text{change}}$, (i) with probability $p_{\text{act}}$ sample its absolute value from $\uniform{[lb_e, ub_e]}$, assign a random sign and (ii) otherwise set to 0.  
    \end{itemize}
    \end{itemize}

\item \textbf{Draw data.} Finally, we can draw $n$ data points $\{x_{ei}, y_{ei}\}_{i=1}^n$ from each environment $e$ independently, for $e=1,\cdots, E$. 
\end{enumerate}

\subsection{Empirical verification of theory}\label{app:subsec:synthetic-theory}

We include simulation details for the experiments in \Cref{subsec:synthetic-theory}. We follow the general data generative process described in the preamble of \Cref{app:sec:synthetic-data} with the following configuration: 

We set $p=3$, $p^*_{\max} =3$,  $p^*_{\min} = 1$, and $p_{\text{act}}=1$. 

For $e=1$, we set $lb=0.5$, $ub=2$; $p_{\text{act}}=1.0$, $\sigma_{\min}=0.1$ and $\sigma_{\max} = 0.2$. 
 
 For $e=2,\cdots, E$, we sample $m_e \sim \textrm{Uniform}([0,1])$,  $\lambda_{e,\min} \sim   \textrm{Uniform}([0.1, 0.2])$, and $\lambda_{e, \textrm{diff}} \sim     \textrm{Uniform}([0.1, 0.5])$, and we set $lb_e= (m_e+0.01)^2$, $ub_e=(m_e + 0.5)^2$, and $p_\text{change} = 1$.

\subsection{Identifiability under limited environments}\label{app:subsec:synthetic-uq}

We present three synthetic examples illustrating how the posterior behaves when multiple invariant feature selectors exist under a limited number of environments, supporting the discussion in \Cref{subsec:theory-discussion}. All experiments are simulated with $n=200$ per-environment samples and use $p=2$ features $x = [\di{x}{1}, \di{x}{2}]$.

\parhead{Example 1.}
The joint distributions $\{p_e(x,y)\}_{e=1}^3$ factorize as follows
\begin{align*}
    p_1(x,y) &= p_1(\di{x}{1}) p(y \mid \di{x}{1}) p_1(\di{x}{2} \mid y) \\
    p_2(x,y) &= p_2(\di{x}{1}) p(y \mid \di{x}{1}) p_2(\di{x}{2} \mid y) \\
    p_3(x,y) &= p_3(\di{x}{1}) p(y \mid \di{x}{1}) p_3(\di{x}{2} \mid \di{x}{1}, y)
\end{align*}
where
\begin{align*}
    &p_1(\di{x}{1}) = \mathcal{N}(\di{x}{1} \mid 0, 0.1^2), \, p_2(\di{x}{1}) =  \mathcal{N}(\di{x}{1} \mid 2, 0.1^2),\, p_3(\di{x}{1}) = \mathcal{N}( \di{x}{1} \mid 5, 0.1^2), \\
    &p(y \mid \di{x}{1}) = \mathcal{N}(y \mid \di{x}{1} + 0.5, 0.1^2),  \\
    & p_1(\di{x}{2} \mid y) = p_2(\di{x}{2} \mid y) = \mathcal{N}(\di{x}{2} \mid y + 0.1, 0.1^2), \, \\
    &p_3( \di{x}{2} \mid \di{x}{1}, y) = \mathcal{N}(\di{x}{2} \mid \di{x}{1} + y + 0.1, 0.1^2).
\end{align*}

Considering only the first two environments ($p_1(x,y)$, $p_2(x,y)$), both $\di{x}{1}$ and $\di{x}{1:2}$ are invariant features. As a result, the posterior places mass on both $z = [1, 0]$ and $z = [1, 1]$, as shown in \Cref{fig:synthetic-uq} (left). When the third environment ($p_3(x,y)$) is introduced, only $\di{x}{1}$ remains invariant, and the posterior consequently concentrates on $z = [1, 0]$, as shown in \Cref{fig:synthetic-uq} (right).

\begin{figure}[!t]
    \centering
    \includegraphics[width=0.75\linewidth]{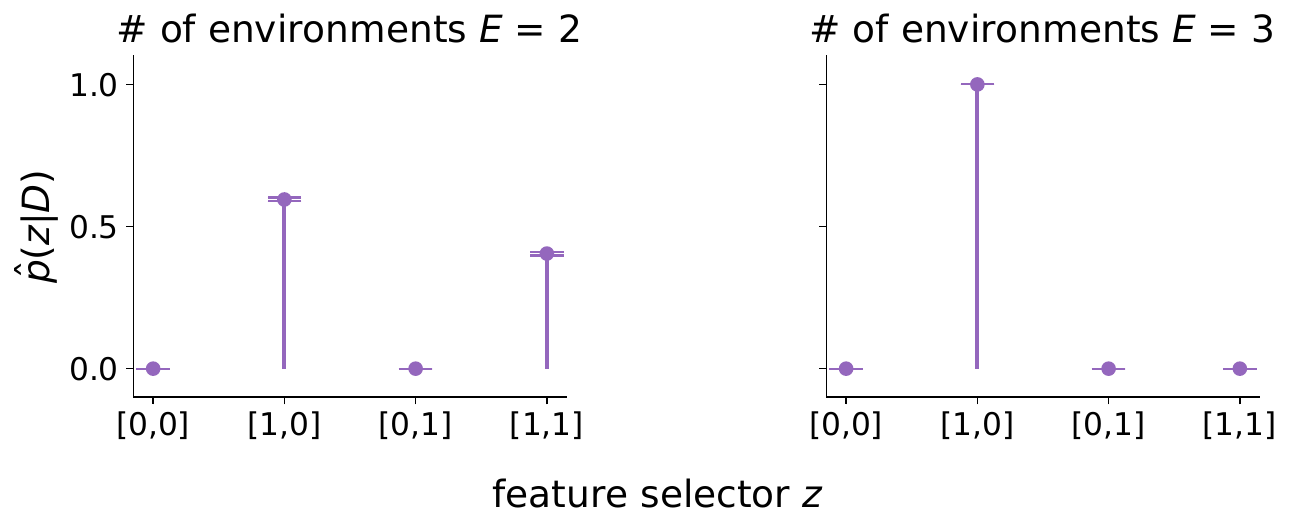}
    \caption{Example 1: posterior behavior under limited environments with $p=2$ features. With $E=2$ environments (left), the posterior $\hat p(z \mid \calD)$ is multi-modal, centered at the two invariant solutions $z=[1,0]$ and $z=[1,1]$. When a third environment is introduced ($E=3$, right), $z=[1,0]$ becomes the only invariant feature selector, and the posterior concentrates accordingly. Averaged over 1{,}000 simulations; 95\% confidence bands.}
    \label{fig:synthetic-uq}
\end{figure}

We include two further examples in which multiple invariant solutions exist under a limited number of environments. In both cases, $E=2$ admits two distinct invariant feature selectors within the first two environments; after introducing a third environment, only one remains identifiable. The posterior tracks this transition in each case.

\parhead{Example 2.} The joint distributions $\{p_e(x,y)\}_{e=1}^3$ are given by 
\begin{align*}
    p_1(x,y)  &= p_1(\di{x}{1}) p_1(\di{x}{2}) p ( y \mid \di{x}{1}, \di{x}{2}) \\ 
    p_2(x,y) & = p_2(\di{x}{1}) p_2(\di{x}{2}) p( y \mid \di{x}{1}, \di{x}{2}) \\
    p_3(x,y) & = p_3(\di{x}{1}) p_3(\di{x}{2}) p( y \mid \di{x}{1}, \di{x}{2})  
\end{align*}
where
\begin{align*}
    &p_1(\di{x}{1}) = \mathcal{N}(\di{x}{1} \mid 0, 0.1^2),\,  p_2(\di{x}{1}) = \mathcal{N}(\di{x}{1} \mid 2, 0.1^2), \, p_3(\di{x}{1}) = \mathcal{N}(\di{x}{1} \mid 5, 0.1^2) \\
    &p_1(\di{x}{2}) = p_2(\di{x}{2}) = \mathcal{N}(\di{x}{2} \mid 0.5, 0.1^2), \quad p_3(\di{x}{2}) = \mathcal{N}(\di{x}{2} \mid -0.5, 0.1^2) \\ 
    & p ( y \mid \di{x}{1}, \di{x}{2}) = \mathcal{N}(y \mid \di{x}{1} + \di{x}{2} + 0.1, 0.1^2).
\end{align*}

$\di{x}{1:2}$ is the only set of invariant features across $\calE = \{1,2,3\}$ with $z^* = [1,1]$  the invariant feature selector. 
But within the first two environments $\calE'=\{1,2\}$, $z=[1,0]$ is also an invariant feature selector: By  \Cref{def:pooled-conditional},  the pooled conditional within $\calE'$ is 
\begin{align*}
    g_{\calE'}  (y \mid \di{x}{1}) &:=  \frac{\sum_{e \in \calE'}\int p(e) p_e(x,y) \diff \di{x}{2}}{\sum_{e\in \calE'} p(e) p_e(\di{x}{1}) }\\  
    &= \frac{\int \frac{1}{2} p_1(\di{x}{1}) p_1(\di{x}{2}) p( y \mid \di{x}{1}, \di{x}{2}) + \frac{1}{2} p_2(\di{x}{1})p_2(\di{x}{2}) p( y \mid \di{x}{1}, \di{x}{2}) \diff \di{x}{2} }{ \frac{1}{2} p_1(\di{x}{1}) + \frac{1}{2} p_1 (\di{x}{2})} \\
    &= \int \frac{1}{2} p_1(\di{x}{2}) p( y \mid \di{x}{1}, \di{x}{2}) + \frac{1}{2} p_2(\di{x}{2}) p( y \mid \di{x}{1}, \di{x}{2}) \diff \di{x}{2}  \\
    &= p_1( y \mid\di{x}{1}) = p_2( y \mid \di{x}{1})., 
\end{align*}
where the last equality follows from $p_1(\di{x}{2}) = p_2(\di{x}{2})$.   This result shows that  $p_e(y \mid \di{x}{1})$  is invariant within environments  $\calE'=\{1,2\}$. However, one can show that it is not the case when we extend to $\calE = \{1,2,3\}$, as $p_3(y \mid \di{x}{1}) \neq p_1 (y \mid \di{x}{1})$. 
The corresponding posterior distributions for the two scenarios are shown in \Cref{fig:synthetic-uq-example2}. 

\begin{figure}[!t]
    \centering
    \includegraphics[width=0.75\linewidth]{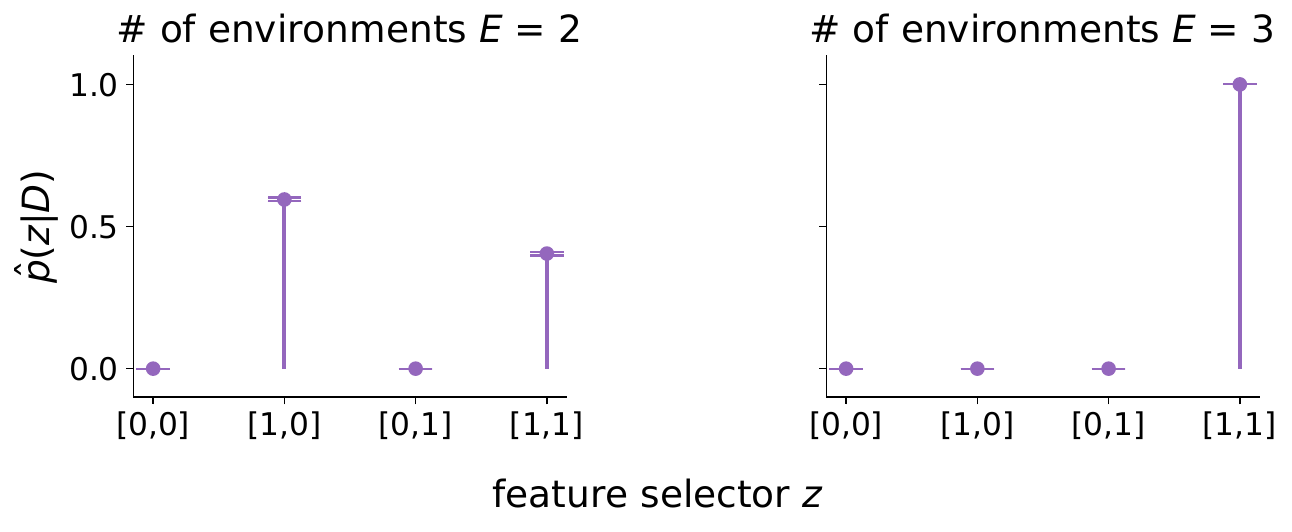}
    \caption{Example 2: posterior behavior under limited environments ($p=2$). With $E=2$ (left), the posterior $\hat p(z \mid \mathcal{D})$ is multi-modal over the two invariant selectors $z = [1,0]$ and $[1,1]$. For $E=3$ (right), only $z = [1,1]$ remains invariant, and the posterior concentrates there. Averaged over 1{,}000 simulations; 95\% confidence bands.}
    \label{fig:synthetic-uq-example2}
\end{figure}

\begin{figure}[!t]
    \centering
    \includegraphics[width=0.75\linewidth]{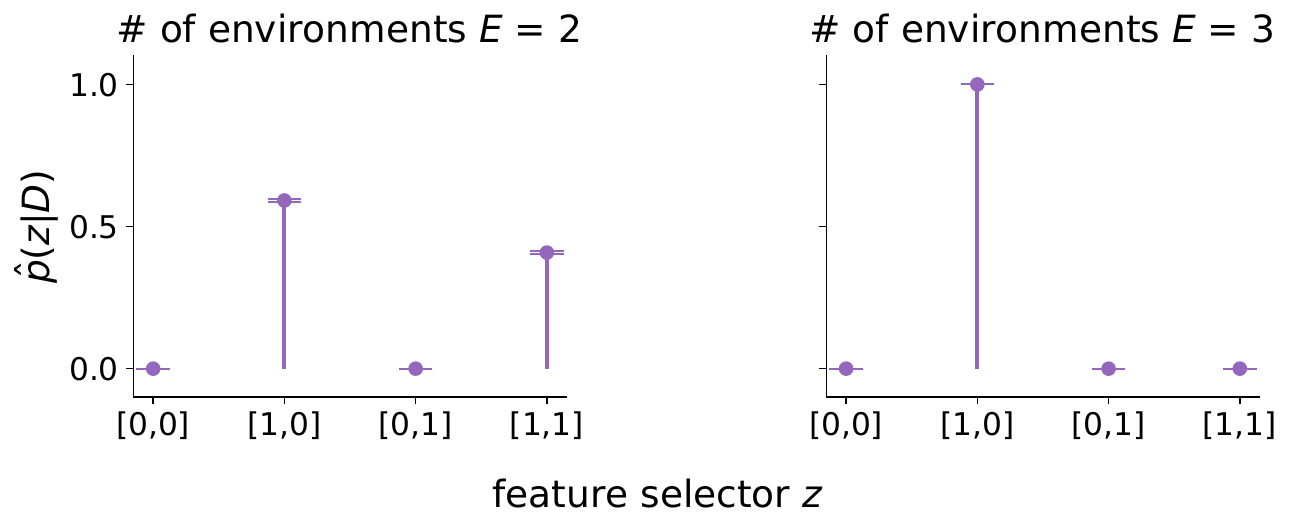}
    \caption{Example 3: posterior behavior under limited environments with $p=2$ features. For $E=2$ (left), the posterior $\hat p(z \mid \mathcal{D})$ is multi-modal over the two invariant selectors $[1,0]$ and $[1,1]$. With $E=3$ (right), only $[1,0]$ remains invariant, and the posterior concentrates at this value. Averaged over 1{,}000 simulations; 95\% confidence bands.}
    \label{fig:synthetic-uq-example3}
\end{figure}

\parhead{Example 3.} The joint distributions $\{p_e(x,y)\}_{e=1}^3$ are given by  
\begin{align*}
    p_1(x, y) &= p_1(\di{x}{1}) p(y \mid \di{x}{1}) p_1(\di{x}{2} \mid \di{x}{1}) \\
    p_2(x, y) &= p_2(\di{x}{1}) p(y \mid \di{x}{1}) p_2(\di{x}{2} \mid \di{x}{1}) \\
    p_3(x, y) &= p_3(\di{x}{1}) p(y \mid \di{x}{1}) p_3(\di{x}{2} \mid \di{x}{1}, y)
\end{align*}
where 
\begin{align*}
   & p_1(\di{x}{1}) = \mathcal{N}(\di{x}{1} \mid 0, 0.1^2), \, p_2(\di{x}{1}) = \mathcal{N}(\di{x}{1} \mid 2, 0.1^2),\, p_3(\di{x}{1}) = \mathcal{N}(\di{x}{1} \mid 5, 0.1^2) \\
   & p( y \mid \di{x}{1}) = \mathcal{N}(\di{x}{1} \mid \di{x}{1} + 0.5, 0.1^2) \\
   & p_1(\di{x}{2} \mid \di{x}{1}) = \mathcal{N}(\di{x}{2} \mid \di{x}{1} +0.1, 0.1^2), \, p_2(\di{x}{2}\mid \di{x}{1} ) = \mathcal{N}(\di{x}{2} \mid -2\di{x}{1} + 0.1, 0.1^2), \\
   &p_3(\di{x}{2} \mid \di{x}{1}, y) =  \mathcal{N}(\di{x}{2} \mid \di{x}{1} + y, 0.1^2). 
\end{align*}
The only set of invariant feature(s) across $\calE=\{1,2,3\}$ is $\di{x}{1}$, and the corresponding  invariant feature selector is $z^* = [1,0]$. 

In the first two environments $\calE'=\{1,2\}$, $\di{x}{1}, \di{x}{2}$ are also invariant features: 
\begin{align*}
  p_1(y \mid \di{x}{1}, \di{x}{2}) = p( y \mid \di{x}{1}) \\
  p_2(y \mid \di{x}{1}, \di{x}{2}) = p( y \mid \di{x}{1}) 
\end{align*}
since $\di{x}{2} \perp y \mid \di{x}{1}$ under both $p_1(x,y)$ and $p_2(x,y)$. The corresponding posterior distributions for the two scenarios are shown in \Cref{fig:synthetic-uq-example3}.

\subsection{Comparison to other methods}\label{app:subsec:synthetic-comparison}

For the experiments in \Cref{subsec:synthetic-comparison}, we include details on data simulation, method implementation, and additional results in \Cref{app:subsubsec:synthetic-comparison-simulation,app:subsubsec:synthetic-method-implementation,app:subsubsec:synthetic-comparison-additional-results}, respectively.

\subsubsection{Simulation details}\label{app:subsubsec:synthetic-comparison-simulation}
We follow the general data generative process described in the preamble of \Cref{app:sec:synthetic-data} with the following configuration (The reason to differentiate between various $p$ is to control the scale of parameter values to prevent the sampled $x,y$ values from exploding when $p$ increases.): 
\begin{enumerate}
    \item 
When $p=10$, 
\begin{enumerate}
    \item $p_{\text{act}} \sim  \uniform{\{0.6,0.7,0.8,0.9\}}$ 
    \item For $e=1$: (i) coefficient lower bound $lb=1$; (ii) coefficient upper bound $ub=2.1$, (iii) noise level lower bound $\sigma_{\min} = 0.1$; (iv) noise level upper bound $\sigma_{\max}=0.2$. 
    \item Separately for $e=2,\cdots, E$: for the intervened variable (i) mean of the absolute value of the intercept  $m_e\sim \uniform{[0,1]}$; (ii) coefficient lower bound $lb_e = (m_e+0.01)*2$; (iii) coefficient upper bound $ub_e = (m_e+0.5)*2$; (iv) variance multiplicative factor lower bound $\lambda_{e,\min} \sim \uniform{[0.1, 0.2]}$; (v) variance multiplicative factor relative range $\lambda_{e,\textrm{diff}} \sim  \uniform{[0.1, 0.5]}$; (vi)the probability of changing the coefficient $p_{\text{change}} \sim \uniform{[0.1,0.3]}$; and (vii) the fraction of features to be intervened on $\rho^{\text{int}}_e \sim \uniform{[0.5,1]}$
\end{enumerate}
\item 
When $p=450$,
\begin{enumerate}
    \item $p_{\text{act}} \sim  \uniform{\{0.1,0.15,0.2,0.25\}}$ 
    \item For $e=1$: same as $p=10$ case 
    \item Separately for $e=2,\cdots, E$: for the intervened variable (i) mean of the absolute value of the intercept  $m_e\sim \uniform{[0,0.4]}$; (ii) coefficient lower bound $lb_e= m_e+0.01$; (iii) coefficient upper bound $ub_e = (m_e+0.01)*1.5$; other simulation hyperparameters are chosen in the same way as $p=10$ case
\end{enumerate}
\end{enumerate}
We set the maximum number of the true invariant features $p^*_{\max} = 5$ when $p=10$, and  $p^*_{\max}=10$ otherwise. For all cases of $p$ we set the minimum number of the true invariant features $p_{\min}= 1$. 

\subsubsection{Method implementation details}
\label{app:subsubsec:synthetic-method-implementation}

\begin{enumerate}
    \item EILLS: We follow the implementation  at \url{https://github.com/wmyw96/EILLS} which accompanies the original paper by \citet{fan2023environment}. 

    \item ICP: We use the implementation at \url{https://github.com/juangamella/icp}. 
    
    \item Hidden-ICP: We implement a PyTorch version of the original R package available at \url{https://cran.r-project.org/package=InvariantCausalPrediction}, which accompanies the original paper by \citet{rothenhausler2019causal}. 

    \item BIP-VI: 
    The prior is uniform over $\{z \in \{0,1\}^p: \|z\|_0 \leq p_{\max}\}$. In low dimensions $(p=10)$, $p_{\max}=10$ leading to an uninformative prior; In high dimensions ($p=450$), $p_{\max}=20$ encouraging sparsity. 

    We run BIP-VI with two initialization strategies for the variational parameters and choose the one with better training-set ELBO. Consider each variational parameter $\phi_i$ initialized as $\log \frac{\sigma_{0i}}{1 - \sigma_{0i}}$ for $i 
    = 1, \dots, p$. The two initializations are the following: 
    (i) uninformative initialization: we set $\sigma_{0i} = \min(p_{\max} / p \times 0.9,\ 0.4)$ for all $i$ to encourage sparsity without strong prior belief; and 
    (ii) informative initialization: based on the screening step described in the main text, we assign $\sigma_{0i} = 0.5$ for the 10 screened indices and $\sigma_{0i} = 0.1$ for the remaining ones.

     As discussed in \Cref{app:subsec:vi-implementation}, when the sampled $z$ is outside of prior support, we set the corresponding objective value in \Cref{eq:vi-stochastic-objective} to $-1$. We always compute analytical gradients of KL, and stochastic gradients of the likelihood component in the ELBO.   
    
    In \Cref{alg:viml}, we use \( M = 20 \) gradient samples per iteration and optimize using SGD. For learning rate scheduling, we employ a cyclical learning rate strategy with a basic triangular cycle without amplitude scaling. Specifically, we set a base learning rate of 1.0, a maximum learning rate of 10, and a cycle length of 1000 iterations, with 500 steps in the increasing half. This scheduler can be implemented using the \texttt{PyTorch} package \citep{paszke2019pytorch}:
    \begin{quote}
    \texttt{torch.optim.lr\_scheduler.CyclicLR(optimizer, base\_lr=0.5, max\_lr=10, step\_size\_up=500, mode=`triangular')}
    \end{quote}

We run BIP-VI for a maximum of 2{,}000 iterations and select the variational parameters corresponding to the highest training-set ELBO.
\end{enumerate}

\subsubsection{Additional results}\label{app:subsubsec:synthetic-comparison-additional-results}

\begin{figure}[!t]
    \centering
    \includegraphics[scale=0.6]{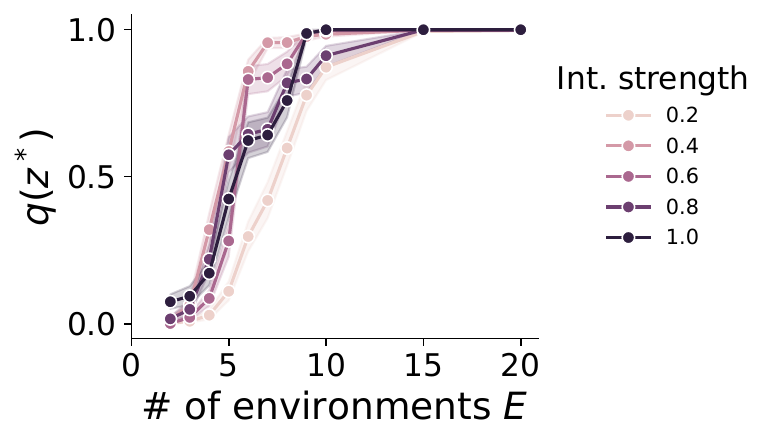}
    \caption{Synthetic study with $p=10$ features. Variational posterior at $z^*$ v.s. number of environments $E$ under different fraction of intervention features. Similar to \Cref{fig:synthetic-sweep-p10} (B), we observe that the variational posterior also concentrates at $z^*$ with increasing number of environments, and the convergence is faster when the fraction of intervention features is higher. The convergence of the variational posterior value at $z^*$ is slower than that under the exact posterior shown in \Cref{fig:synthetic-sweep-p10} (B).}
    \label{app:subfig:p10-posterior-vi}
\end{figure}

\Cref{app:subfig:p10-posterior-vi} displays the variational posterior probability at the true $z^*$ by BIP-VI  for the $p=10$ setting. Similar to the exact posterior results in \Cref{fig:synthetic-sweep-p10} (B), the variational posterior concentrate around $z^*$ as the number of environments $E$ increases, with faster convergence at under stronger interventions (which induce greated heterogeneity across environments).  However, the convergence of the variational posterior probability at $z^*$ is slower than  that of the exact posterior shown in \Cref{fig:synthetic-sweep-p10} (B).

\section{Gene perturbation study}\label{app:sec:gene-study}
We include additional details for the experiments in \Cref{sec:gene-study}.

\parhead{Evaluation details.}
We evaluate each method based on whether its predicted invariant feature gene has a significant effect on the target gene. Specifically, we say the deletion of gene $j$ to have a significant effect on the target gene $i$, when (i) the deletion is successful, i.e. the expression level of gene $j$ is above the 99\% quantile or below the 1\% quantile of its observational level, and (ii) the expression level of the target gene $i$ changes significantly, which is above the 99\% quantile or below the 1\% quantile of its observational level.

\parhead{Hyperparameter settings in \Cref{fig:gene-results2}.}
In \Cref{fig:gene-results2}, each invariant inference method -- EILLS-screen, ICP-screen, Hidden-ICP-screen, BIP-screen, and BIP-VI -- is evaluated across multiple hyperparameter settings, represented by dots of varying sizes. Larger dots indicate more \textit{conservative} settings that predict fewer invariant feature genes.

\begin{itemize}
    \item EILLS-screen: Higher invariance regularization parameter $\gamma$ corresponds to more conservative behavior. We vary $\gamma \in \{1, 25, 50, 75, 100\}$.
    \item ICP-screen: Lower significance level $\alpha$ results in more conservative predictions. We vary $\alpha \in \{0.001, 0.005, 0.01, 0.05, 0.1\}$.
    \item Hidden-ICP: In contrast, higher $\alpha$ leads to more conservative behavior. We vary $\alpha \in \{0.001, 0.005, 0.01, 0.05, 0.1\}$.
    \item BIP-screen and BIP-VI: Higher posterior thresholds $t$ lead to more conservative results. We vary $t\in \{0.5, 0.6, 0.7, 0.8, 0.9\}$.
\end{itemize}

\parhead{Implementation details.}
The implementations of EILLS, ICP, and Hidden-ICP follow the same setup as described in  \Cref{app:subsubsec:synthetic-method-implementation}.

For BIP-VI, we initialize the variational parameters $\phi_i$ to  $\log \frac{\sigma_0}{1-\sigma_0}$ for $i=1,\cdots, p$, where the initial feature selection probability $\sigma_0:=0.02$. We choose the prior to be uniform over $\{z \in \{0,1\}^p: \|z\|_0 \leq p_{\max}\}$ with $p_{\max}=200$. These choices are determined by coarsely searching over reasonable values guided by the optimization dynamics. A good starting point is to ensure that $p_{\max}$ is smaller than the per-environment sample size $n$ so that the MLEs for local conditional models are well-defined. Additionally, it is desirable that $p \sigma_0 < p_{max}$ so that the initial expected selected features is within the prior support.

As discussed in \Cref{app:subsec:vi-implementation}, when the sampled $z$ is outside of prior support, we set the corresponding objective value in \Cref{eq:vi-stochastic-objective} to $-1$. With probability of $0.5$ we compute the analytical gradients of the KL regularization term in the ELBO; otherwise we compute the stochastic gradients.

In \Cref{alg:viml}, we use \( M = 10 \) gradient samples per iteration and optimize using SGD. For learning rate scheduling, we employ a cyclical learning rate strategy with a basic triangular cycle that scales initial amplitude by half each cycle. Specifically, we set a base learning rate of 0.5, a maximum learning rate of 10, and a cycle length of 1000 iterations, with 500 steps in the increasing half. This scheduler can be implemented using the \texttt{PyTorch} package:
    \begin{quote}
    \texttt{torch.optim.lr\_scheduler.CyclicLR(optimizer, base\_lr=0.5, max\_lr=10, step\_size\_up=500, mode=`triangular2')}
    \end{quote}

We run BIP-VI for a maximum of 10{,}000 iterations and select the variational parameters corresponding to the highest training-set ELBO.


\section{Computational cost}\label{app:sec:compute}
We profile the wall-clock cost of BIP-VI, BIP exact inference (potentially with feature prescreening), and all baselines on both the synthetic
benchmark and the gene-perturbation experiment.

\parhead{Setup.}
All methods are timed on the same hardware (CPU, $8$ threads, $32$\,GB RAM). We time the model fit only,
excluding data preprocessing and result summarization. For methods that require feature
pre-screening (\Cref{subsec:comparison-methods}), the reported time includes the pre-screen, so
each cost reflects the full pipeline; the pre-screen is a single LassoCV computation shared
across the screened baselines (ICP, BIP-exact, EILLS, and Hidden-ICP).

BIP-VI and the marginal-regression baseline use no pre-screen and operate on the full
feature pool. The BIP-VI prior instead caps the candidate invariant set at a maximum size
$\|z\|_0 \le p_{\max}$. This cap is often motivated by a statistical constraint: with limited
per-environment sample size, the conditional least-squares systems become underdetermined once
the active set approaches $n$, so the search must be restricted to sufficiently sparse subsets;
when available, domain knowledge about the plausible number of invariant predictors can also be
encoded through $p_{\max}$. Because BIP-VI's per-iteration cost grows only polynomially in
$p_{\max}$ (\Cref{subsec:vi}), this cap---rather than the full feature dimension $p$---is the main
driver of its computational cost. We set $p_{\max}=10$ for the low-dimensional synthetic
experiments ($p=10$, so the cap is inactive), $p_{\max}=20$ for the high-dimensional synthetic
setting ($p=450$), and $p_{\max}=200$ for the gene experiment ($p=6{,}169$).

The two experiments use different BIP-VI optimization budgets: $2{,}000$ iterations with $M=20$ for the synthetic runs, versus $10{,}000$
iterations with $M=10$ for the gene runs. They serve complementary purposes, and their absolute timings are not directly comparable: the
synthetic benchmark isolates how cost scales with $n$, $E$, and $p$ under a controlled setting,
whereas the gene experiment targets a real high-dimensional one.

\parhead{Synthetic benchmark.}
We vary $n$, $E$, and $p$ in turn around the anchor configuration $(p{=}10,\, n{=}200,\, E{=}5)$,
holding the other two fixed and sweeping the intervention strength at every point. \Cref{tab:compute-synthetic} reports the median
total wall-clock over $4$ intervention strengths $\times$ $3$ data-generating seeds
($12$ runs) per configuration.

\begin{table}[t]
\centering
\small
\begin{tabular}{l rrrr rrr r}
\toprule
& \multicolumn{4}{c}{$n$ \;($p{=}10,\,E{=}5$)} & \multicolumn{3}{c}{$E$ \;($p{=}10,\,n{=}200$)} & {$p$} \\
\cmidrule(lr){2-5}\cmidrule(lr){6-8}\cmidrule(lr){9-9}
Method & $50$ & $200$ & $500$ & $2000$ & $2$ & $10$ & $20$ & $450$ \\
\midrule
BIP-VI & $8.54$ & $8.77$ & $9.56$ & $12.02$ & $8.45$ & $8.86$ & $9.38$ & $11.43$ \\
ICP         & $6.28$  & $6.33$  & $6.39$  & $6.68$  & $2.60$  & $12.62$ & $25.52$ & $6.35$  \\
BIP-exact   & $0.27$  & $0.30$  & $0.36$  & $0.65$  & $0.16$  & $0.53$  & $0.99$  & $0.34$  \\
EILLS       & $0.12$  & $0.12$  & $0.12$  & $0.12$  & $0.06$  & $0.21$  & $0.38$  & $0.15$  \\
Hidden-ICP  & $0.01$  & $0.04$  & $0.09$  & $0.36$  & $0.004$ & $0.15$  & $0.57$  & $0.08$  \\
\bottomrule
\end{tabular}
\caption{Median wall-clock (seconds) on the synthetic benchmark, varying $n$, $E$, and $p$ in
turn around the anchor $(p{=}10,\,n{=}200,\,E{=}5)$; each cell is the median over $12$ runs
($4$ intervention strengths $\times$ $3$ seeds), with small spread throughout (interquartile
range $\le 0.95$\,s for BIP-VI, $\le 0.42$\,s for ICP, and $\le 0.02$\,s for the others). At
$p{=}450$ every method except BIP-VI operates on $10$ LassoCV-screened features, with the
screening time ($\approx 0.04$\,s) included in the reported total. BIP-VI scales only mildly
with all three factors, whereas ICP grows sharply in $E$; the screened baselines run on $10$
features at both $p{=}10$ and $p{=}450$ and so are essentially flat in $p$. The oracle and
plain-regression references ($\approx 10^{-3}$\,s) are omitted.}
\label{tab:compute-synthetic}
\end{table}

From \Cref{tab:compute-synthetic}, BIP-VI's runtime grows roughly linearly in $n$ and only
weakly in $E$ and $p$, consistent with its $O(T \cdot M \cdot c(\calD, p_{\max}))$ complexity.
Among the baselines, ICP is the most sensitive to $E$, rising about tenfold from $E{=}2$ to
$E{=}20$, whereas BIP-exact, EILLS, and Hidden-ICP remain at or below roughly one second across
all configurations. ICP's sharp growth in $E$ reflects its testing procedure: in the original implementation, which
we adopt unchanged, it loops over environments to run an accept/reject invariance test for each
candidate subset, whereas BIP-VI vectorizes the per-environment conditional fits and so depends
only weakly on $E$.

\parhead{Gene-perturbation experiment.}
We profile every method on the $10$ target genes of \Cref{tab:gene-study-consistency}. For each
target, we predict its expression from all other
genes---a single regression problem with $E=2$ environments (observational vs.\ pooled
interventional) and one cross-validation fold. \Cref{tab:compute-gene} reports the mean $\pm$
standard error over $30$ runs ($10$ target genes $\times$ $3$ seeds).

\begin{table}[t]
\centering
\small
\begin{tabular}{ll r}
\toprule
Method & Candidate feature sets & Wall-clock (s) \\
\midrule
BIP-VI      & full pool ($\le p_{\max}{=}200$) & $348.7 \pm 17.2$ \\
ICP         & top-$10$ screen                  & $11.0 \pm 1.2$   \\
BIP-exact   & top-$10$ screen                  & $9.2 \pm 1.2$    \\
EILLS       & top-$10$ screen                  & $9.0 \pm 1.2$    \\
Hidden-ICP  & top-$10$ screen                  & $8.7 \pm 1.2$    \\
Marginal    & full pool (marginal only)        & $0.13 \pm 0.01$  \\
\bottomrule
\end{tabular}
\caption{Wall-clock (seconds) on the yeast gene-perturbation experiment, as mean $\pm$ standard
error over $30$ runs ($10$ target genes from \Cref{tab:gene-study-consistency} $\times$ $3$
seeds; $E=2$, one cross-validation fold). BIP-VI runs on the full $6{,}169$-gene pool
($p_{\max}=200$, $10{,}000$ iterations, $M=10$); the screened baselines act on the top-$10$
LassoCV screen, whose shared $\approx 8.7$\,s cost---included in the totals---dominates their
runtime.}
\label{tab:compute-gene}
\end{table}

From \Cref{tab:compute-gene}, BIP-VI ($349$\,s per target gene) is the most expensive, while
the screened baselines cluster at $9$--$11$\,s---dominated by the shared $\approx 8.7$\,s
pre-screen rather than their own fits---and marginal regression is cheapest at $0.13$\,s. These
gaps track how large a candidate space each method searches: BIP-VI optimizes over all subsets
of up to $p_{\max}=200$ genes drawn from the full pool, the screened baselines enumerate only
within the $10$ pre-screened genes, and marginal regression scores each gene by its marginal
correlation.

\parhead{Summary.}
Taken together, BIP-VI's cost is governed by its optimization budget---the number of iterations
$T$ and Monte Carlo samples $M$---rather than by the size of the candidate-subset space. Each
gradient sample only fits and evaluates conditional models on a subset of size at most
$p_{\max} \le p$, at a cost $c(\calD, p_{\max})$ that is polynomial in $p_{\max}$ (a linear-Gaussian
least-squares fit; \Cref{subsec:vi}). The total cost $O(T \cdot M \cdot c(\calD, p_{\max}))$ is
therefore polynomial in $p_{\max}$, never growing with the number of candidate subsets.

This is the central distinction from the enumeration-based baselines (ICP, BIP-exact, EILLS),
which stay tractable only by pre-screening to a small pool. Enumerating the subsets BIP-VI
considers is infeasible: at $p_{\max}{=}200$ over the full gene pool there are
$\sum_{k \le 200} \binom{6169}{k} \approx 10^{382}$ subsets, which even at ICP's measured rate
($\approx 2.3$\,s for $2^{10}=1{,}024$ subsets, about a millisecond each) would take on the order
of $10^{371}$ years---and conservatively so, since larger subsets cost more. BIP-VI instead
recasts the search as a continuous optimization over a variational distribution on subsets,
navigating the space with gradient information at a fixed, predictable cost.

\end{document}

\typeout{get arXiv to do 4 passes: Label(s) may have changed. Rerun}